\documentclass{article}
\usepackage{iclr2027_conference,times}

\usepackage{amsmath,amsfonts,bm}

\def\eqref#1{equation~\ref{#1}}

\def\1{\bm{1}}

\DeclareMathAlphabet{\mathsfit}{\encodingdefault}{\sfdefault}{m}{sl}
\SetMathAlphabet{\mathsfit}{bold}{\encodingdefault}{\sfdefault}{bx}{n}

\usepackage{hyperref}
\usepackage{url}
\usepackage{booktabs}
\usepackage{multirow}
\usepackage{graphicx}
\usepackage{svg}
\usepackage{float}
\usepackage{amssymb}
\usepackage{amsthm}
\usepackage{enumitem}
\usepackage{amsthm}
\newtheorem{theorem}{Theorem}

\title{SINO: Scale-Invariant Neural Operator}

\author{
	Kaichen Ouyang$^{1,2}$ \\
	$^{1}$Westlake University \\
	$^{2}$University of Science and Technology of China \\
	\texttt{ouyangkaichen@westlake.edu.cn} \\
	\texttt{oykc@mail.ustc.edu.cn}
	\And
	Chenglei Yu$^{1}$ \\
	$^{1}$Westlake University \\
	\texttt{yuchenglei@westlake.edu.cn}
	\AND
	Chuanrui Wang$^{1}$ \\
	$^{1}$Westlake University \\
	\texttt{wangchuanrui@westlake.edu.cn}
	\And
	Tailin Wu$^{1}$\thanks{Corresponding author.} \\
	$^{1}$Westlake University \\
	\texttt{wutailin@westlake.edu.cn}
}

\iclrfinalcopy 

\begin{document}

\maketitle

\begin{abstract}
	In scientific machine learning, physical fields governed by partial differential equations exhibit low-rank structure and scale invariance. When solving equations on coarse grids, missing information leads to the closure problem—modeling unresolved physics to recover lost dynamics. Although closure terms depend on grid resolution, they represent scale-invariant physical laws. A model truly learning physics should capture these mechanisms with low-rank parameterization rather than memorizing grid-specific patterns. Inspired by this, we propose the Scale-Invariant Neural Operator (SINO), which learns on normalized physical scales via a dual-branch architecture operating in spectral and spatial domains. SINO uses bottleneck MLPs to generate continuous convolution kernels, embedding explicit low-rank inductive bias that concentrates $>$95\% variance in 2-3 modes, as validated by PCA across benchmarks, while drastically reducing parameters. This principled design yields 38$\times$ steeper scaling law exponents than FNO, demonstrating superior parameter efficiency. We compare SINO with traditional models (U-Net, DeepONet), Transformer models (Transolver, Oformer, GK-Transformer), and frequency-domain models (FNO, AMFNO, UFNO) on closure problems spanning externally forced Burgers turbulence, decaying Burgers turbulence, KS turbulence, Kolmogorov-forced NS turbulence, and decaying NS turbulence. Experiments show SINO achieves 1.5--38$\times$ error reduction and 2--23$\times$ parameter efficiency over baselines, with superior scaling laws reflecting exceptional data efficiency from principled low-rank design. Code is available at \url{https://github.com/AI4Science-WestlakeU/SINO}.
\end{abstract}

\section{Introduction}
Scientific machine learning aims to understand and simulate complex systems governed by physical laws using data-driven methods~\citep{karniadakis2021physics}. Unlike natural images, physical fields such as velocity, pressure, and vorticity evolve under partial differential equations and conservation laws. Although discretized fields may possess extremely high dimensionality, their dynamics are organized by a small number of mechanisms: conservation, transport, dissipation, and cross-scale energy transfer~\citep{holmes2012turbulence,berkooz1993proper,lumley1967structure}. This reveals that physical laws possess an effective \textbf{low-rank structure}, where a few degrees of freedom govern the overall dynamics~\citep{thibeault2024low}.

This low-rank nature imposes two requirements on model design: (1) capturing governing mechanisms with compact parameterization; (2) learning \textbf{resolution-independent} physical laws, where changing grid resolution alters the discrete representation but not the underlying equations. The same continuous process, when observed at different resolutions, is still governed by the same mechanisms.

The \textbf{closure problem} provides an ideal testbed for these principles~\citep{sanderse2024scientific}. High-fidelity direct numerical simulation (DNS) resolves small-scale structures but is computationally expensive; practical simulations use coarser grids. The coarse-grid state $\bar{u}_{\Delta} = \mathcal{P}_{\Delta}u$ requires a \textbf{closure term} $\tau$~\citep{leonard1975energy} to compensate for unresolved scales. The key insight is: although $\tau$ depends on the coarse-graining scale $\Delta$, the underlying physical mechanisms are governed by \textbf{scale-invariant laws}~\citep{meneveau2000scale,kolmogorov1991local}. These processes retain the same mathematical form across resolutions, differing only in operational scales. Therefore, models that truly learn physics should capture the low-rank, scale-invariant physical laws rather than memorizing resolution-specific patterns. 

Learning-based closure methods fall into two categories: \textbf{learned correction}~\citep{um2020solver,list2022learned} directly modifies coarse-grid states or evolution results, while \textbf{learned interpolation}~\citep{bar2019learning,kochkov2021machine} learns missing discretization schemes. Recent methods like the Indirect Neural Corrector (INC)~\citep{wei2026inc}, a learned correction approach, incorporate neural predictions as right-hand side terms in governing equations, improving long-term stability. However, existing methods focus on \emph{how} to couple predictions into solvers, not \emph{what} neural architecture should learn closure. The key challenge is: \textbf{how to represent closure operators that are simultaneously low-rank, scale-invariant, and capable of balancing global spectral structure with local spatial features?} Standard approaches fail this requirement: CNNs~\citep{krizhevsky2012imagenet} use resolution-dependent discrete kernels with $O(K^2 D^2)$ parameters per layer; FNO~\citep{li2020fourier} employs mode-specific Fourier weights with $O(k_{\max} D^2)$ parameters that scale with truncation; Transformers rely on high-dimensional attention without explicit low-rank constraints.

To address this challenge, we propose the \textbf{Scale-Invariant Neural Operator (SINO)} for learning closure terms that capture cross-resolution, scale-invariant physical laws. SINO employs a dual-branch architecture: a frequency branch captures global spectral energy transfer, while a spatial branch handles localized nonlinear structures. Both branches generate continuous convolution kernels via hypernetworks with bottleneck layers—taking normalized physical coordinates as input rather than discrete grid indices. This design enforces three principles: (1) \textbf{Scale invariance}: normalized coordinates ensure identical representations across resolutions; (2) \textbf{Low-rank inductive bias}: bottleneck layers with dimensions $n_f, n_s \ll D^2$ force kernels into low-dimensional subspaces, reducing parameters from $O(k_{\max} D^2)$ (FNO) or $O(K^2 D^2)$ (CNN) to $O((n_f + n_s) D^2)$ independent of resolution; (3) \textbf{Global-local balance}: dual branches synergistically combine frequency-domain long-range dependencies with spatial-domain localized patterns.

We validate SINO on six turbulent closure benchmarks spanning forced and decaying Burgers, Kuramoto-Sivashinsky, and Navier-Stokes equations at Reynolds numbers 1000-4000, comparing against eight baselines. SINO achieves 1.5-38$\times$ error reduction with 2-23$\times$ fewer parameters than FNO-based methods. Theoretical analysis proves SINO's bottleneck architecture enforces hard rank constraints (Theorem~\ref{thm:explicit_rank_constraint}) and Lipschitz continuity (Theorem~\ref{thm:lipschitz_hypernetworks}), with operator norms growing sublinearly ($O(\sqrt{n_f})$) compared to FNO's linear growth. Empirical PCA~\citep{abdi2010principal} reveals learned kernels concentrate $>$95\% variance in 2-3 principal components, validating extreme compression. Scaling law analysis shows SINO's power-law exponents are 3-38$\times$ steeper than baselines, demonstrating superior parameter efficiency. Our contributions are:

\begin{itemize}[leftmargin=*, itemsep=2pt]
\item \textbf{Architecture}: We propose SINO with bottleneck hypernetworks generating continuous kernels from normalized coordinates, embedding low-rank and scale-invariant inductive biases. Dual branches balance global frequency-domain and local spatial-domain representations.

\item \textbf{Empirical validation}: SINO achieves best average performance on six turbulent closure problems (forced/decaying Burgers, KS, forced/decaying NS), with 1.5-38$\times$ error reduction and 2-23$\times$ parameter efficiency over eight baselines (U-Net~\citep{ronneberger2015u}, DeepONet~\citep{lu2021learning}, Transolver~\citep{wu2024transolver}, OFormer~\citep{li2022transformer}, GK-Transformer~\citep{cao2021choose}, FNO~\citep{li2020fourier}, AM-FNO~\citep{xiao2024amortized}, UFNO~\citep{wen2022u}). Ablation studies confirm necessity of dual-branch fusion.

\item \textbf{Theoretical foundations}: We prove bottleneck layers enforce rank-$n_f$ and rank-$n_s$ constraints (Theorem~\ref{thm:explicit_rank_constraint}), Lipschitz continuity (Theorem~\ref{thm:lipschitz_hypernetworks}), and sublinear operator norm growth (Theorems~\ref{thm:frequency_domain_operator_norm}--\ref{thm:spatial_domain_operator_norm}). PCA analysis shows $>$95\% variance in 2-3 components. Scaling laws ($\alpha=0.568$ on NS vs. 0.015 for FNO) demonstrate SINO prioritizes learning physical laws over memorizing patterns during capacity growth.
\end{itemize}

\section{Related Work}

\paragraph{Closure Modeling.}
Resolving the finest spatiotemporal scales in PDEs is often computationally intractable, necessitating coarse-grained approximations. Traditional approaches like Reynolds-averaged Navier-Stokes (RANS), large-eddy simulation (LES)~\citep{heinz2020review}, and subgrid-scale (SGS) models~\citep{shankar2023differentiable} model unresolved physics through closure terms, but deriving reliable closures remains challenging with limited accuracy for complex flows~\citep{wei2026inc}. Recent machine learning approaches systematically learn closure operators from data. The indirect neural corrector (INC)~\citep{wei2026inc} embeds learned terms into the PDE's right-hand side for improved stability. We extend this framework with low-rank, scale-invariant architectures. Given coarse-grid state $\bar{u}_\Delta$, the learned closure predicts the correction term $\tau_\Delta = \text{SINO}(\bar{u}_\Delta; \Theta)$ that augments the coarse-grid dynamics: $\partial \bar{u}_\Delta / \partial t = \mathcal{L}_\Delta(\bar{u}_\Delta) + \tau_\Delta$, enabling stable rollout integration.

\paragraph{Implicit Parameterization.}
Implicit parameterization represents neural network weights or convolutional kernels as continuous functions of coordinates, decoupling learnable parameters from kernel size or resolution. Hypernetworks generate weights via an auxiliary network, enabling adaptive filter generation~\citep{ha2016hypernetworks,ma2022hyper}. Continuous Kernel Convolution (CKConv)~\citep{romero2021ckconv} models kernels as continuous functions, handling arbitrarily long sequences and irregular data. Neural Implicit Frequency Filters (NIFF)~\citep{grabinski2024large} learn filters in the frequency domain, enabling infinitely large kernels. We extend this paradigm by employing bottleneck hypernetworks to generate continuous kernels in both spatial and spectral domains, enforcing low-rank constraints aligned with turbulent closure physics while achieving cross-resolution scale invariance through normalized coordinates.

\begin{figure}[t]
	\centering
	\includegraphics[width=\textwidth]{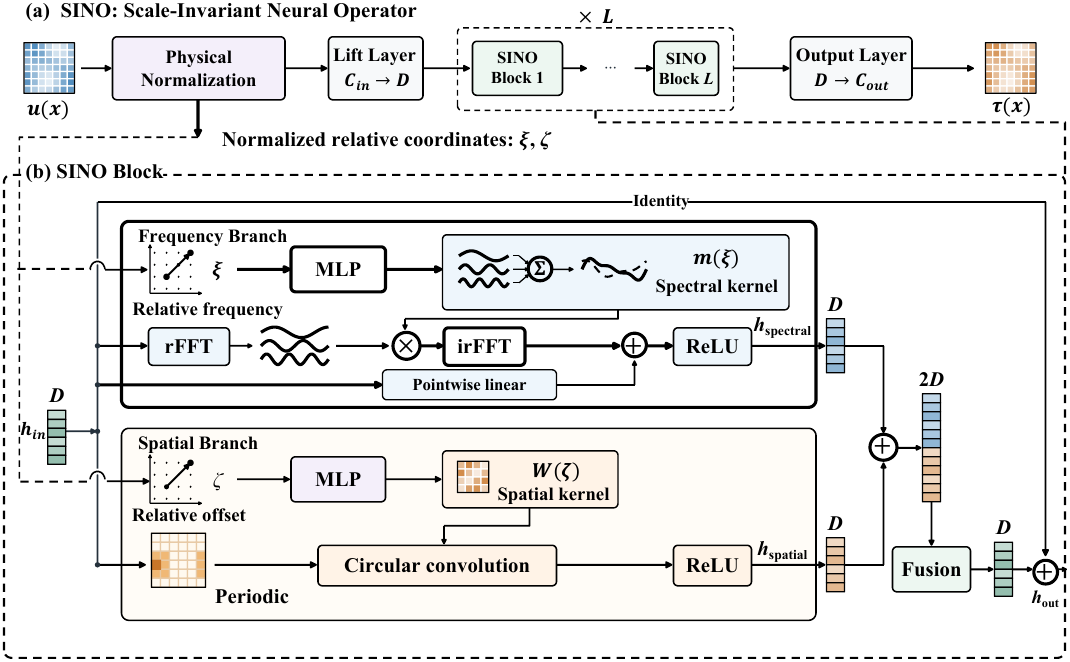}
	\caption{Architecture of the SINO. \textbf{(a)} Overall pipeline with physical normalization, lifting layer, $L$ SINO blocks, and output projection. \textbf{(b)} SINO block: dual-branch architecture where frequency and spatial branches generate continuous kernels via bottleneck MLPs from normalized coordinates $(\xi, \zeta)$, then fuse outputs through residual connections.}
	\label{fig:sino-framework}
\end{figure}

\section{Method}
We propose the Scale-Invariant Neural Operator (SINO), a dual-branch architecture that learns closure terms on normalized physical scales. SINO parameterizes kernel weights as continuous functions of normalized coordinates in both frequency and spatial domains, decoupling learnable parameters from kernel size and resolution. This enables compact representations of scale-invariant physical laws while balancing global spectral structure with local spatial features.

\subsection{Physical Normalization of Input Coordinates}
Learning scale-invariant laws requires that the same physical process, when observed at different resolutions, receives identical normalized representations. For a coarse-grid field $\bar{u}_\Delta$ with resolution $N$ obtained by coarse-graining from fine-grid DNS at resolution $N_{\text{fine}}$, the resample factor $r = N_{\text{fine}}/N$ quantifies the degree of coarse-graining.

\textbf{Frequency domain.} We normalize wavenumber coordinates as $\xi = 4k/N - 1 \in [-1, 1]$, where $k \in [0, N/2]$ is the frequency index from the real FFT. This maps the DC component ($k=0$) to $\xi=-1$, mid-range frequencies to $\xi=0$, and the Nyquist frequency to $\xi=1$, ensuring the same physical frequency receives the same $\xi$ regardless of resolution $N$.

\textbf{Spatial domain.} For a kernel of size $K = 2R + 1$ (radius $R$) at resolution $N$, we normalize the spatial offset $\Delta x = j / N$ (where $j \in [-R, R]$) by the reference length $L_{\text{ref}} = R / (N_{\text{fine}} / r)$, yielding $\zeta = j/R \in [-1, 1]$. This guarantees identical physical displacements receive identical $\zeta$ across resolutions, enabling resolution-independent convolution kernels.

\subsection{Dual-Branch Architecture}

Physical fields in turbulent systems contain both global coherent structures and localized intermittent events. Frequency representations naturally capture smooth long-range correlations and spectral energy transfer but struggle with sharp local features (shocks, vortex cores) due to spectral ringing. Conversely, spatial convolutions excel at localized nonlinear patterns but incur high cost for long-range dependencies. SINO addresses this complementary trade-off via a dual-branch architecture processing features in parallel through frequency and spatial domains.

For input $\bar{u}_\Delta \in \mathbb{R}^{B \times N \times C_\text{in}}$ (batch size $B$, channels $C_\text{in}$), a lifting layer projects to latent space $h^{(0)} = P(\bar{u}_\Delta) \in \mathbb{R}^{B \times N \times D}$ (hidden dimension $D$). Each SINO block ($\ell \in \{1, \ldots, L\}$) processes $h^{(\ell-1)}$ through two branches:

\textbf{Frequency branch} (spectral convolution with residual bypass):
\begin{equation}
h_\text{freq}^{(\ell)} = \sigma\big( W_\text{bypass}^{(\ell)} h^{(\ell-1)} + \mathcal{F}^{-1}\big[ m^{(\ell)}(\xi) \odot \mathcal{F}(h^{(\ell-1)}) \big] \big),
\end{equation}
where $\sigma$ is ReLU, $W_\text{bypass}^{(\ell)} \in \mathbb{R}^{D \times D}$ is the bypass projection, $\mathcal{F}$ denotes FFT, $m^{(\ell)}(\xi) \in \mathbb{C}^{D \times D}$ are hypernetwork-generated multiplication weights, and $\odot$ is element-wise multiplication.

\textbf{Spatial branch} (implicit circular convolution):
\begin{equation}
	h_\text{spatial}^{(\ell)} = \sigma\big( \text{Conv}^{(\ell)}(h^{(\ell-1)}; W^{(\ell)}(\zeta)) \big),
\end{equation}
where $W^{(\ell)}(\zeta)$ are hypernetwork-generated kernel weights parameterized by $\zeta$.

\textbf{Fusion.} Outputs are concatenated and fused:
\begin{equation}
h^{(\ell)} = h^{(\ell-1)} + W_\text{fuse}^{(\ell)} [h_\text{freq}^{(\ell)} \, ; \, h_\text{spatial}^{(\ell)}],
\end{equation}
where $[\cdot \, ; \, \cdot]$ denotes concatenation and $W_\text{fuse}^{(\ell)} \in \mathbb{R}^{D \times 2D}$ is the fusion matrix. Activations are applied in intermediate blocks but omitted in the final block for unrestricted output range.

After $L$ blocks, an output projection with small initialization ($\sigma_\text{init} = 0.01$) yields:
\begin{equation}
	\tau_\Delta = Q(h^{(L)}) \in \mathbb{R}^{B \times N \times C_\text{out}}.
\end{equation}
The predicted closure term $\tau_\Delta$ is then incorporated into the coarse-grid evolution $\partial \bar{u}_\Delta / \partial t = \mathcal{L}_\Delta(\bar{u}_\Delta) + \tau_\Delta$ for rollout integration.

\subsection{Implicit Kernel Representation via Hypernetworks}

We represent convolution kernels as continuous functions generated by small MLPs (hypernetworks), decoupling expressiveness from parameter count.

\textbf{Frequency domain.} Hypernetwork $\Phi_\omega$ generates complex-valued weights $m(\xi) \in \mathbb{C}^{D \times D}$ via three hidden layers $[w_f, w_f, n_f]$ (intermediate width $w_f$, bottleneck $n_f$) with separate real/imaginary output heads:
\begin{equation}
m(\xi) = \Phi_\omega(\xi) = \text{head}_\text{real}(\text{MLP}(\xi)) + i \cdot \text{head}_\text{imag}(\text{MLP}(\xi)).
\end{equation}
Multiplication applies via the convolution theorem: $\mathcal{F}(\text{Conv}(h)) = m(\xi) \odot \mathcal{F}(h)$, enabling $O(N \log N)$ computation via FFT.

\textbf{Spatial domain.} Hypernetwork $\Phi_x$ generates kernel weights $W(\zeta) \in \mathbb{R}^{K \times K \times D \times D}$ via three layers $[w_s, w_s, n_s]$ (spatial width $w_s$, bottleneck $n_s$):
\begin{equation}
W(\zeta) = \Phi_x(\zeta), \quad (\text{Conv}(h))_i = \sum_{j \in \mathcal{N}(i)} W(\zeta_{i-j}) \cdot h_j,
\end{equation}
where $\mathcal{N}(i)$ is the neighborhood around position $i$ with circular boundary conditions. The implicit representation induces spatial smoothness, improving robustness.The complete model is:
\begin{equation}
\tau_\Delta = \text{SINO}(\bar{u}_\Delta; \Theta) = Q \circ \text{Block}^{(L)} \circ \cdots \circ \text{Block}^{(1)} \circ P(\bar{u}_\Delta),
\end{equation}
where $\Theta$ collects all parameters. Hyperparameters ($D, L, K, n_f, n_s, w_f, w_s$) control capacity and inductive biases.

\subsection{Theoretical Properties}

SINO's implicit kernel representation induces low-rank structure through bottleneck layers. To validate this empirically, we perform Principal Component Analysis (PCA) on the learned convolution kernels across all channel pairs in each layer. Figure~\ref{fig:cumulative_variance} reveals a striking concentration of variance: the first 2 principal components alone capture over 90\% of the variance in both spatial and frequency branches, while merely 3 components exceed 95\% explained variance across all layers. This demonstrates that SINO successfully concentrates learned representations into an extremely low-dimensional subspace, aligned with the physical intuition of scale separation and energy cascade in turbulent flows. We rigorously establish in Appendix~\ref{sec:theory} that the bottleneck architecture enforces explicit rank constraints (Theorem~\ref{thm:explicit_rank_constraint}), where every generated kernel admits a finite-rank decomposition $m(\xi) = \sum_{j=1}^{n_f} z_j(\xi) \cdot B_j$ with fixed basis matrices, reducing effective parameter count from $O(k_{\max} \cdot D^2)$ in FNO or $O(K^2 \cdot D^2)$ in CNN to $O((n_f + n_s) \cdot D^2)$ in SINO, independent of resolution or kernel size. Furthermore, the MLP parameterization guarantees Lipschitz continuity (Theorem~\ref{thm:lipschitz_hypernetworks}) that prevents memorization of discrete grid patterns, and we establish quantitative operator norm bounds: $O(\sqrt{n_f})$ sublinear growth for the frequency-domain operator and $O(n_s)$ linear growth for the spatial-domain operator (Theorems~\ref{thm:frequency_domain_operator_norm}--\ref{thm:spatial_domain_operator_norm}), providing favorable implicit regularization through spectral orthogonality in the frequency branch. These properties jointly explain SINO's superior parameter efficiency and data efficiency observed in experiments.

\begin{figure}[H]
\centering
\includegraphics[width=0.95\textwidth]{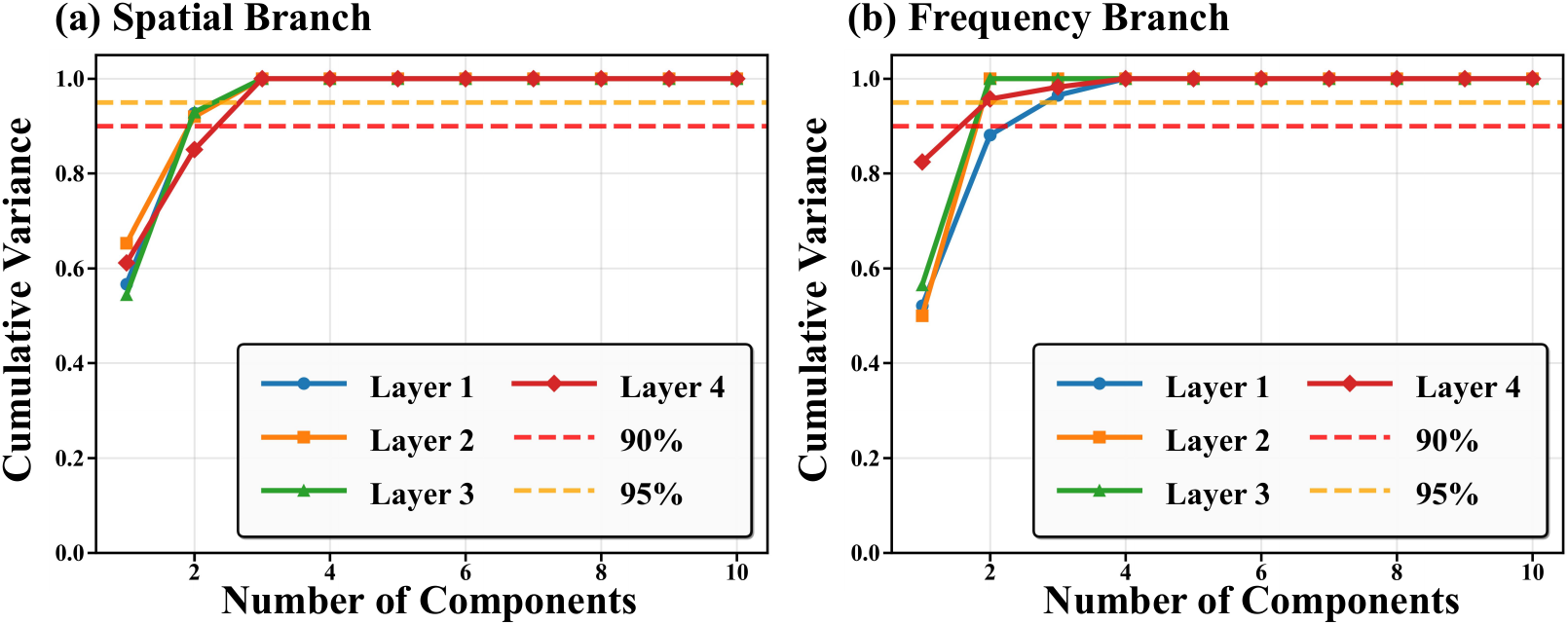}
\caption{Cumulative variance explained by principal components of learned convolution kernels on Forcing NS (Re=4000). \textbf{(a) Spatial branch} and \textbf{(b) Frequency branch} across 4 layers. Remarkably, the first 2 components capture $>$90\% variance and 3 components exceed $>$95\% (horizontal dashed lines), validating the extremely low-rank structure induced by SINO's bottleneck architecture and confirming that the model prioritizes dominant physical modes over high-dimensional noise.}
\label{fig:cumulative_variance}
\end{figure}

\section{Experiments}
We validate SINO on turbulent closure problems spanning 1D and 2D systems, comparing against diverse baselines and conducting ablation studies to verify effectiveness.

\subsection{Experimental Setup}
\paragraph{Benchmarks.}
We evaluate SINO on six turbulent closure benchmarks with aggressive coarse-graining ratios. Table~\ref{tab:benchmarks} summarizes the configurations, including forcing and decaying variants of Burgers turbulence, Kuramoto-Sivashinsky (KS) turbulence, and Navier-Stokes (NS) turbulence at different Reynolds numbers. The forcing variants use external forcing (Kolmogorov forcing for NS), while the decaying variants exhibit freely decaying dynamics, testing the model's ability to capture diverse physical regimes. Supplementary experiments at alternative resolutions and forcing configurations (Appendix~\ref{sec:additional_experiments}) further validate SINO's scale-invariant capability. Detailed numerical schemes and simulation protocols are in Appendix~\ref{sec:benchmark_equations}.

\begin{table}[H]
\centering
\caption{Benchmark configurations.}
\label{tab:benchmarks}
\small
\begin{tabular}{lcccc}
\toprule
\textbf{Problem} & \textbf{Dim} & \textbf{Resolution} & \textbf{Spatial Discretization} & \textbf{Temporal Scheme} \\
\midrule
Forcing Burgers & 1D & 512$\to$32 & WENO5 Finite Volume Method & SSP-RK3 \\
Decaying Burgers & 1D & 2048$\to$256 & Pseudo-Spectral & TVD RK3 \\
KS & 1D & 256$\to$64 & Pseudo-Spectral & ETDRK4 \\
Forcing NS (Re=1000) & 2D & 512$\to$64 & Finite Volume Method (Van Leer) & Semi-Implicit \\
Forcing NS (Re=4000) & 2D & 512$\to$64 & Finite Volume Method (Van Leer) & Semi-Implicit \\
Decaying NS & 2D & 512$\to$64 & Finite Volume Method (Van Leer) & Semi-Implicit \\
\bottomrule
\end{tabular}
\end{table}

\paragraph{Baselines.}
We compare SINO against nine baselines covering three paradigms: traditional models (U-Net, DeepONet), Transformer-based operators (Transolver, Oformer, GK-Transformer), and frequency-domain operators (FNO, AMFNO, UFNO), plus uncorrected coarse-grid DNS as a physical baseline. Model specifications and training protocols are in Appendices~\ref{sec:model_specs} and~\ref{sec:training_methodology}.

\begin{figure}[H]
	\centering
	\includegraphics[width=0.9\textwidth]{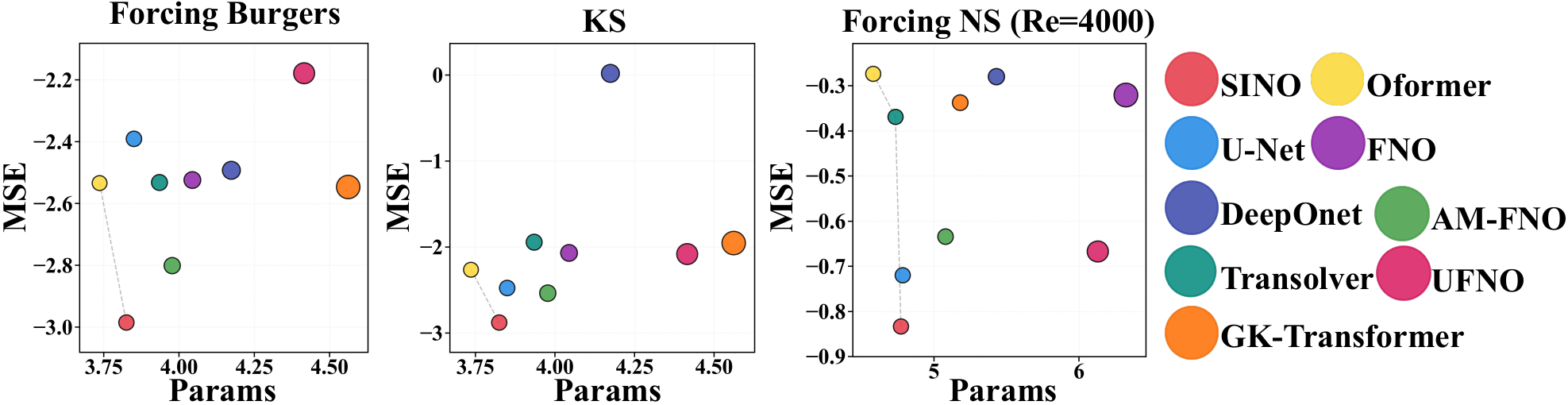}
	\caption{Pareto comparison of MSE versus parameter count on Forcing Burgers, KS, and Forcing NS (Re=4000). Both axes are on a logarithmic scale. SINO (red) achieves optimal or near-optimal MSE with significantly fewer parameters than FNO-based methods, demonstrating parameter efficiency from implicit low-rank representations.}
	\label{fig:pareto_comparison}
\end{figure}

\subsection{Main Results}
Tables~\ref{tab:results} and~\ref{tab:results_2d} present quantitative comparisons on 1D and 2D turbulence benchmarks. SINO consistently achieves superior or competitive performance across all tasks while maintaining parameter efficiency. Supplementary experiments on additional resolutions and forcing configurations (Tables~\ref{tab:results_supplementary_1d} and~\ref{tab:results_supplementary_2d} in Appendix) confirm robustness across diverse physical regimes. Following the Indirect Neural Corrector (INC) framework~\citep{wei2026inc}, SINO predicts closure terms as right-hand side corrections; training details are in Section~\ref{sec:training_methodology}. All reported metrics are extrapolation errors: 30\% of time steps for training, 70\% for testing.

\begin{figure}[htbp]
	\centering
	\includegraphics[width=0.88\textwidth]{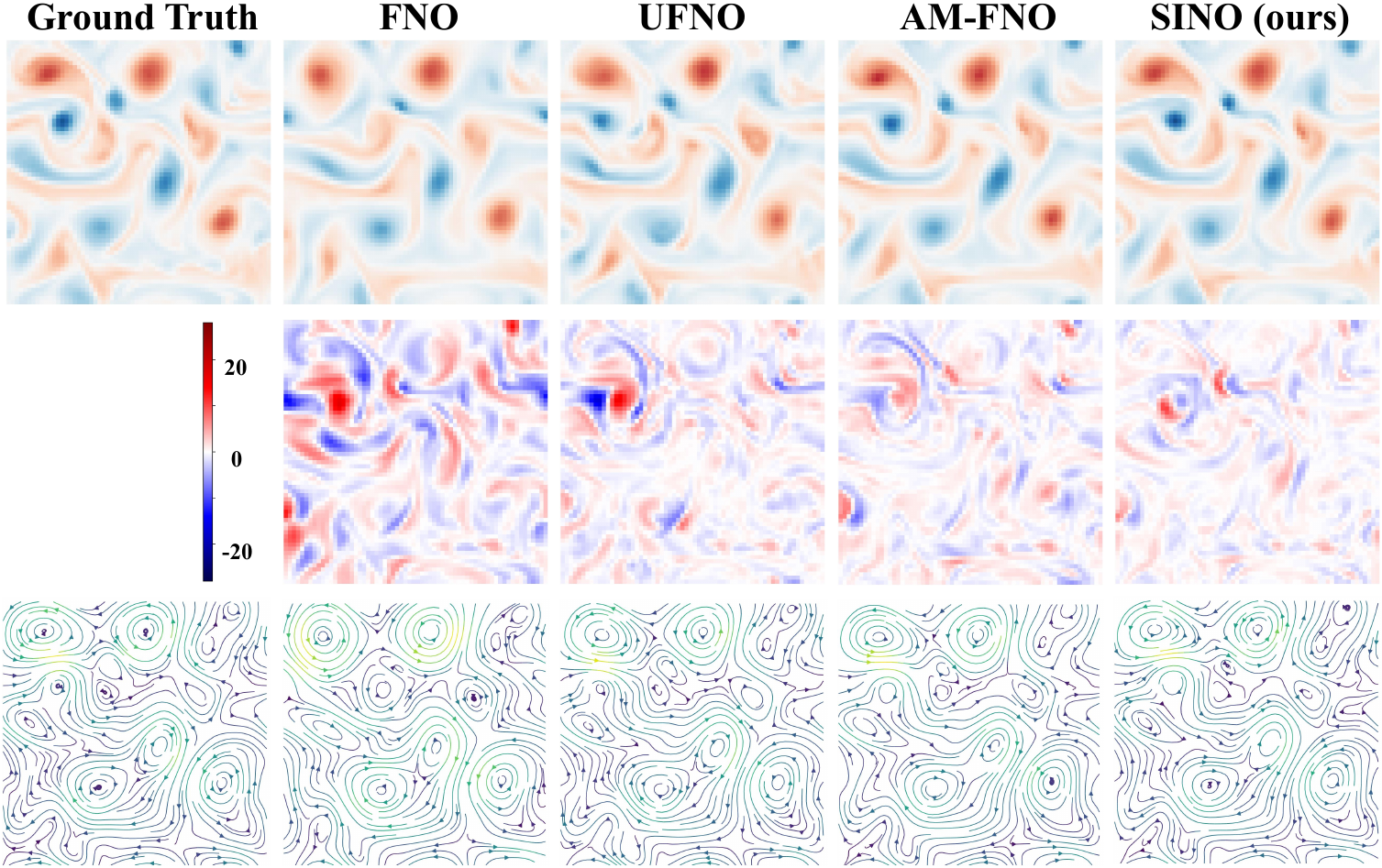}
	\caption{Instantaneous predictions on Kolmogorov-forced Navier--Stokes turbulence at Re=4000. Column 1: ground truth downsampled from DNS-512 to 64$\times$64 resolution. Columns 2--5: FNO, UFNO, AMFNO, SINO predictions. Row 1: vorticity fields. Row 2: vorticity error distributions. Row 3: streamline plots. SINO accurately reconstructs fine-scale structures and velocity patterns with minimal error, while other methods exhibit varying degrees of distortion in small-scale details.}
	\label{fig:field}
\end{figure}

\textbf{1D Benchmarks.} On Forcing Burgers, SINO achieves the lowest error (1.03E-03 average, 9.41E-04 best) with only 6.7K parameters, outperforming second-best AMFNO (1.58E-03 average, 9.5K parameters) by 1.5$\times$. Transformer-based methods yield comparable parameter counts but higher errors: GK-Transformer (2.84E-03 average, 36K parameters), Oformer (2.92E-03, 5.4K parameters), and Transolver (2.94E-03, 8.6K parameters). FNO variants show mixed results: FNO (2.99E-03, 11K parameters), DeepONet (3.22E-03, 15K parameters), U-Net (4.07E-03, 7.1K parameters), and UFNO (6.63E-03, 26K parameters) all lag behind SINO. On KS turbulence, SINO (1.33E-03 average, 1.14E-03 best) surpasses all baselines by substantial margins: 2.2$\times$ better than AMFNO (2.92E-03), 2.5$\times$ better than U-Net (3.33E-03), and 6--8$\times$ better than FNO variants and Transformers. For Decaying Burgers, SINO (4.37E-03 average, 2.90E-03 best) achieves 38$\times$ improvement over most baselines, which plateau around 1.65E-01. Only SINO-SPA (2.09E-02 average, 1.8K parameters) approaches SINO's performance among ablation variants.

\begin{table}[H]
	\centering
	\caption{Performance comparison on 1D benchmarks.}
	\label{tab:results}
	\footnotesize
	\begin{tabular*}{\textwidth}{@{\extracolsep{\fill}}lccccccr@{}}
		\toprule
		\multirow{3}{*}{\textbf{Method}} & \multicolumn{6}{c}{\textbf{1D Benchmarks}} & \multirow{3}{*}{\textbf{Params}} \\
		\cmidrule(lr){2-7}
		& \multicolumn{2}{c}{\textbf{Forcing Burgers}} & \multicolumn{2}{c}{\textbf{KS}} & \multicolumn{2}{c}{\textbf{Decaying Burgers}} & \\
		\cmidrule(lr){2-3} \cmidrule(lr){4-5} \cmidrule(lr){6-7}
		& \textbf{AVE} & \textbf{Best} & \textbf{AVE} & \textbf{Best} & \textbf{AVE} & \textbf{Best} & \\
		\midrule
		U-Net & 4.07E-03 & 3.31E-03 & 3.33E-03 & 1.48E-03 & 1.65E-01 & 1.65E-01 & 7,077 \\
		DeepONet & 3.22E-03 & 2.78E-03 & 1.05E+00 & 1.39E+00 & 1.65E-01 & 1.65E-01 & 14,913 \\
		Transolver & 2.94E-03 & 2.80E-03 & 1.14E-02 & 1.09E-02 & 1.65E-01 & 1.64E-01 & 8,601 \\
		Oformer & 2.92E-03 & 2.92E-03 & 5.45E-03 & 5.45E-03 & 1.65E-01 & 1.65E-01 & 5,444 \\
		GK-Transformer & 2.84E-03 & 2.77E-03 & 1.12E-02 & 1.11E-02 & 1.65E-01 & 1.64E-01 & 36,449 \\
		FNO & 2.99E-03 & 2.82E-03 & 8.55E-03 & 8.03E-03 & 1.65E-01 & 1.65E-01 & 11,073 \\
		UFNO & 6.63E-03 & 5.70E-03 & 8.28E-03 & 7.82E-03 & 1.65E-01 & 1.65E-01 & 26,017 \\
		AM-FNO & 1.58E-03 & 1.34E-03 & 2.92E-03 & 2.50E-03 & 1.66E-01 & 1.65E-01 & 9,489 \\
		\midrule
		SINO-SPE & 1.56E-03 & 1.32E-03 & 3.02E-03 & 2.49E-03 & 1.28E-01 & 1.27E-01 & 3,877 \\
		SINO-SPA & 2.12E-03 & 1.79E-03 & 3.53E-03 & 2.11E-03 & 2.09E-02 & 4.08E-03 & 1,797 \\
		\midrule
		SINO (ours) & \textbf{1.03E-03} & \textbf{9.41E-04} & \textbf{1.33E-03} & \textbf{1.14E-03} & \textbf{4.37E-03} & \textbf{2.90E-03} & 6,681 \\
		\midrule
		Coarse Grid & \multicolumn{2}{c}{2.23E-01} & \multicolumn{2}{c}{5.46E-03} & \multicolumn{2}{c}{1.65E-01} & -- \\
		\bottomrule
	\end{tabular*}
\end{table}

\begin{table}[htbp]
	\centering
	\caption{Performance comparison on 2D benchmarks.}
	\label{tab:results_2d}
	\footnotesize
	\setlength{\tabcolsep}{2.8pt}
	\begin{tabular*}{\textwidth}{@{\extracolsep{\fill}}lccccccr@{}}
		\toprule
		\multirow{3}{*}{\textbf{Method}} & \multicolumn{6}{c}{\textbf{2D Benchmarks}} & \multirow{3}{*}{\textbf{Params}} \\
		\cmidrule(lr){2-7}
		& \multicolumn{2}{c}{\textbf{Forcing NS (Re=1000)}} & \multicolumn{2}{c}{\textbf{Forcing NS (Re=4000)}} & \multicolumn{2}{c}{\textbf{Decaying NS}} & \\
		\cmidrule(lr){2-3} \cmidrule(lr){4-5} \cmidrule(lr){6-7}
		& \textbf{AVE} & \textbf{Best} & \textbf{AVE} & \textbf{Best} & \textbf{AVE} & \textbf{Best} & \\
		\midrule
		U-Net & 1.34E-01 & 1.16E-01 & 1.91E-01 & 1.84E-01 & 6.67E-02 & 5.64E-02 & 61,006 \\
		DeepONet & 4.62E-01 & 4.60E-01 & 5.25E-01 & 5.18E-01 & 2.95E-01 & 2.79E-01 & 269,153 \\
		Transolver & 3.65E-01 & 3.48E-01 & 4.27E-01 & 4.23E-01 & 2.24E-01 & 2.17E-01 & 54,306 \\
		Oformer & 5.07E-01 & 5.07E-01 & 5.33E-01 & 5.33E-01 & 4.03E-01 & 4.03E-01 & 38,087 \\
		GK-Transformer & 3.87E-01 & 3.84E-01 & 4.60E-01 & 4.55E-01 & 2.04E-01 & 1.96E-01 & 151,874 \\
		FNO & 4.44E-01 & 4.42E-01 & 4.78E-01 & 4.75E-01 & 2.60E-01 & 2.40E-01 & 2,106,018 \\
		UFNO & 1.64E-01 & 1.51E-01 & 2.15E-01 & 2.12E-01 & \textbf{5.58E-02} & 4.86E-02 & 1,346,466 \\
		AM-FNO & 1.69E-01 & 1.59E-01 & 2.32E-01 & 2.15E-01 & 7.68E-02 & 7.05E-02 & 120,162 \\
		\midrule
		SINO-SPE & 2.18E-01 & 1.81E-01 & 2.63E-01 & 2.40E-01 & 1.71E-01 & 1.51E-01 & 33,834 \\
		SINO-SPA & 2.23E-01 & 2.20E-01 & 2.82E-01 & 2.72E-01 & 1.37E-01 & 1.24E-01 & 17,322 \\
		\midrule
		SINO (ours) & \textbf{9.17E-02} & \textbf{8.43E-02} & \textbf{1.47E-01} & \textbf{1.41E-01} & 5.99E-02 & \textbf{4.55E-02} & 59,314 \\
		\midrule
		Coarse Grid & \multicolumn{2}{c}{5.07E-01} & \multicolumn{2}{c}{5.32E-01} & \multicolumn{2}{c}{4.04E-01} & -- \\
		\bottomrule
	\end{tabular*}
\end{table}

\textbf{2D Benchmarks.} On Forcing NS (Re=1000), SINO (9.17E-02 average, 8.43E-02 best, 59K parameters) leads all methods. U-Net (1.34E-01, 61K parameters) achieves second place with comparable parameter count but 1.5$\times$ higher error. FNO-based methods underperform: UFNO (1.64E-01, 1.3M parameters) and AMFNO (1.69E-01, 120K parameters) require 2--23$\times$ more parameters for inferior accuracy. Transformer methods exhibit substantially higher errors: Transolver (3.65E-01, 54K parameters), GK-Transformer (3.87E-01, 152K parameters), and Oformer (5.07E-01, 38K parameters). FNO (4.44E-01, 2.1M parameters) and DeepONet (4.62E-01, 269K parameters) underperform despite massive capacity. At Re=4000, similar trends emerge: SINO (1.47E-01 average, 1.41E-01 best) maintains first place, followed by U-Net (1.91E-01), UFNO (2.15E-01), and AMFNO (2.32E-01). On Decaying NS, UFNO achieves the lowest average error (5.58E-02) but requires 1.3M parameters—23$\times$ more than SINO (5.99E-02 average, 59K parameters). Notably, SINO attains the best peak performance (4.55E-02) across all methods, demonstrating superior optimization capability. U-Net (6.67E-02, 61K parameters) ranks third, while AMFNO (7.68E-02, 120K parameters) requires 2$\times$ more parameters for worse accuracy.

\textbf{Ablation Analysis.} SINO-SPE (frequency-only) and SINO-SPA (spatial-only) consistently underperform full SINO by 1.5--3$\times$ across all benchmarks, confirming the necessity of dual-branch architecture. SINO-SPE shows particular degradation on 2D problems (2.18E-01 vs. 9.17E-02 on Forcing NS Re=1000), while SINO-SPA struggles on forced problems requiring global energy transfer. The full model synergistically combines both branches for optimal performance.

\subsection{Scaling Laws and Cross-Scale Analysis}

\textbf{Parameter Efficiency.} Figure~\ref{fig:scaling} reveals SINO's superior scaling behavior through power-law exponents. On forced NS (Re=4000), SINO achieves $\alpha=0.568$—nearly 38$\times$ steeper than FNO ($\alpha=0.015$) and 3--4$\times$ steeper than AMFNO ($\alpha=0.143$) and UFNO ($\alpha=0.162$). On decaying NS, SINO maintains $\alpha=0.868$ compared to FNO's $\alpha=0.073$, demonstrating 12$\times$ better parameter efficiency. These steep slopes indicate that SINO effectively learns low-rank physical structures: each additional parameter contributes substantially to error reduction, whereas baselines require orders of magnitude more capacity for comparable gains.

\begin{figure}[htbp]
\centering
\includegraphics[width=0.88\textwidth]{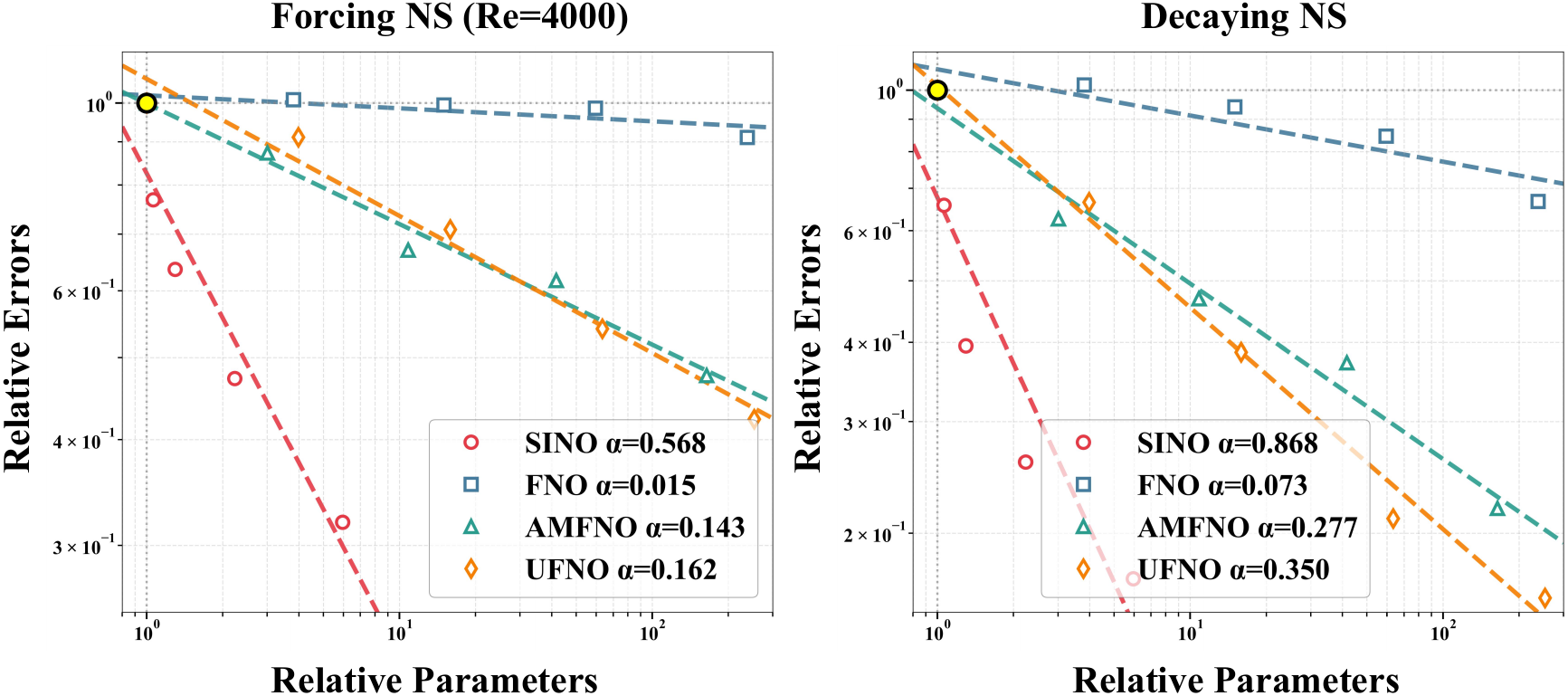}
\caption{Scaling laws on forced NS (Re=4000, left) and decaying NS (right). SINO exhibits steeper power-law exponents ($\alpha=0.568$ and $0.868$) than baselines, demonstrating better parameter efficiency.}
\label{fig:scaling}
\end{figure}

\begin{figure}[htbp]
	\centering
	\includegraphics[width=0.88\textwidth]{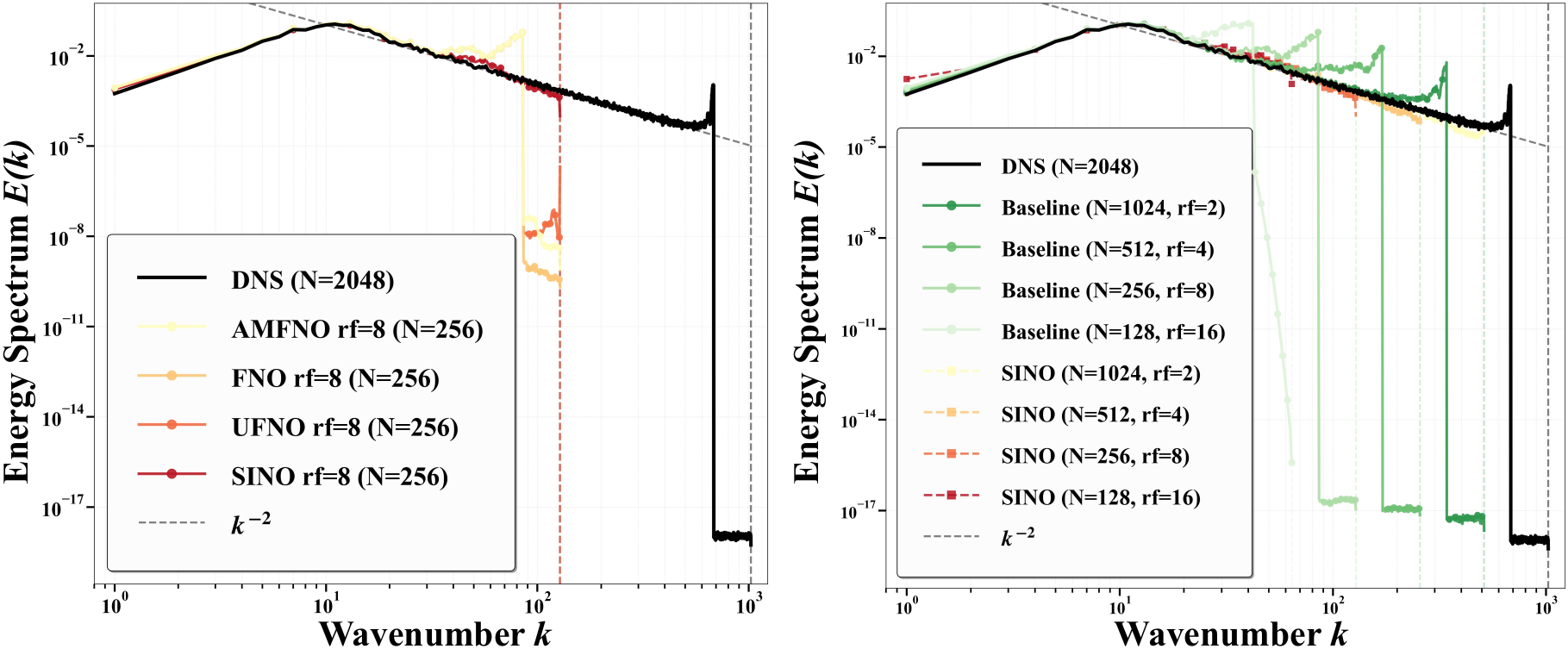}
	\caption{Energy spectra on decaying Burgers turbulence. \textbf{Left:} At resample factor (rf) = 8, SINO most accurately follows the $k^{-2}$ inertial range among neural operators. \textbf{Right:} Across rf = 2--16, baselines (green, solid) exhibit numerical dissipation beyond cutoffs, while SINO (red-yellow, dashed) preserves correct spectral decay, validating scale-invariant modeling.}
	\label{fig:spectra}
\end{figure}

\textbf{Spectral Fidelity.} Figure~\ref{fig:spectra} validates SINO's cross-scale capability through energy spectra on decaying Burgers turbulence. Traditional coarse grids exhibit severe numerical dissipation beyond the Nyquist cutoff (left panel, vertical dashed lines mark resolution limits). At rf = 8 (N = 256), baseline methods (AMFNO, FNO, UFNO) deviate from the correct $k^{-2}$ decay slope and fail to suppress spurious dissipation (middle panel). In contrast, SINO maintains accurate spectral behavior across resolutions: even at extreme downsampling (rf = 16, N = 128), SINO's energy spectrum aligns with high-resolution DNS (N = 2048) and preserves the theoretical $k^{-2}$ slope (right panel), confirming its scale-invariant representations.

\textbf{Bottleneck Capacity Scaling.} To investigate the impact of bottleneck dimensions on performance, Table~\ref{tab:bottleneck_scaling} compares SINO variants with $(n_f, n_s)$ ranging from 2 to 32 against uncorrected DNS at multiple resolutions. SINO-32 achieves errors of 4.08E-02 (Decaying NS) and 6.06E-02 (Forcing NS Re=1000), representing 1.5--1.7$\times$ improvement over SINO-2 and  exceeding the accuracy of DNS-256 (4$\times$ higher resolution with 16$\times$ more grid points). Figure~\ref{fig:error_time} confirms SINO-32 maintains superior temporal stability in long-term extrapolation. This demonstrates that increasing bottleneck capacity enables SINO to surpass even higher-resolution coarse-grid simulations, validating the effectiveness of further parameter investment in low-rank subspace learning.

\section{Conclusion}
We propose the Scale-Invariant Neural Operator (SINO), a dual-branch architecture that learns turbulent closure terms on normalized physical scales. By parameterizing convolution kernels as continuous functions via bottleneck hypernetworks, SINO decouples model capacity from grid resolution, enabling compact representation of scale-invariant physical laws. The bottleneck structure enforces explicit low-rank constraints, concentrating $>$95\% variance in merely 2-3 modes (validated by PCA) and forcing the model to discover dominant physical mechanisms rather than memorize high-dimensional patterns. This yields 38$\times$ steeper scaling law exponents than FNO, demonstrating superior parameter efficiency. Experiments on six turbulent closure benchmarks demonstrate 1.5--38$\times$ error reduction with 2--23$\times$ parameter efficiency over baselines. Ablation studies confirm dual-branch necessity, and spectral analysis validates cross-scale generalization under extreme downsampling. Future work will extend SINO to three-dimensional flows and diverse physical problems, establishing SINO as a general framework for learning scale-invariant laws where low-rank structure governs system dynamics.

\section*{Acknowledgments}
We thank Tao Zhang, Ruiqi Feng, and Xinan Dai from Westlake University for discussions and for providing feedback on our manuscript. We also gratefully acknowledge the support of the Westlake University Research Center for Industries of the Future and the Westlake University Center for High-performance Computing. The content is solely the responsibility of the authors and does not necessarily represent the official views of the funding entities. This work was supported by the Shanghai Municipal-Level Major Special Project.

\bibliography{iclr2027_conference}
\bibliographystyle{iclr2027_conference}

\clearpage

\appendix
\section{Appendix}

This appendix provides comprehensive technical details supporting the main results. Section~\ref{sec:benchmark_equations} specifies the governing equations, numerical discretization schemes, and simulation protocols for all six turbulent closure benchmarks. Section~\ref{sec:model_specs} presents complete architectural specifications for SINO and all baseline models across 1D and 2D problems. Section~\ref{sec:training_methodology} documents the training objectives, rollout integration with closure term incorporation, curriculum learning strategy, optimization settings, and time integration protocols. Section~\ref{sec:theory} establishes theoretical foundations through formal proofs of SINO's low-rank structure, Lipschitz regularity, and operator norm bounds. Finally, Section~\ref{sec:additional_experiments} reports additional experimental results including cross-resolution and alternative forcing generalization tests, principal component analysis confirming low-rank structure across different flow regimes, and bottleneck dimension scaling studies demonstrating computational trade-offs.

\subsection{Benchmark Equations}
\label{sec:benchmark_equations}

\subsubsection{Forcing Burgers Equation}

\paragraph{Governing Equation.}
We consider the one-dimensional viscous Burgers equation in conservative form with external forcing:
\begin{equation}
\frac{\partial u}{\partial t} + \frac{\partial}{\partial x}\left(\frac{u^2}{2}\right) = \eta \frac{\partial^2 u}{\partial x^2} + f(x,t), \quad x \in [0, L], \quad t > 0,
\end{equation}
where $u(x,t)$ is the velocity field, $\eta = 0.01$ is the kinematic viscosity, and $f(x,t)$ is the external forcing term. The domain length is $L = 2\pi$ with periodic boundary conditions.

\paragraph{Initial and Boundary Conditions.}
The initial condition is set to zero velocity everywhere: $u(x, 0) = 0$. Periodic boundary conditions are enforced: $u(0, t) = u(L, t)$ for all $t \geq 0$. A warmup phase of $T_{\text{warmup}} = 2.0$ is applied to allow shock structures to develop before recording snapshots. The external forcing is given by:
\begin{equation}
f(x,t) = \sum_{i=1}^{n_{\text{modes}}} A_i \sin\left(\omega_i t + \frac{2\pi \ell_i x}{L} + \phi_i\right),
\end{equation}
where $n_{\text{modes}} = 20$, and for each mode $i$: $A_i \sim \mathcal{U}(-0.5, 0.5)$, $\omega_i \sim \mathcal{U}(-0.4, 0.4)$, $\ell_i \sim \mathcal{U}_{\text{discrete}}\{3, 4, 5, 6\}$, and $\phi_i \sim \mathcal{U}(0, 2\pi)$ are randomly sampled once per trajectory and held constant throughout the simulation.

\paragraph{Spatial Discretization.}
The equation is discretized using a finite volume method with fifth-order Weighted Essentially Non-Oscillatory (WENO5) reconstruction on a uniform grid of $N = 512$ cells with cell size $\Delta x = L/N$. Let $\bar{u}_i(t)$ denote the cell-averaged value in cell $i$. The WENO5 reconstruction computes left and right interface values $u_{i+1/2}^-$ and $u_{i+1/2}^+$ from a five-point stencil $\{\bar{u}_{i-2}, \bar{u}_{i-1}, \bar{u}_i, \bar{u}_{i+1}, \bar{u}_{i+2}\}$ via:
\begin{equation}
u_{i+1/2}^- = \sum_{k=0}^{2} \omega_k^- p_k^-(u_{i-2+k:i+2+k}), \quad u_{i+1/2}^+ = \sum_{k=0}^{2} \omega_k^+ p_k^+(u_{i-1-k:i+3-k}),
\end{equation}
where $p_k^{\pm}$ are third-order polynomial interpolants and $\omega_k^{\pm}$ are nonlinear weights computed from smoothness indicators:
\begin{equation}
\omega_k = \frac{\alpha_k}{\sum_{j=0}^{2} \alpha_j}, \quad \alpha_k = \frac{d_k}{(\epsilon + \beta_k)^2},
\end{equation}
with optimal linear weights $d_0 = 0.1$, $d_1 = 0.6$, $d_2 = 0.3$, regularization parameter $\epsilon = 10^{-6}$, and smoothness indicators $\beta_k$ measuring local solution variation. The convective flux is computed using the Godunov numerical flux:
\begin{equation}
F_{i+1/2}^{\text{conv}} = \begin{cases}
0, & \text{if } u_{i+1/2}^- \leq 0 \leq u_{i+1/2}^+, \\
\min\left(\frac{(u_{i+1/2}^-)^2}{2}, \frac{(u_{i+1/2}^+)^2}{2}\right), & \text{if } u_{i+1/2}^- \leq u_{i+1/2}^+, \\
\max\left(\frac{(u_{i+1/2}^-)^2}{2}, \frac{(u_{i+1/2}^+)^2}{2}\right), & \text{otherwise}.
\end{cases}
\end{equation}
The diffusive flux is approximated by second-order central differences: $F_{i+1/2}^{\text{diff}} = -\eta (\bar{u}_{i+1} - \bar{u}_i) / \Delta x$. The semi-discrete form becomes:
\begin{equation}
\frac{d\bar{u}_i}{dt} = -\frac{F_{i+1/2} - F_{i-1/2}}{\Delta x} + f_i(t), \quad F_{i+1/2} = F_{i+1/2}^{\text{conv}} + F_{i+1/2}^{\text{diff}}.
\end{equation}

\paragraph{Temporal Discretization.}
Time integration employs the third-order Strong Stability Preserving Runge-Kutta method (SSP-RK3) in Shu-Osher form:
\begin{align}
\bar{u}^{(1)} &= \bar{u}^n + \Delta t \, \mathcal{L}(\bar{u}^n), \\
\bar{u}^{(2)} &= \frac{3}{4} \bar{u}^n + \frac{1}{4} \left[\bar{u}^{(1)} + \Delta t \, \mathcal{L}(\bar{u}^{(1)})\right], \\
\bar{u}^{n+1} &= \frac{1}{3} \bar{u}^n + \frac{2}{3} \left[\bar{u}^{(2)} + \Delta t \, \mathcal{L}(\bar{u}^{(2)})\right],
\end{align}
where $\mathcal{L}(\bar{u})$ denotes the right-hand side spatial operator. The timestep $\Delta t$ is adaptively chosen at each RK stage to satisfy the CFL condition with safety factor $\nu_{\text{CFL}} = 0.4$:
\begin{equation}
\Delta t = \nu_{\text{CFL}} \cdot \min\left(\frac{\Delta x}{\max_i |\bar{u}_i|}, \frac{\Delta x^2}{2\eta}\right).
\end{equation}

\paragraph{Simulation Protocol.}
Each trajectory undergoes a warmup phase of $T_{\text{warmup}} = 2.0$ time units to develop shock structures, followed by a recording phase of $T_{\text{max}} = 10.0$ time units. During the recording phase, $n_{\text{snapshots}} = 1001$ snapshots (including the initial state at $t = 0$ of the recording phase) are saved at uniform intervals $\Delta t_{\text{outer}} = T_{\text{max}} / (n_{\text{snapshots}} - 1) = 0.01$. A total of 64 trajectories are generated with different random forcing parameters. The fine-grid snapshots at resolution $N = 512$ are coarsened to multiple resolutions via exact finite volume averaging. For instance, with resample factor $r = 16$, the snapshots are downsampled to $N = 32$, yielding the coarse-grid reference states $\bar{u}_{\Delta}$ used for closure modeling. The DNS solver employs adaptive timesteps determined by the CFL condition: at each RK substep, the timestep is computed as $\Delta t = \nu_{\text{CFL}} \cdot \min(\Delta x / \max_i |\bar{u}_i|, \, \Delta x^2 / (2\eta))$, where the first term represents the convective stability limit and the second term the diffusive stability limit. The minimum of these two constraints ensures numerical stability throughout the simulation, with the timestep adapting dynamically to the instantaneous solution state.

\subsubsection{Kuramoto--Sivashinsky Equation}

\paragraph{Governing Equation.}
We consider the one-dimensional Kuramoto--Sivashinsky (KS) equation on a periodic domain:
\begin{equation}
\frac{\partial v}{\partial t} + v \frac{\partial v}{\partial x} + \frac{\partial^2 v}{\partial x^2} + \frac{\partial^4 v}{\partial x^4} = 0, \quad x \in [0, L], \quad t > 0,
\end{equation}
where $v(x,t)$ is the scalar field, and $L = 64$ is the domain length. The KS equation exhibits spatiotemporal chaos and is a canonical model for pattern formation and turbulence in dissipative systems.

\paragraph{Initial and Boundary Conditions.}
Periodic boundary conditions are imposed: $v(0, t) = v(L, t)$ and $\partial^n v / \partial x^n |_{x=0} = \partial^n v / \partial x^n |_{x=L}$ for all $n$ and $t \geq 0$. The initial condition is constructed as a random superposition of Fourier modes:
\begin{equation}
v(x, 0) = \sum_{i=1}^{n_{\text{modes}}} A_i \sin\left(\frac{2\pi \ell_i x}{L} + \phi_i\right),
\end{equation}
where $n_{\text{modes}} = 10$, and for each mode $i$: $A_i \sim \mathcal{U}(-0.5, 0.5)$, $\phi_i \sim \mathcal{U}(0, 2\pi)$, and $\ell_i \sim \mathcal{U}_{\text{discrete}}\{1, 2, 3\}$ are randomly sampled once per trajectory. A warmup phase of $T_{\text{warmup}} = 50.0$ time units is applied before recording to allow the system to reach a statistically stationary chaotic state.

\paragraph{Spatial Discretization.}
The KS equation is discretized using a pseudo-spectral collocation method on a uniform grid of $N = 256$ points with grid spacing $\Delta x = L / N$. The solution is represented in Fourier space as $\hat{v}_k(t) = \mathcal{F}[v](k)$, where $k = (2\pi / L) \cdot k_{\text{index}}$ are the wavenumbers with $k_{\text{index}} \in \{-N/2, \ldots, N/2-1\}$. The spatial derivatives are computed exactly in Fourier space:
\begin{equation}
\frac{\partial^n v}{\partial x^n} \quad \Longleftrightarrow \quad (ik)^n \hat{v}_k.
\end{equation}
The nonlinear term $v \partial v / \partial x$ is evaluated in physical space and transformed back to Fourier space. To eliminate aliasing errors from the quadratic nonlinearity, the $2/3$-rule dealiasing is enforced: all Fourier modes with $|k_{\text{index}}| > N/3$ are set to zero both before computing the nonlinear term (on input) and after computing it (on output). This retains approximately $2N/3 + 1 \approx 171$ modes out of 256, ensuring that the product of two dealiased fields contains no aliasing.

\paragraph{Temporal Discretization.}
Time integration employs the fourth-order Exponential Time Differencing Runge-Kutta method (ETDRK4) with a fixed timestep $\Delta t = 0.01$. The KS equation is split into linear and nonlinear parts:
\begin{equation}
\frac{\partial \hat{v}_k}{\partial t} = \hat{L}_k \hat{v}_k + \hat{\mathcal{N}}_k(\hat{v}), \quad \hat{L}_k = k^2 - k^4, \quad \hat{\mathcal{N}}_k(\hat{v}) = -\widehat{v \frac{\partial v}{\partial x}}_k.
\end{equation}
The ETDRK4 scheme integrates the linear term exactly and treats the nonlinear term via four stages:
\begin{align}
\hat{v}^{(a)} &= e^{\Delta t \hat{L}_k / 2} \hat{v}^n + Q_k \hat{\mathcal{N}}_k(\hat{v}^n), \\
\hat{v}^{(b)} &= e^{\Delta t \hat{L}_k / 2} \hat{v}^n + Q_k \hat{\mathcal{N}}_k(\hat{v}^{(a)}), \\
\hat{v}^{(c)} &= e^{\Delta t \hat{L}_k / 2} \hat{v}^{(a)} + Q_k \left[2\hat{\mathcal{N}}_k(\hat{v}^{(b)}) - \hat{\mathcal{N}}_k(\hat{v}^n)\right], \\
\hat{v}^{n+1} &= e^{\Delta t \hat{L}_k} \hat{v}^n + f_{1,k} \hat{\mathcal{N}}_k(\hat{v}^n) + 2 f_{2,k} \left[\hat{\mathcal{N}}_k(\hat{v}^{(a)}) + \hat{\mathcal{N}}_k(\hat{v}^{(b)})\right] + f_{3,k} \hat{\mathcal{N}}_k(\hat{v}^{(c)}),
\end{align}
where the coefficients $Q_k, f_{1,k}, f_{2,k}, f_{3,k}$ are computed using the contour integral method of Kassam and Trefethen to avoid numerical cancellation errors near $\hat{L}_k \approx 0$:
\begin{align}
Q_k &= \Delta t \int_0^{1/2} e^{\tau \Delta t \hat{L}_k} d\tau = \Delta t \cdot \frac{e^{\Delta t \hat{L}_k / 2} - 1}{\Delta t \hat{L}_k}, \\
f_{1,k} &= \Delta t \int_0^1 e^{(1-\tau) \Delta t \hat{L}_k} \left[-4 - \tau \Delta t \hat{L}_k + e^{\tau \Delta t \hat{L}_k} (4 - 3\tau \Delta t \hat{L}_k + \tau^2 (\Delta t \hat{L}_k)^2)\right] \frac{d\tau}{(\tau \Delta t \hat{L}_k)^3}, \\
f_{2,k} &= \Delta t \int_0^1 e^{(1-\tau) \Delta t \hat{L}_k} \left[2 + \tau \Delta t \hat{L}_k + e^{\tau \Delta t \hat{L}_k} (-2 + \tau \Delta t \hat{L}_k)\right] \frac{d\tau}{(\tau \Delta t \hat{L}_k)^3}, \\
f_{3,k} &= \Delta t \int_0^1 e^{(1-\tau) \Delta t \hat{L}_k} \left[-4 - 3\tau \Delta t \hat{L}_k - \tau^2 (\Delta t \hat{L}_k)^2 + e^{\tau \Delta t \hat{L}_k} (4 - \tau \Delta t \hat{L}_k)\right] \frac{d\tau}{(\tau \Delta t \hat{L}_k)^3}.
\end{align}
These integrals are evaluated numerically via 32-point trapezoidal rule on a circular contour in the complex plane with radius $r = 1$, ensuring high accuracy even when $\hat{L}_k = 0$ (the DC mode).

\paragraph{Simulation Protocol.}
Each trajectory begins from a random initial condition and undergoes a warmup phase of $T_{\text{warmup}} = 50.0$ time units (5000 timesteps) to reach the chaotic attractor. After warmup, snapshots are recorded for a duration of $T_{\text{max}} = 10.0$ time units at intervals $\Delta t = 0.01$, yielding $n_{\text{snapshots}} = 1001$ snapshots (including the initial state at $t = 0$ of the recording phase). A total of 64 trajectories are generated with different random initial conditions. The fine-grid snapshots at resolution $N = 256$ are coarsened to multiple resolutions via spectral downsampling (truncation in Fourier space followed by inverse FFT). For instance, with resample factor $r = 4$, the snapshots are downsampled to $N = 64$, yielding the coarse-grid reference states $\bar{v}_{\Delta}$ used for closure modeling. The ETDRK4 integrator with fixed timestep $\Delta t = 0.01$ ensures fourth-order temporal accuracy while maintaining exponential stability for the stiff linear operator.

\subsubsection{Decaying Burgers Equation}

\paragraph{Governing Equation.}
We consider the one-dimensional viscous Burgers equation in conservative form without external forcing:
\begin{equation}
\frac{\partial u}{\partial t} + \frac{\partial}{\partial x}\left(\frac{u^2}{2}\right) = \nu \frac{\partial^2 u}{\partial x^2}, \quad x \in [0, L], \quad t > 0,
\end{equation}
where $u(x,t)$ is the velocity field, $\nu = 5 \times 10^{-4}$ is the kinematic viscosity, and $L = 2\pi$ is the domain length with periodic boundary conditions. This configuration exhibits free decay of turbulent kinetic energy, making it a canonical test case for subgrid-scale (SGS) closure modeling.

\paragraph{Initial and Boundary Conditions.}
Periodic boundary conditions are enforced: $u(0, t) = u(L, t)$ for all $t \geq 0$. The initial condition is generated from a broadband energy spectrum following Maulik and San (2018):
\begin{equation}
E(k) = A \, k^4 \exp\left(-\left(\frac{k}{k_0}\right)^2\right), \quad A = \frac{2k_0^{-5}}{3\sqrt{\pi}},
\end{equation}
where $k_0 = 10$ is the peak wavenumber. For each trajectory, the initial velocity field is constructed as:
\begin{equation}
u(x, 0) = \sum_{k=1}^{N/2-1} \sqrt{2 E(k)} \, N \, \cos\left(k \frac{2\pi x}{L} + \phi_k\right),
\end{equation}
where $\phi_k \sim \mathcal{U}(0, 2\pi)$ are independent random phases ensuring a real-valued field. This initialization produces a turbulent state with energy concentrated around intermediate wavenumbers, which subsequently undergoes viscous dissipation without external forcing.

\paragraph{Spatial Discretization.}
The equation is discretized using a pseudo-spectral collocation method on a uniform grid of $N = 2048$ points with grid spacing $\Delta x = L / N$. The solution is represented in Fourier space as $\hat{u}_k(t) = \mathcal{F}[u](k)$, where $k = (2\pi / L) \cdot k_{\text{index}}$ are the wavenumbers. Linear operators (diffusion) are computed exactly in Fourier space:
\begin{equation}
\frac{\partial^2 u}{\partial x^2} \quad \Longleftrightarrow \quad -k^2 \hat{u}_k.
\end{equation}
The nonlinear convective term $\partial(u^2/2)/\partial x$ is evaluated in physical space via the collocation method: $u^2/2$ is computed pointwise, transformed to Fourier space, then multiplied by $ik$. To eliminate aliasing errors from the quadratic nonlinearity, the $2/3$-rule dealiasing is applied: all Fourier modes with $|k_{\text{index}}| > N/3$ are set to zero both before computing the nonlinear term (on input) and after transforming back to Fourier space (on output). This retains approximately $2N/3 + 1 \approx 1366$ modes out of 2048.

\paragraph{Temporal Discretization.}
Time integration employs the third-order Total Variation Diminishing Runge-Kutta method (TVD-RK3) with adaptive timestep selection based on the CFL condition. The TVD-RK3 scheme proceeds via three stages:
\begin{align}
u^{(1)} &= u^n + \Delta t \, \mathcal{L}(u^n), \\
u^{(2)} &= \frac{3}{4} u^n + \frac{1}{4} \left[u^{(1)} + \Delta t \, \mathcal{L}(u^{(1)})\right], \\
u^{n+1} &= \frac{1}{3} u^n + \frac{2}{3} \left[u^{(2)} + \Delta t \, \mathcal{L}(u^{(2)})\right],
\end{align}
where $\mathcal{L}(u)$ denotes the spatial operator including both convective and diffusive terms. At each RK substep, the timestep $\Delta t$ is computed adaptively to satisfy the CFL condition with safety factor $\nu_{\text{CFL}} = 0.4$:
\begin{equation}
\Delta t = \nu_{\text{CFL}} \cdot \min\left(\frac{\Delta x}{\max_i |u_i|}, \frac{\Delta x^2}{2\nu}\right),
\end{equation}
where the first term ensures convective stability and the second term ensures diffusive stability. The timestep is recomputed at every substep to adapt to the instantaneous solution state.

\paragraph{Simulation Protocol.}
Each trajectory evolves from $t = 0$ to $T_{\text{final}} = 0.1$ time units. During this period, $n_{\text{snapshots}} = 1001$ snapshots (including the initial state) are saved at uniform intervals $\Delta t_{\text{outer}} = T_{\text{final}} / (n_{\text{snapshots}} - 1) = 10^{-4}$. A total of 64 trajectories are generated with different random initial phases. The fine-grid snapshots at resolution $N = 2048$ are coarsened to multiple resolutions using two methods: (i) spectral truncation, which retains the lowest $N_{\text{coarse}}$ Fourier modes and performs inverse FFT with renormalization, and (ii) direct subsampling, which selects every $r$-th grid point. For instance, with resample factor $r = 8$, spectral truncation yields $N = 256$ while preserving large-scale structures, whereas subsampling produces the same resolution but may introduce aliasing. The adaptive timestep strategy results in typical internal timesteps in the range $\Delta t \approx 10^{-5}$ to $10^{-4}$, requiring approximately $10^3$ to $10^4$ integration steps per trajectory depending on the instantaneous maximum velocity (typically $\max|u| \approx 2$ to $4$ during the decay phase).

\subsubsection{Forcing Navier--Stokes: Kolmogorov Flow}

\paragraph{Governing Equations.}
We consider the two-dimensional incompressible Navier--Stokes equations with external forcing on a periodic domain:
\begin{align}
\frac{\partial \mathbf{v}}{\partial t} + (\mathbf{v} \cdot \nabla)\mathbf{v} &= -\nabla p + \nu \nabla^2 \mathbf{v} + \mathbf{f}(x, y), \quad (x, y) \in [0, 2\pi] \times [0, 2\pi], \quad t > 0, \\
\nabla \cdot \mathbf{v} &= 0,
\end{align}
where $\mathbf{v} = (u, v)$ is the velocity field, $p$ is the kinematic pressure, $\nu$ is the kinematic viscosity, and $\mathbf{f}$ is the external forcing. We simulate two Reynolds number regimes: $\text{Re} = 1000$ with $\nu = 10^{-3}$ and $\text{Re} = 4000$ with $\nu = 2.5 \times 10^{-4}$, corresponding to moderate and high turbulence intensity respectively. Periodic boundary conditions are enforced in both directions: $\mathbf{v}(0, y, t) = \mathbf{v}(2\pi, y, t)$ and $\mathbf{v}(x, 0, t) = \mathbf{v}(x, 2\pi, t)$ for all $x, y, t$.

\paragraph{Initial and Forcing Conditions.}
The initial velocity field is generated as a filtered random field with maximum amplitude $\|\mathbf{v}\|_{\max} = 7.0$, ensuring a well-defined turbulent state from the onset. A warmup phase of $T_{\text{warmup}} = 40.0$ time units is applied to allow the flow to reach a statistically stationary turbulent state before recording snapshots. The external forcing consists of two components: (i) Kolmogorov forcing $\mathbf{f}_K = (F_0 \sin(k_f y), 0)$ with $F_0 = 1.0$ and $k_f = 4$, which drives energy injection at large scales, and (ii) linear damping $\mathbf{f}_L = \alpha \mathbf{v}$ with $\alpha = -0.1$, which removes energy at the largest scales to prevent unrealistic energy accumulation. The combined forcing $\mathbf{f} = \mathbf{f}_K + \mathbf{f}_L$ maintains a statistically steady turbulent cascade with energy injection at intermediate wavenumbers and dissipation at both large scales (via linear damping) and small scales (via viscosity).

\paragraph{Spatial Discretization.}
The equations are discretized on an Arakawa C-grid with resolution $N \times N = 512 \times 512$ cells and uniform grid spacing $\Delta x = \Delta y = 2\pi / 512$. Velocity components are staggered: the $u$-component is stored at cell faces with offset $(1, 0.5)$ and the $v$-component at faces with offset $(0.5, 1)$, while pressure is stored at cell centers with offset $(0.5, 0.5)$. The convective term $(\mathbf{v} \cdot \nabla)\mathbf{v}$ is computed using the van Leer flux-limiting scheme with TVD property, which applies second-order accurate upwind-biased reconstruction in smooth regions and reverts to first-order upwinding near sharp gradients to maintain monotonicity. The diffusive term $\nu \nabla^2 \mathbf{v}$ is computed using second-order centered finite differences. The incompressibility constraint $\nabla \cdot \mathbf{v} = 0$ is enforced via a fractional-step pressure projection method: after advancing the momentum equation, the velocity field is projected onto the divergence-free subspace by solving the Poisson equation $\nabla^2 \phi = \nabla \cdot \mathbf{v}^*$ for the pressure correction $\phi$, then updating $\mathbf{v}^{n+1} = \mathbf{v}^* - \nabla \phi$. The Poisson equation is solved using the fast diagonalization method, which exploits separability of the Laplacian operator to reduce the problem to a sum of one-dimensional eigenvalue problems, achieving $O(N^2 \log N)$ complexity for the 2D case.

\paragraph{Temporal Discretization.}
Time integration employs a first-order explicit-implicit splitting scheme: the convective and forcing terms are treated explicitly using forward Euler, while the diffusive term is treated implicitly to avoid the restrictive diffusive CFL constraint. The timestep is chosen adaptively based on the CFL condition for advection with safety factor $C_{\max} = 0.5$:
\begin{equation}
\Delta t = C_{\max} \cdot \frac{\Delta x}{\|\mathbf{v}\|_{\max}},
\end{equation}
where $\|\mathbf{v}\|_{\max} = 7.0$ is the maximum velocity magnitude observed in the initial condition. This yields $\Delta t = 0.5 \times (2\pi / 512) / 7.0 \approx 5.59 \times 10^{-4}$. The resulting CFL number $\text{CFL} = \|\mathbf{v}\|_{\max} \Delta t / \Delta x = 0.5$ is maintained throughout the simulation. Each time integration step consists of three substeps: (1) explicit momentum update $\mathbf{v}^* = \mathbf{v}^n + \Delta t [- (\mathbf{v}^n \cdot \nabla)\mathbf{v}^n + \mathbf{f}]$, (2) pressure projection to enforce incompressibility $\mathbf{v}^{**} = \mathcal{P}(\mathbf{v}^*)$, and (3) implicit diffusion solve $(I - \nu \Delta t \nabla^2)\mathbf{v}^{n+1} = \mathbf{v}^{**}$, where $\mathcal{P}$ denotes the projection operator onto divergence-free fields.

\paragraph{Simulation Protocol.}
Each trajectory begins from a random filtered initial condition and undergoes a warmup phase of $T_{\text{warmup}} = 40.0$ time units to reach statistically stationary turbulence. After warmup, snapshots are recorded for a production duration of $T_{\text{production}} \approx 0.0448$ time units. The production phase uses inner batching with $n_{\text{inner}} = 8$ timesteps per saved frame and $n_{\text{outer}} = 1000$ saved frames, yielding a total of $n_{\text{snapshots}} = 1001$ snapshots (including the initial state at $t = 0$ of the production phase) saved at uniform intervals $\Delta t_{\text{frame}} = 8 \times \Delta t \approx 4.47 \times 10^{-3}$. A total of 64 trajectories are generated with different random initial conditions. The fine-grid snapshots at resolution $512 \times 512$ are coarsened to multiple resolutions via the staggered velocity downsampling method, which preserves the divergence-free property: for each velocity component, only values lying on coarse-grid control volume faces are retained, then averaged over the appropriate face area. For instance, with resample factor $r = 16$, the snapshots are downsampled to $32 \times 32$, yielding the coarse-grid reference states $\bar{\mathbf{v}}_{\Delta}$ used for subgrid-scale closure modeling. This conservative downsampling ensures that coarse-grained fields satisfy $\nabla \cdot \bar{\mathbf{v}}_{\Delta} = 0$ exactly when the fine-grid field is divergence-free. The CFL-based adaptive timestep strategy maintains numerical stability throughout the simulation, with the timestep scaling inversely with grid resolution to keep the CFL number constant across different spatial resolutions.

\subsubsection{Decaying Navier--Stokes Turbulence}

\paragraph{Governing Equations.}
We consider the two-dimensional incompressible Navier--Stokes equations without external forcing on a periodic domain:
\begin{align}
\frac{\partial \mathbf{v}}{\partial t} + (\mathbf{v} \cdot \nabla)\mathbf{v} &= -\nabla p + \nu \nabla^2 \mathbf{v}, \quad (x, y) \in [0, 2\pi] \times [0, 2\pi], \quad t > 0, \\
\nabla \cdot \mathbf{v} &= 0,
\end{align}
where $\mathbf{v} = (u, v)$ is the velocity field, $p$ is the kinematic pressure, and $\nu = 10^{-3}$ is the kinematic viscosity, corresponding to an initial Reynolds number $\text{Re} = 1000$. The absence of external forcing results in free decay of kinetic energy, where turbulent structures evolve under the competing effects of nonlinear energy transfer and viscous dissipation. Periodic boundary conditions are enforced in both directions: $\mathbf{v}(0, y, t) = \mathbf{v}(2\pi, y, t)$ and $\mathbf{v}(x, 0, t) = \mathbf{v}(x, 2\pi, t)$ for all $x, y, t$. This configuration serves as a canonical test case for subgrid-scale (SGS) closure modeling in non-stationary flows, where the effective Reynolds number decreases over time as kinetic energy dissipates.

\paragraph{Initial and Forcing Conditions.}
The initial velocity field is generated as a filtered random field with maximum amplitude $\|\mathbf{v}\|_{\max} = 4.2$, producing a turbulent state with energy concentrated at intermediate wavenumbers. A warmup phase of $T_{\text{warmup}} = 4.5$ time units is applied to allow transient instabilities to decay and the flow to settle onto a smooth decay trajectory, ensuring that the recorded production phase represents well-resolved decaying turbulence rather than initialization artifacts. No external forcing is applied ($\mathbf{f} = \mathbf{0}$), so the total kinetic energy $E(t) = \frac{1}{2} \int |\mathbf{v}|^2 \, dx \, dy$ decreases monotonically over time due to viscous dissipation. This decay process is characterized by an inverse energy cascade at large scales and forward enstrophy cascade at small scales, typical of two-dimensional turbulence. The initial Reynolds number $\text{Re}_0 = 1000$ based on the initial maximum velocity and viscosity gradually decreases as the flow decays, providing a time-varying testbed for closure models.

\paragraph{Spatial Discretization.}
The spatial discretization is identical to the forced Kolmogorov flow case: an Arakawa C-grid with resolution $N \times N = 512 \times 512$ cells and uniform grid spacing $\Delta x = \Delta y = 2\pi / 512$. Velocity components are staggered with the $u$-component at offset $(1, 0.5)$ and the $v$-component at offset $(0.5, 1)$, while pressure resides at cell centers with offset $(0.5, 0.5)$. The convective term $(\mathbf{v} \cdot \nabla)\mathbf{v}$ is computed using the van Leer flux-limiting scheme with TVD property, ensuring second-order accuracy in smooth regions and monotonicity preservation near gradients. The diffusive term $\nu \nabla^2 \mathbf{v}$ is computed using second-order centered finite differences. Incompressibility $\nabla \cdot \mathbf{v} = 0$ is enforced via fractional-step pressure projection: the intermediate velocity $\mathbf{v}^*$ is projected onto the divergence-free subspace by solving the Poisson equation $\nabla^2 \phi = \nabla \cdot \mathbf{v}^*$ for the pressure correction $\phi$, then updating $\mathbf{v}^{n+1} = \mathbf{v}^* - \nabla \phi$. The Poisson solve employs the fast diagonalization method exploiting the separability of the Laplacian on periodic domains, achieving $O(N^2 \log N)$ complexity.

\paragraph{Temporal Discretization.}
Time integration employs the same first-order explicit-implicit splitting scheme as the forced case: convective terms are treated explicitly using forward Euler, while diffusive terms are treated implicitly to avoid the restrictive diffusive CFL constraint. The timestep is chosen based on the CFL condition for advection with safety factor $C_{\max} = 0.5$:
\begin{equation}
\Delta t = C_{\max} \cdot \frac{\Delta x}{\|\mathbf{v}\|_{\max}},
\end{equation}
where $\|\mathbf{v}\|_{\max} = 4.2$ is the initial maximum velocity magnitude. This yields $\Delta t \approx 1.46 \times 10^{-3}$, with an initial CFL number $\text{CFL}_0 = 0.5$. As the flow decays and $\|\mathbf{v}\|_{\max}$ decreases, the actual CFL number drops below the initial value, ensuring numerical stability throughout the simulation. Each time integration step consists of three substeps: (1) explicit momentum update $\mathbf{v}^* = \mathbf{v}^n + \Delta t [- (\mathbf{v}^n \cdot \nabla)\mathbf{v}^n]$, (2) pressure projection $\mathbf{v}^{**} = \mathcal{P}(\mathbf{v}^*)$ to enforce incompressibility, and (3) implicit diffusion solve $(I - \nu \Delta t \nabla^2)\mathbf{v}^{n+1} = \mathbf{v}^{**}$, where $\mathcal{P}$ denotes the projection operator onto divergence-free fields.

\paragraph{Simulation Protocol.}
Each trajectory begins from a random filtered initial condition and undergoes a warmup phase of $T_{\text{warmup}} = 4.5$ time units to allow transient dynamics to settle. After warmup, snapshots are recorded for a production duration of $T_{\text{production}} \approx 0.0117$ time units. The production phase uses inner batching with $n_{\text{inner}} = 8$ timesteps per saved frame and $n_{\text{outer}} = 1000$ saved frames, yielding a total of $n_{\text{snapshots}} = 1001$ snapshots (including the initial state at $t = 0$ of the production phase) saved at uniform intervals $\Delta t_{\text{frame}} = 8 \times \Delta t \approx 1.17 \times 10^{-2}$. A total of 64 trajectories are generated with different random initial conditions. The fine-grid snapshots at resolution $512 \times 512$ are coarsened to multiple resolutions via the same staggered velocity downsampling method used for forced flows, which preserves the divergence-free property: for each velocity component, only values on coarse-grid control volume faces are retained and averaged. For instance, with resample factor $r = 16$, the snapshots are downsampled to $32 \times 32$, yielding coarse-grid reference states $\bar{\mathbf{v}}_{\Delta}$ for subgrid-scale closure modeling. During the production phase, typical kinetic energy decay ranges from 5\% to 15\% depending on the trajectory, with the effective Reynolds number decreasing correspondingly. This decaying turbulence configuration provides a complementary testbed to the statistically stationary forced case, allowing assessment of closure model performance in time-evolving flows where the underlying turbulence statistics are non-stationary.

\subsection{Model Specifications}
\label{sec:model_specs}

\subsubsection{Models for 1D Benchmarks}

\paragraph{SINO for 1D Benchmarks.}
We employ a Scale-Invariant Neural Operator (SINO) architecture that operates on coarse-resolution inputs of shape $(N, C_{\text{in}})$ where $N$ is spatial resolution and $C_{\text{in}}$ is the number of input channels. The model consists of a lifting layer that projects inputs to hidden dimension $D=16$, followed by $L=2$ SINO blocks, and a projection layer that maps back to output channels. Each SINO block comprises dual branches: a frequency branch that applies spectral convolution via MLP-generated complex filters in Fourier space, and a spatial branch that performs circular convolution with MLP-generated kernels of size $k=9$. Both branches use coordinate-based MLPs with two hidden layers of width $W_{\text{freq}}=8$ and $W_{\text{spatial}}=8$ respectively, controlling the expressiveness of frequency and spatial representations. The penultimate layers of these MLPs have dimensions $N_{\text{freq}}=2$ and $N_{\text{spatial}}=2$. Features from both branches are concatenated and fused via a linear layer, with residual connections across blocks. Physical normalization is applied to both spatial coordinates and spectral frequencies to ensure scale invariance across different resolutions. The output layer uses small initialization (standard deviation $0.01$) to ensure training stability in early iterations. All activations use ReLU, and all spatial convolutions respect periodic boundary conditions through circular padding.

\paragraph{UNet for 1D Benchmarks.}
We employ a two-level U-Net architecture for learning subgrid-scale corrections in one-dimensional problems. The network operates on input feature maps of shape $(C_{\text{in}}, N)$ where $C_{\text{in}}$ denotes the number of input channels (typically 1 for velocity $u$) and $N$ is the spatial resolution. The encoder path downsamples the input twice via max pooling (factor of 2 each), progressively increasing channels from $C_{\text{in}}$ to $C_{\text{max}}/4$ to $C_{\text{max}}/2$, with each encoder stage consisting of two convolutions (kernel size $k=9$) followed by ReLU activations. The bottleneck layer operates at resolution $N/4$ with $C_{\text{max}} = 16$ channels. The decoder path mirrors the encoder with transposed convolutions for upsampling (stride 2, kernel size 2) and skip connections that concatenate encoder features before each decoder stage. All convolutions use circular padding to respect periodic boundary conditions. A final $1 \times 1$ convolution maps to a single output channel (the correction term $\tau$) with small initialization (standard deviation $0.01$) to ensure training stability in early iterations. The architecture follows the channel progression $1 \to C_{\text{max}}/4 \to C_{\text{max}}/2 \to C_{\text{max}} \to C_{\text{max}}/2 \to C_{\text{max}}/4 \to 1$ and spatial progression $N \to N/2 \to N/4 \to N/2 \to N$, requiring $N$ divisible by 4.

\paragraph{DeepONet for 1D Benchmarks.}
We employ a Deep Operator Network (DeepONet) that learns operator mappings through a decomposed architecture separating input functions from output coordinates. The model consists of two parallel subnetworks: a branch network that processes input function samples $\mathbf{u} \in \mathbb{R}^{m}$ at $m=32$ fixed sensor locations, and a trunk network that processes output coordinates $\mathbf{y}$ augmented with Fourier features. Both networks are MLPs with hidden dimension $D=32$, comprising $L_{\text{branch}}=4$ and $L_{\text{trunk}}=4$ layers respectively, using hyperbolic tangent activations. The trunk network input is enhanced with $n_{\text{Fourier}}=4$ frequency terms, forming a feature vector $[\mathbf{y}/L, 1, \cos(2\pi\mathbf{y}/L), \sin(2\pi\mathbf{y}/L), \ldots, \cos(8\pi\mathbf{y}/L), \sin(8\pi\mathbf{y}/L)]$ of dimension $1 + 1 + 2 \times 4 = 9$, where $L$ denotes the domain length. The branch network maps sensor readings to feature space $\mathbf{b} = \phi_{\text{branch}}(\mathbf{u}) \in \mathbb{R}^{p}$ with $p=32$ features, while the trunk network independently maps augmented coordinates to $\mathbf{t}(\mathbf{y}) = \phi_{\text{trunk}}(\tilde{\mathbf{y}}) \in \mathbb{R}^{p}$. The final output is computed as the inner product $G(\mathbf{u})(\mathbf{y}) = \langle \mathbf{b}, \mathbf{t}(\mathbf{y}) \rangle + b_0$, where $b_0$ is a learnable bias. The output layer uses small initialization (standard deviation $0.01$) to ensure training stability in early iterations. This architecture enables resolution-independent predictions and naturally supports periodic boundary conditions through Fourier coordinate encoding.

\paragraph{Transolver for 1DBenchmarks.}
We employ a Transformer-based operator network (Transolver) that incorporates physics-aware attention mechanisms for learning PDE solutions. The model operates on input feature maps of shape $(C_{\text{in}}, N)$ where $C_{\text{in}}=2$ includes the velocity field and spatial coordinates, and $N$ is the spatial resolution. The architecture begins with a lifting layer that maps inputs to hidden dimension $D=16$, followed by $L=2$ Transolver blocks. Each block implements a three-step physics attention mechanism: (1) \textit{Slice} adaptively groups $N$ grid points into $G=32$ learnable slice tokens via temperature-scaled softmax weights, aggregating local features into macroscopic patterns; (2) \textit{Self-Attention} applies multi-head attention ($H=8$ heads) among slice tokens to capture global interactions, where query, key, and value projections operate on head dimension $d_h = D/H = 2$; (3) \textit{Deslice} redistributes attended slice token information back to original grid points using the same grouping weights. Each block includes residual connections and a feed-forward network with expansion ratio $r=4$ (hidden dimension $4D=64$), using GELU activations and layer normalization. The final projection layers map from hidden dimension through an intermediate layer of 128 channels to the output dimension $C_{\text{out}}=1$. The output layer uses small initialization (standard deviation $0.01$) to ensure training stability in early iterations. This architecture enables adaptive spatial receptive fields through learned grouping while maintaining global awareness through attention mechanisms.

\paragraph{OFormer for 1D Benchmarks.}
We employ an Operator Transformer (OFormer) that learns operator mappings through a four-stage encoder-cross-decoder architecture with Galerkin-type linear attention. The model processes input functions $\mathbf{u}(\mathbf{x})$ sampled at $N$ points with feature dimension $C_{\text{in}}$ and outputs predictions at arbitrary query locations. The architecture begins with an encoder that lifts inputs to hidden dimension $D=16$ via a linear layer with GELU activation. The encoded features $\mathbf{z}_{\text{enc}}$ are then processed by a cross-attention module that extracts relevant information from input locations to query points using $H=8$ attention heads with head dimension $d_h = D/H = 2$. This cross-attention employs rotary position embeddings (RoPE) with minimum frequency $1/64$ to encode spatial relationships, and applies Galerkin-type normalization through instance normalization across spatial dimensions for both keys and values. The attended features $\mathbf{z}_{\text{cross}}$ are refined through $L=2$ propagator layers, each consisting of pre-normalized linear self-attention with residual connections followed by a feed-forward network with hidden dimension $2D=32$ and GELU activation. Linear attention achieves $\mathcal{O}(N)$ complexity by computing $\mathbf{Q}(\mathbf{K}^\top\mathbf{V})$ instead of $\text{softmax}(\mathbf{QK}^\top)\mathbf{V}$, with scaling factor $1/d_h$ for numerical stability. The decoder projects refined features from hidden dimension to output channels $C_{\text{out}}$ using a linear layer with small initialization (standard deviation $0.01$) to ensure training stability in early iterations. RoPE enables the model to generalize to arbitrary query point distributions while maintaining spatial awareness through frequency-based position encoding.

\paragraph{Galerkin Transformer for 1D Benchmarks.}
We employ a Galerkin Transformer that learns integral operators through linear attention mechanisms with $\mathcal{O}(N)$ computational complexity. The model processes input physical quantities $\mathbf{u}(\mathbf{x})$ sampled at $N$ spatial locations with feature dimension $C_{\text{in}}$. The architecture begins with a lifting layer that maps inputs pointwise to hidden dimension $D=16$ via a linear transformation. The lifted features are then refined through $L=2$ encoder layers, each implementing a Galerkin attention mechanism followed by a position-wise feed-forward network with residual connections and layer normalization. The Galerkin attention employs $H=8$ heads with head dimension $d_h = D/H = 2$, computing linear attention as $\text{Attention}(\mathbf{Q}, \mathbf{K}, \mathbf{V}) = \mathbf{Q}(\mathbf{K}^\top\mathbf{V})/N$, where the matrix product $\mathbf{K}^\top\mathbf{V} \in \mathbb{R}^{d_h \times d_h}$ aggregates global information independently of sequence length. This formulation avoids the quadratic complexity of softmax-based attention while maintaining global receptive fields across all spatial locations. Each encoder layer includes a feed-forward network with hidden dimension $D_{\text{FFN}}=512$ and ReLU activation. The refined features are decoded through a pointwise regressor consisting of two linear layers with ReLU activation, projecting from hidden dimension through an intermediate layer of dimension $D$ to output channels $C_{\text{out}}$. The output layer uses small initialization (standard deviation $0.01$) to ensure training stability in early iterations. This architecture enables efficient global information propagation while maintaining interpretability through its connection to Galerkin projection methods in numerical PDEs.

\paragraph{Fourier Neural Operator for 1D Benchmarks.}
We employ a Fourier Neural Operator (FNO) that learns integral operators in the frequency domain, providing resolution-invariant and globally receptive mappings. The model processes input functions $\mathbf{u}(\mathbf{x})$ augmented with normalized spatial coordinates, forming a feature vector with dimension $C_{\text{in}}=2$ at $N$ grid points. The architecture begins with a lifting layer that pointwise projects inputs to hidden dimension $D=16$ via a linear transformation. The lifted features are then refined through $L=2$ Fourier layers, each implementing a parallel combination of spectral and spatial branches. The spectral branch computes $\mathcal{K}(\mathbf{v}) = \mathcal{F}^{-1}(\mathbf{R}_\ell \cdot \mathcal{F}(\mathbf{v})_{|k < K})$, where $\mathcal{F}$ denotes the real-valued Fast Fourier Transform (FFT), $\mathbf{R}_\ell \in \mathbb{C}^{D \times D \times K}$ are learnable complex weights with $K=16$ preserved low-frequency modes, and high-frequency components beyond mode $K$ are implicitly truncated to zero, providing spectral regularization. This truncation restricts the learned operator to spatial scales larger than $L/K$, where $L$ is the domain length. The spatial branch applies a $1 \times 1$ convolution (pointwise linear transformation) $\mathbf{W}_\ell \in \mathbb{R}^{D \times D}$ to capture local residual corrections. Each Fourier layer computes $\mathbf{v}_{\ell+1} = \sigma(\mathbf{W}_\ell \mathbf{v}_\ell + \mathcal{K}_\ell(\mathbf{v}_\ell))$, where GELU activation $\sigma$ is applied to the summed output for all but the final layer. The refined features are decoded through a two-layer pointwise MLP with fixed intermediate dimension $128$: the output projection applies $D \to 128 \to C_{\text{out}}$ with GELU activation between layers and small initialization (standard deviation $0.01$) on the final layer for training stability. The frequency-domain formulation achieves global receptive fields covering the entire spatial domain in each layer, while the modes parameter $K$ controls the frequency bandwidth, balancing expressive power against overfitting to high-frequency noise.

\paragraph{U-Net Enhanced Fourier Neural Operator for 1D Benchmarks.}
We employ a U-Net enhanced Fourier Neural Operator (UFNO) that combines the global receptive fields of FNO with the local multiscale feature extraction capabilities of U-Net. The model processes input functions $\mathbf{u}(\mathbf{x})$ augmented with normalized spatial coordinates, forming feature vectors with dimension $C_{\text{in}}=2$ at $N$ grid points. The architecture begins with a lifting layer that pointwise projects inputs to hidden dimension $D=16$ via a linear transformation. The lifted features are refined through $L=2$ UFNO layers, each implementing a three-branch parallel architecture: $\mathbf{v}_{\ell+1} = \sigma(\mathcal{K}_\ell(\mathbf{v}_\ell) + \mathbf{W}_\ell(\mathbf{v}_\ell) + \mathcal{U}_\ell(\mathbf{v}_\ell))$. The spectral branch $\mathcal{K}_\ell$ computes frequency-domain convolutions $\mathcal{F}^{-1}(\mathbf{R}_\ell \cdot \mathcal{F}(\mathbf{v})_{|k < K})$ with $K=16$ preserved modes, capturing global dependencies through truncated Fourier transforms. The spatial branch $\mathbf{W}_\ell$ applies $1 \times 1$ convolutions for channel mixing and local residuals. The U-Net branch $\mathcal{U}_\ell$ implements a lightweight encoder-decoder architecture with one downsampling layer (factor 2), circular padding for periodic boundary compatibility, and skip connections that preserve multiscale spatial information from resolution $N$ to $N/4$, enabling fine-grained local feature refinement. We employ a staged activation strategy controlled by parameter $\ell_{\text{start}}=1$: layers $\ell < \ell_{\text{start}}$ use only spectral and spatial branches (FNO-only mode) for rapid coarse feature extraction, while layers $\ell \geq \ell_{\text{start}}$ incorporate the U-Net branch for detailed multiscale refinement, balancing computational efficiency with expressive power. GELU activation $\sigma$ is applied to all but the final layer. The refined features are decoded through a two-layer pointwise MLP projecting from hidden dimension through intermediate dimension $128$ to output channels $C_{\text{out}}$ with GELU activation and small initialization (standard deviation $0.01$). The U-Net refiner adds approximately $3D^2$ parameters per activated layer, while kernel size $k=9$ controls its local receptive field. This hybrid architecture combines FNO's frequency-domain global propagation with U-Net's spatial-domain multiscale analysis, enabling simultaneous capture of long-range dependencies and localized structures.

\paragraph{Amortized Fourier Neural Operator for 1D Benchmarks.}
We employ an Amortized Fourier Neural Operator (AMFNO) that dynamically generates frequency-domain convolution kernels through multi-layer perceptrons, enabling frequency-adaptive PDE solving. The model processes input functions $\mathbf{u}(\mathbf{x})$ at $N$ grid points, automatically augmenting them with normalized spatial coordinates to form feature vectors of dimension $C_{\text{in}}+1=2$. The architecture begins with a lifting layer that pointwise projects augmented inputs to hidden dimension $D=16$ via a linear transformation. The lifted features are refined through $L=2$ AMFNO layers, each implementing a dual-branch architecture with residual connections: $\mathbf{v}_{\ell+1} = \mathbf{v}_\ell + \mathcal{K}_{\text{MLP}}(\mathbf{v}_\ell) + \mathbf{W}_{\text{MLP}}(\mathbf{v}_\ell)$, where GELU activation $\sigma(\cdot)$ is applied to the summed output for all but the final layer. The spectral branch $\mathcal{K}_{\text{MLP}}$ replaces the fixed complex weights of standard FNO with dynamically generated kernels: for each frequency mode $\omega_k$ in the discrete Fourier spectrum, we encode the normalized frequency coordinate as a low-dimensional feature representation. Two separate MLPs with hidden dimension $4$ then map this frequency encoding to real and imaginary components of the convolution kernel: $K(\omega) = \text{MLP}_r(\omega) + i \cdot \text{MLP}_i(\omega) \in \mathbb{C}^{D \times D}$, producing mode-specific transformations across all $N/2+1$ frequencies in the real FFT spectrum. This dynamically generated kernel performs frequency-domain convolution $\mathcal{F}^{-1}(K(\omega) \cdot \mathcal{F}(\mathbf{v}))$, where $\mathcal{F}$ denotes the real-valued Fast Fourier Transform. The spatial branch $\mathbf{W}_{\text{MLP}}$ applies a two-layer pointwise MLP with hidden dimension $4D=64$ and GELU activation for local feature mixing. The refined features are decoded through a single linear projection from hidden dimension $D$ to output channels $C_{\text{out}}=1$. This frequency-adaptive kernel generation enables superior resolution generalization compared to fixed-kernel FNO, where each frequency mode receives a tailored transformation learned from data rather than using predetermined spectral truncation.

\subsubsection{Models for 2D Benchmarks}

\paragraph{SINO for 2D Benchmarks.}
We employ a Scale-Invariant Neural Operator (SINO) architecture that operates on coarse-resolution inputs of shape $(H, W, C_{\text{in}})$ where $H$ and $W$ are spatial resolutions and $C_{\text{in}}$ is the number of input channels. The model consists of a lifting layer that projects inputs to hidden dimension $D=32$, followed by $L=4$ SINO blocks, and a projection layer that maps back to output channels. Each SINO block comprises dual branches: a frequency branch that applies spectral convolution via MLP-generated complex filters in Fourier space using the 2D real FFT (rfft2), and a spatial branch that performs circular convolution with MLP-generated kernels of size $k=9 \times 9$. Both branches use coordinate-based MLPs with two hidden layers of width $w_{\text{freq}}=32$ and $w_{\text{spatial}}=32$ respectively, controlling the expressiveness of frequency and spatial representations. The penultimate layers of these MLPs have dimensions $n_{\text{freq}}=2$ and $n_{\text{spatial}}=2$, forming bottleneck layers that induce low-rank parameterization. For the frequency branch, physically normalized frequency coordinates $(\omega_y, \omega_x) \in [-1,1]^2$ are computed by mapping discrete wavenumbers to normalized ranges for each mode in the half-spectrum: $\xi_y = 4k_y/H - 1$ spanning the full Y-spectrum and $\xi_x = 4k_x/W - 1$ covering the non-negative X-half. The MLP generates mode-specific complex filters $K(\omega) = \text{MLP}_r(\omega) + i \cdot \text{MLP}_i(\omega) \in \mathbb{C}^{D \times D}$ across all $H \times (W/2+1)$ frequencies, with output normalization by $1/\sqrt{D}$ and percentile-based clipping for numerical stability. A bypass connection via a separate linear layer is added to the spectral output. For the spatial branch, normalized relative displacement coordinates $\zeta = (\zeta_y, \zeta_x) \in [-1, 1]^2$ are generated by mapping kernel offsets to normalized ranges independently in both dimensions, producing position-dependent convolution weights that respect periodic boundary conditions through circular padding. Features from both branches are concatenated and fused via a linear layer, with residual connections scaled by $0.1$ across blocks to prevent gradient explosion: $h^{(\ell+1)} = h^{(\ell)} + 0.1 \cdot W_{\text{fuse}}^{(\ell)} [h_{\text{freq}}^{(\ell)} \, ; \, h_{\text{spatial}}^{(\ell)}]$. Physical normalization is applied to both spatial coordinates and spectral frequencies to ensure scale invariance across different resolutions. The output layer uses small initialization (standard deviation $0.01$) to ensure training stability in early iterations. All activations use ReLU, and all spatial convolutions respect periodic boundary conditions through circular padding.

\paragraph{UNet for 2D Benchmarks.}
We employ a two-level U-Net architecture for learning subgrid-scale corrections in two-dimensional problems. The network operates on input feature maps of shape $(C_{\text{in}}, H, W)$ where $C_{\text{in}}$ denotes the number of input channels (typically 2 for velocity components $[u, v]$), $H$ is the height resolution, and $W$ is the width resolution. The encoder path downsamples the input twice via max pooling (factor of 2 each), progressively increasing channels from $C_{\text{in}}$ to $C_{\text{init}}$ to $C_{\text{init}} \times 2$, with each encoder stage consisting of two convolutions (kernel size $k=9$) followed by ReLU activations. The bottleneck layer operates at resolution $(H/4, W/4)$ with $C_{\text{max}} = 16$ channels. The decoder path mirrors the encoder with transposed convolutions for upsampling (stride 2, kernel size 2) and skip connections that concatenate encoder features before each decoder stage. All convolutions use circular padding to respect periodic boundary conditions. A final $1 \times 1$ convolution maps to output channels (the correction term $[\tau_u, \tau_v]$) with small initialization (standard deviation $0.01$) to ensure training stability in early iterations. The architecture follows the channel progression $2 \to C_{\text{init}} \to C_{\text{init}} \times 2 \to C_{\text{max}} \to C_{\text{init}} \times 2 \to C_{\text{init}} \to 2$ and spatial progression $(H, W) \to (H/2, W/2) \to (H/4, W/4) \to (H/2, W/2) \to (H, W)$, requiring both $H$ and $W$ divisible by 4.

\paragraph{DeepONet for 2D Benchmarks.}
We employ the same Deep Operator Network (DeepONet) architecture as in the 1D case, with input and output dimensions modified to accommodate two-dimensional velocity fields. The model processes input functions $[u, v]$ sampled on a fixed sensor grid of shape $(N_x, N_y) = (32, 32)$, flattened to dimension $m = 32 \times 32 \times 2 = 2048$ for the branch network. The trunk network processes two-dimensional output coordinates $(x, y)$ augmented with isotropic Fourier features of dimension $3 + 4 \times n_{\text{Fourier}} = 19$, where $n_{\text{Fourier}} = 4$. Both branch and trunk networks remain MLPs with hidden dimension $D=32$, $L_{\text{branch}}=4$ and $L_{\text{trunk}}=4$ layers, and hyperbolic tangent activations. The branch network maps sensor readings to feature space $\mathbf{b} = \phi_{\text{branch}}(\mathbf{u}) \in \mathbb{R}^{p}$ with $p=32$ features, while the trunk network maps augmented coordinates to $\mathbf{t}(x, y) = \phi_{\text{trunk}}(\tilde{x}, \tilde{y}) \in \mathbb{R}^{p}$. The final output is computed as $G(\mathbf{u})(x, y) = \langle \mathbf{b}, \mathbf{t}(x, y) \rangle + b_0$ with small initialization (standard deviation $0.01$) on the output layer. The 2D Fourier encoding naturally supports periodic boundary conditions in both spatial directions while maintaining resolution-independent predictions.

\paragraph{Transolver for 2D Benchmarks.}
We employ the same Transformer-based operator network (Transolver) architecture as in the 1D case, with input feature maps of shape $(C_{\text{in}}, H, W)$ where $C_{\text{in}}=4$ includes the two-dimensional velocity field $[u, v]$ and normalized spatial coordinates $[x/L_x, y/L_y]$. The architecture begins with a lifting layer that maps inputs to hidden dimension $D=32$, followed by $L=4$ Transolver blocks. Each block implements the same three-step physics attention mechanism adapted for 2D grids: (1) \textit{Slice} adaptively groups $N=H \times W$ grid points into $G=32$ learnable slice tokens via temperature-scaled softmax weights ($\tau=0.5$), aggregating local features into macroscopic patterns; (2) \textit{Self-Attention} applies multi-head attention ($H=8$ heads) among slice tokens with head dimension $d_h = D/H = 4$; (3) \textit{Deslice} redistributes attended information back to original grid points using the same grouping weights. We use linear projections for pointwise feature transformations, ensuring applicability to arbitrary geometries while maintaining parameter efficiency. Each block includes residual connections and a feed-forward network with expansion ratio $r=4$ (hidden dimension $4D=128$), using GELU activations and layer normalization. The final projection layers map from hidden dimension through an intermediate layer of 128 channels to output dimension $C_{\text{out}}=2$, with small initialization (standard deviation $0.01$) on the output layer. This architecture enables adaptive spatial receptive fields through learned grouping while maintaining global awareness through attention mechanisms.

\paragraph{OFormer for 2D Benchmarks.}
We employ the same Operator Transformer (OFormer) architecture as in the 1D case, implementing a four-stage encoder-cross-decoder framework with Galerkin-type linear attention for two-dimensional spatial domains. The model processes input functions $\mathbf{u}(\mathbf{x})$ sampled at $N=H \times W$ points with feature dimension $C_{\text{in}}=4$ (including velocity field $[u, v]$ and normalized coordinates $[x/L_x, y/L_y]$) and outputs predictions at arbitrary query locations in 2D space. The encoder lifts inputs to hidden dimension $D=32$ via a linear layer with GELU activation. The encoded features $\mathbf{z}_{\text{enc}}$ are then processed by a cross-attention module that extracts relevant information from input locations to query points using $H=8$ attention heads with head dimension $d_h = D/H = 4$. This cross-attention employs two-dimensional rotary position embeddings (RoPE) that apply separate frequency-based encodings to $x$ and $y$ coordinates with scale factor $16.0$ and minimum frequency $1/64$, naturally extending the 1D rotation mechanism to capture spatial relationships in the plane. Galerkin-type normalization is applied through instance normalization across spatial dimensions for both keys and values. The attended features $\mathbf{z}_{\text{cross}}$ are refined through $L=4$ propagator layers, each consisting of pre-normalized linear self-attention with residual connections followed by a feed-forward network with hidden dimension $2D=64$ and GELU activation. Linear attention achieves $\mathcal{O}(N)$ complexity by computing $\mathbf{Q}(\mathbf{K}^\top\mathbf{V})$ instead of $\text{softmax}(\mathbf{QK}^\top)\mathbf{V}$, with scaling factor $1/d_h$ for numerical stability. The decoder projects refined features from hidden dimension to output channels $C_{\text{out}}=2$ using a linear layer with small initialization (standard deviation $0.01$). The 2D RoPE extension enables the model to generalize to arbitrary query point distributions in the plane while maintaining spatial awareness through frequency-based position encoding in both spatial directions.

\paragraph{Galerkin Transformer for 2D Benchmarks.}
We employ the same Galerkin Transformer architecture as in the 1D case, learning integral operators through linear attention mechanisms with $\mathcal{O}(N)$ computational complexity for two-dimensional spatial domains. The model processes input physical quantities $\mathbf{u}(\mathbf{x})$ sampled at $N=H \times W$ spatial locations with feature dimension $C_{\text{in}}=4$ (including velocity field $[u, v]$ and normalized coordinates $[x/L_x, y/L_y]$). The 2D grid is flattened into a sequence of length $N$ for processing. The architecture begins with a lifting layer that maps inputs pointwise to hidden dimension $D=32$ via a linear transformation. The lifted features are then refined through $L=4$ encoder layers, each implementing a Galerkin attention mechanism followed by a position-wise feed-forward network with residual connections and layer normalization. The Galerkin attention employs $H=8$ heads with head dimension $d_h = D/H = 4$, computing linear attention as $\text{Attention}(\mathbf{Q}, \mathbf{K}, \mathbf{V}) = \mathbf{Q}(\mathbf{K}^\top\mathbf{V})/N$, where the matrix product $\mathbf{K}^\top\mathbf{V} \in \mathbb{R}^{d_h \times d_h}$ aggregates global information independently of sequence length. This formulation avoids the quadratic complexity of softmax-based attention while maintaining global receptive fields across all spatial locations in the 2D domain. Each encoder layer includes a feed-forward network with hidden dimension $D_{\text{FFN}}=512$ and ReLU activation. The refined features are decoded through a pointwise regressor consisting of two linear layers with ReLU activation, projecting from hidden dimension through an intermediate layer of dimension $D$ to output channels $C_{\text{out}}=2$. The output layer uses small initialization (standard deviation $0.01$) to ensure training stability in early iterations. This architecture enables efficient global information propagation across the entire 2D spatial domain while maintaining interpretability through its connection to Galerkin projection methods in numerical PDEs.

\paragraph{Fourier Neural Operator for 2D Benchmarks.}
We employ a Fourier Neural Operator (FNO) that learns integral operators in the frequency domain, providing resolution-invariant and globally receptive mappings for two-dimensional spatial domains. The model processes input functions $\mathbf{u}(\mathbf{x})$ augmented with normalized spatial coordinates, forming a feature vector with dimension $C_{\text{in}}=4$ (including velocity field $[u, v]$ and normalized coordinates $[x/L_x, y/L_y]$) at $N=H \times W$ grid points. The architecture begins with a lifting layer that pointwise projects inputs to hidden dimension $D=32$ via a linear transformation. The lifted features are then refined through $L=4$ Fourier layers, each implementing a parallel combination of spectral and spatial branches. The spectral branch computes $\mathcal{K}(\mathbf{v}) = \mathcal{F}^{-1}(\mathbf{R}_\ell^{(1)} \cdot \mathcal{F}(\mathbf{v})_{0:K_x,:K_y} + \mathbf{R}_\ell^{(2)} \cdot \mathcal{F}(\mathbf{v})_{-K_x:,:K_y})$, where $\mathcal{F}$ denotes the two-dimensional real-valued Fast Fourier Transform (rfft2), $\mathbf{R}_\ell^{(1)}, \mathbf{R}_\ell^{(2)} \in \mathbb{C}^{D \times D \times K_x \times K_y}$ are learnable complex weights with $(K_x, K_y)=(16, 16)$ preserved low-frequency modes in each spatial dimension, and high-frequency components beyond modes $(K_x, K_y)$ are implicitly truncated to zero, providing spectral regularization. The use of two separate weight matrices $\mathbf{R}_\ell^{(1)}$ and $\mathbf{R}_\ell^{(2)}$ for positive and negative frequencies in the $x$-direction enables the model to capture asymmetric spatial patterns while respecting the conjugate symmetry of real-valued signals. This truncation restricts the learned operator to spatial scales larger than $L_x/K_x$ and $L_y/K_y$ in the respective directions. The spatial branch applies a $1 \times 1$ convolution (pointwise linear transformation) $\mathbf{W}_\ell \in \mathbb{R}^{D \times D}$ to capture local residual corrections. Each Fourier layer computes $\mathbf{v}_{\ell+1} = \sigma(\mathbf{W}_\ell \mathbf{v}_\ell + \mathcal{K}_\ell(\mathbf{v}_\ell))$, where GELU activation $\sigma$ is applied to the summed output for all but the final layer. The refined features are decoded through a two-layer pointwise MLP with fixed intermediate dimension $128$: the output projection applies $D \to 128 \to C_{\text{out}}$ with GELU activation between layers and small initialization (standard deviation $0.01$) on the final layer for training stability. The frequency-domain formulation achieves global receptive fields covering the entire two-dimensional spatial domain in each layer, while the modes parameters $(K_x, K_y)$ control the frequency bandwidth in each direction, balancing expressive power against overfitting to high-frequency noise and enabling anisotropic frequency resolution for problems with directional characteristics.

\paragraph{U-Net Enhanced Fourier Neural Operator for 2D Benchmarks.}
We employ a U-Net enhanced Fourier Neural Operator (UFNO) that combines the global receptive fields of FNO with the local multiscale feature extraction capabilities of U-Net for two-dimensional spatial domains. The model processes input functions $\mathbf{u}(\mathbf{x})$ augmented with normalized spatial coordinates, forming feature vectors with dimension $C_{\text{in}}=4$ (including velocity field $[u, v]$ and normalized coordinates $[x/L_x, y/L_y]$) at $N=H \times W$ grid points. The architecture begins with a lifting layer that pointwise projects inputs to hidden dimension $D=32$ via a linear transformation. The lifted features are refined through $L=4$ UFNO layers, each implementing a three-branch parallel architecture: $\mathbf{v}_{\ell+1} = \sigma(\mathcal{K}_\ell(\mathbf{v}_\ell) + \mathbf{W}_\ell(\mathbf{v}_\ell) + \mathcal{U}_\ell(\mathbf{v}_\ell))$. The spectral branch $\mathcal{K}_\ell$ computes two-dimensional frequency-domain convolutions $\mathcal{F}^{-1}(\mathbf{R}_\ell^{(1)} \cdot \mathcal{F}(\mathbf{v})_{0:K_x,:K_y} + \mathbf{R}_\ell^{(2)} \cdot \mathcal{F}(\mathbf{v})_{-K_x:,:K_y})$ with $(K_x, K_y)=(16, 16)$ preserved modes in each spatial dimension, capturing global dependencies through truncated Fourier transforms with separate weight matrices for positive and negative frequencies. The spatial branch $\mathbf{W}_\ell$ applies $1 \times 1$ convolutions for channel mixing and local residuals. The U-Net branch $\mathcal{U}_\ell$ implements a lightweight encoder-decoder architecture with three downsampling layers (factors $2, 2, 2$) reaching resolution $H/8 \times W/8$, circular padding for periodic boundary compatibility, and skip connections that preserve multiscale spatial information across resolutions $H \times W$, $H/2 \times W/2$, $H/4 \times W/4$, and $H/8 \times W/8$, enabling fine-grained local feature refinement in both spatial directions. We employ a staged activation strategy controlled by parameter $\ell_{\text{start}}=2$: layers $\ell < \ell_{\text{start}}$ use only spectral and spatial branches (FNO-only mode) for rapid coarse feature extraction, while layers $\ell \geq \ell_{\text{start}}$ incorporate the U-Net branch for detailed multiscale refinement, balancing computational efficiency with expressive power. GELU activation $\sigma$ is applied to all but the final layer. The refined features are decoded through a single $1 \times 1$ convolution projecting from hidden dimension to output channels $C_{\text{out}}=2$ with small initialization (standard deviation $0.01$). The U-Net refiner adds approximately $3D^2$ parameters per activated layer through its convolutional structure, while kernel size $k=3$ controls its local receptive field in both spatial dimensions. This hybrid architecture combines FNO's frequency-domain global propagation with U-Net's spatial-domain multiscale analysis, enabling simultaneous capture of long-range dependencies across the entire 2D domain and localized structures at multiple spatial scales.

\paragraph{Amortized Fourier Neural Operator for 2D Benchmarks.}
We employ an Amortized Fourier Neural Operator (AMFNO) that dynamically generates frequency-domain convolution kernels through multi-layer perceptrons, enabling frequency-adaptive PDE solving for two-dimensional spatial domains. The model processes input functions $\mathbf{u}(\mathbf{x})$ at $N=H \times W$ grid points, automatically augmenting them with normalized spatial coordinates to form feature vectors of dimension $C_{\text{in}}+2=4$ (including velocity field $[u, v]$ and normalized coordinates $[x/L_x, y/L_y]$). The architecture begins with a lifting layer that pointwise projects augmented inputs to hidden dimension $D=32$ via a linear transformation. The lifted features are refined through $L=4$ AMFNO layers, each implementing a dual-branch architecture with residual connections: $\mathbf{v}_{\ell+1} = \mathbf{v}_\ell + \mathcal{K}_{\text{MLP}}(\mathbf{v}_\ell) + \mathbf{W}_{\text{MLP}}(\mathbf{v}_\ell)$, where GELU activation $\sigma(\cdot)$ is applied to the summed output for all but the final layer. The spectral branch $\mathcal{K}_{\text{MLP}}$ replaces the fixed complex weights of standard FNO with dynamically generated kernels through a separable two-dimensional frequency representation: for each frequency mode $(\omega_{k_x}, \omega_{k_y})$ in the discrete two-dimensional Fourier spectrum, we encode the normalized frequency coordinates independently using Chebyshev polynomial bases as low-dimensional feature representations. Four separate MLPs with hidden dimension $4$ then map these frequency encodings to real and imaginary components of separable convolution kernels: $K_x(\omega_{k_x}) = \text{MLP}_{xr}(\omega_{k_x}) + i \cdot \text{MLP}_{xi}(\omega_{k_x})$ and $K_y(\omega_{k_y}) = \text{MLP}_{yr}(\omega_{k_y}) + i \cdot \text{MLP}_{yi}(\omega_{k_y})$, with the final kernel formed through elementwise multiplication $K(\omega_{k_x}, \omega_{k_y}) = K_x(\omega_{k_x}) \odot K_y(\omega_{k_y}) \in \mathbb{C}^{D \times D}$, producing mode-specific transformations across all $H \times (W/2+1)$ frequencies in the real two-dimensional FFT spectrum. This dynamically generated kernel performs frequency-domain convolution $\mathcal{F}^{-1}(K(\omega_{k_x}, \omega_{k_y}) \cdot \mathcal{F}(\mathbf{v}))$, where $\mathcal{F}$ denotes the two-dimensional real-valued Fast Fourier Transform (rfft2). The spatial branch $\mathbf{W}_{\text{MLP}}$ applies a two-layer pointwise MLP with hidden dimension $4D=128$ and GELU activation for local feature mixing across both spatial dimensions. The refined features are decoded through a two-layer pointwise MLP projecting from hidden dimension through intermediate dimension $4D$ to output channels $C_{\text{out}}=2$ with GELU activation. This frequency-adaptive kernel generation with separable low-rank factorization enables superior resolution generalization compared to fixed-kernel FNO, where each frequency mode in both spatial directions receives a tailored transformation learned from data rather than using predetermined spectral truncation.

\subsection{Training Methodology}
\label{sec:training_methodology}

\paragraph{Neural Network Input and Output.}
Following the Indirect Neural Corrector (INC) framework~\citep{wei2026inc}, the neural network learns a closure term $\tau_\Delta$ that serves as a right-hand side correction to the coarse-grid governing equations. Given the current coarse-grid state $\bar{u}_\Delta^n$ at time step $n$, the network predicts the correction term $\tau_\Delta^n = \text{SINO}(\bar{u}_\Delta^n; \Theta)$, which is then incorporated into the time integration scheme. The corrected evolution follows:
\begin{equation}
	\frac{\partial \bar{u}_\Delta}{\partial t} = \mathcal{L}_\Delta(\bar{u}_\Delta) + \tau_\Delta,
\end{equation}
where $\mathcal{L}_\Delta$ denotes the coarse-grid spatial operator (e.g., WENO5 reconstruction for Burgers, pseudo-spectral derivatives for KS, van Leer flux limiting for NS). During rollout training, the predicted correction $\tau_\Delta^n$ is held constant across all substeps within each multi-stage time integrator (e.g., SSP-RK3, ETDRK4), enabling efficient gradient backpropagation while maintaining numerical stability. This formulation decouples the neural operator from the specific discretization scheme, allowing SINO to generalize across different numerical solvers.

\paragraph{Training Objective.}
We train the neural network correction term using a rollout loss with curriculum learning:
\begin{equation}
\mathcal{L}_K(\theta) = \frac{1}{B} \sum_{i=1}^B \frac{1}{K+1} \sum_{k=0}^K \| u_i^k - \hat{u}_i^k(\theta) \|^2
\end{equation}
where $B$ is the batch size, $K$ is the rollout length (curriculum parameter), $u_i^k$ is the ground truth at step $k$ for trajectory $i$, and $\hat{u}_i^k(\theta)$ is the model prediction.

\paragraph{Curriculum Learning.}
The rollout length $K$ grows progressively during training:
\begin{equation}
K(\text{epoch}) = \min\left(K_{\text{start}} + \left\lfloor \frac{\text{epoch}-1}{\Delta_{\text{epoch}}} \right\rfloor \cdot \Delta_K, K_{\text{max}}\right)
\end{equation}
where $K_{\text{start}}$ is the initial rollout length (default: 10 steps), $\Delta_{\text{epoch}}$ is the growth interval (default: every 10 epochs), $\Delta_K$ is the growth increment (default: 15 steps), and $K_{\text{max}}$ is the maximum rollout length corresponding to the training time horizon. This curriculum strategy stabilizes early training by first learning short-term dynamics, then gradually extending to longer time horizons.

\paragraph{Optimization.}
We use the AdamW optimizer with learning rate $\eta = 10^{-3}$ and cosine annealing schedule:
\begin{equation}
\eta(t) = \eta_0 \cdot 0.5 \left(1 + \cos\left(\frac{\pi t}{T}\right)\right)
\end{equation}
where $t$ is the current optimization step and $T$ is the total number of steps. For decaying Burgers turbulence, we apply weight decay $\lambda = 10^{-3}$ for regularization.

\paragraph{Data Normalization.}
For 1D equations (Burgers, KS), we normalize inputs using training set statistics: $u_{\text{norm}} = (u - \mu_{\text{train}}) / \sigma_{\text{train}}$ or $v_{\text{norm}} = (v - \mu_{\text{train}}) / \sigma_{\text{train}}$, where $\mu_{\text{train}}$ and $\sigma_{\text{train}}$ are computed exclusively from the training temporal window to prevent data leakage. The correction term is scaled back as $\tau = \tau_{\text{norm}} \cdot \sigma_{\text{train}}$. For 2D Navier-Stokes equations, we use standard deviation normalization: $\mathbf{v}_{\text{norm}} = \mathbf{v} / \sigma_{\text{train}}$.

\paragraph{Time Integration with Correction Term.}
The neural network learns a right-hand side correction term $\tau$ that is incorporated into the physical solver during rollout training. For time integration, the same correction term computed at the beginning of each time step is reused across all substeps of the multi-stage integrator. This approach reduces computational cost and provides clearer gradient paths during backpropagation through time.

\paragraph{Model Selection.}
We select the best model based on extrapolation performance, measured by the mean squared error on the test temporal window $[T_{\text{train}}, T_{\text{final}}]$.

\paragraph{Training Schedule.}
We train for 300 epochs with batch size 8 trajectories. All tasks use 1001 temporal snapshots: the first frame is the initial condition, the first 30\% (300 frames) form the training interval, and the remaining 70\% (700 frames) are reserved for extrapolation testing. All reported MSE errors are extrapolation test errors.

\paragraph{Time Stepping Configuration.}
We use different time stepping strategies tailored to each benchmark:

\textit{Forcing Burgers:} During the recording phase of $T_{\text{max}} = 10.0$ time units, $n_{\text{snapshots}} = 1001$ snapshots are saved at uniform intervals $\Delta t_{\text{outer}} = 0.01$. The DNS solver employs adaptive timesteps satisfying $\Delta t = \nu_{\text{CFL}} \cdot \min(\Delta x / \max_i |\bar{u}_i|, \Delta x^2 / (2\eta))$ with safety factor $\nu_{\text{CFL}} = 0.4$ and viscosity $\eta = 0.01$.

\textit{Decaying Burgers:} Snapshots are saved for $T_{\text{final}} = 0.1$ time units at intervals $\Delta t_{\text{outer}} = 10^{-4}$, yielding $n_{\text{snapshots}} = 1001$ snapshots. The DNS uses adaptive timesteps $\Delta t = \nu_{\text{CFL}} \cdot \min(\Delta x / \max_i |u_i|, \Delta x^2 / (2\nu))$ with $\nu_{\text{CFL}} = 0.4$ and viscosity $\nu = 5 \times 10^{-4}$.

\textit{Kuramoto-Sivashinsky:} After warmup of $T_{\text{warmup}} = 50.0$ time units, snapshots are recorded for $T_{\text{max}} = 10.0$ time units at intervals $\Delta t = 0.01$ using ETDRK4 integration, yielding $n_{\text{snapshots}} = 1001$ snapshots.

\textit{Forcing Navier-Stokes (Re=1000):} After warmup of $T_{\text{warmup}} = 40.0$ time units, snapshots are recorded for production duration $T_{\text{production}} \approx 0.0448$ time units. The DNS employs inner batching with $n_{\text{inner}} = 8$ timesteps per saved frame and $n_{\text{outer}} = 1000$ saved frames, yielding $n_{\text{snapshots}} = 1001$ snapshots at intervals $\Delta t_{\text{frame}} = 8 \times \Delta t$, where $\Delta t = C_{\max} \cdot \Delta x / \|\mathbf{v}\|_{\max}$ with $C_{\max} = 0.5$ and $\|\mathbf{v}\|_{\max} = 7.0$.

\textit{Forcing Navier-Stokes (Re=4000):} Identical configuration to Re=1000 case, but with viscosity $\nu = 2.5 \times 10^{-4}$ and the same maximum velocity $\|\mathbf{v}\|_{\max} = 7.0$.

\textit{Decaying Navier-Stokes:} After warmup of $T_{\text{warmup}} = 4.5$ time units, snapshots are recorded for production duration $T_{\text{production}} \approx 0.0117$ time units. The DNS uses $n_{\text{inner}} = 8$ timesteps per frame and $n_{\text{outer}} = 1000$ frames, yielding $n_{\text{snapshots}} = 1001$ snapshots at intervals $\Delta t_{\text{frame}} = 8 \times \Delta t$, where $\Delta t = C_{\max} \cdot \Delta x / \|\mathbf{v}\|_{\max}$ with $C_{\max} = 0.5$ and initial maximum velocity $\|\mathbf{v}\|_{\max} = 4.2$.

\paragraph{Random Seeds and Reproducibility.}
All 1D and 2D PDEs use seeds 0--63 to generate 64 trajectories via the JAX random number generator \texttt{jax.random.PRNGKey(seed)} for initializing velocity fields. Two trajectories were excluded due to numerical instabilities caused by extreme initial values: Decaying NS trajectory 61 (seed=61) and Taylor-Green NS trajectory 61 (seed=61). All other benchmarks use the complete set of 64 trajectories. For neural network training, each model configuration uses three independent random seeds for weight initialization (\texttt{model\_seed} $\in \{1, 2, 3\}$) to assess variance across random initializations, while the data loading seed (\texttt{seed}) is fixed at 42 for all experiments to ensure identical train-test splits. This design separates data randomness from model randomness, enabling fair comparison across architectures.

\paragraph{Computational Environment.}
All experiments were conducted on a workstation with Ubuntu 22.04, equipped with a single NVIDIA RTX 5090 GPU (32GB VRAM), 25 vCPU Intel Xeon Gold 6459C processor, Python 3.12, PyTorch 2.12.1, and CUDA 13.0. The JAX-based PDE solvers and neural network training utilized JIT compilation and automatic differentiation for computational efficiency.

\subsection{Theoretical Foundations: Low-Rank Structure and Lipschitz Regularity}
\label{sec:theory}

We provide mathematical analysis demonstrating that SINO's implicit kernel representation via bottleneck hypernetworks induces two fundamental properties: (1) explicit low-rank parameterization that aligns with the physical structure of closure operators, and (2) Lipschitz continuity that ensures smooth kernel variations and prevents overfitting to high-frequency noise. These properties jointly explain SINO's superior parameter efficiency and data efficiency in turbulent closure modeling. Our analysis establishes the low-rank structure of the learned kernels and quantitative bounds for operator norms, providing theoretical foundations for the empirical performance demonstrated in Section 4.

\subsubsection{Bottleneck-Induced Subspace Constraint}

\textbf{Architecture.} The frequency hypernetwork $\Phi_\omega: \mathbb{R}^2 \to \mathbb{C}^{D \times D}$ implements a three-layer bottleneck architecture:
\begin{equation}
	\xi \xrightarrow{W_1^{(f)}} \mathbb{R}^{w_f} \xrightarrow{\sigma} \mathbb{R}^{w_f} \xrightarrow{W_2^{(f)}} \mathbb{R}^{w_f} \xrightarrow{\sigma} \mathbb{R}^{w_f} \xrightarrow{W_3^{(f)}} \mathbb{R}^{n_f} \xrightarrow{\sigma} \mathbb{R}^{n_f} \xrightarrow{[W_r, W_i]} \mathbb{C}^{D \times D},
\end{equation}
where $\xi \in \mathbb{R}^2$ denotes normalized frequency coordinates, $W_1^{(f)} \in \mathbb{R}^{w_f \times 2}$, $W_2^{(f)} \in \mathbb{R}^{w_f \times w_f}$, $W_3^{(f)} \in \mathbb{R}^{n_f \times w_f}$ is the bottleneck layer with $n_f \ll w_f$, $W_r, W_i \in \mathbb{R}^{D^2 \times n_f}$ are output projection matrices (each column representing a vectorized $D \times D$ matrix), and $\sigma = \max(0, \cdot)$ is the ReLU activation function. The spatial hypernetwork $\Phi_x: \mathbb{R}^2 \to \mathbb{R}^{D \times D}$ follows an identical architecture with parameters $(W_1^{(s)} \in \mathbb{R}^{w_s \times 2}, W_2^{(s)} \in \mathbb{R}^{w_s \times w_s}, W_3^{(s)} \in \mathbb{R}^{n_s \times w_s}, W_s \in \mathbb{R}^{D^2 \times n_s})$ and bottleneck dimension $n_s \ll w_s$.

\begin{theorem}[Explicit Rank Constraint]
	\label{thm:explicit_rank_constraint}
	Let $z_\omega(\xi) = \sigma(W_3^{(f)} \sigma(W_2^{(f)} \sigma(W_1^{(f)} \xi))) \in \mathbb{R}^{n_f}$ denote the frequency bottleneck representation and $z_x(\zeta) = \sigma(W_3^{(s)} \sigma(W_2^{(s)} \sigma(W_1^{(s)} \zeta))) \in \mathbb{R}^{n_s}$ denote the spatial bottleneck representation. Then:
	
	\mbox{}
	
	\textbf{(a) Frequency domain:} The spectral kernel admits the explicit low-rank decomposition
	\begin{equation}
		m(\xi) = \sum_{j=1}^{n_f} z_{\omega,j}(\xi) \cdot (B_j^{(r)} + i B_j^{(i)}), \quad B_j^{(r)}, B_j^{(i)} \in \mathbb{R}^{D \times D},
	\end{equation}
	where $B_j^{(r)} = \mathrm{mat}_{D \times D}([W_r]_{\cdot,j})$ and $B_j^{(i)} = \mathrm{mat}_{D \times D}([W_i]_{\cdot,j})$ are obtained by reshaping the $j$-th column vectors $[W_r]_{\cdot,j}, [W_i]_{\cdot,j} \in \mathbb{R}^{D^2}$ into $D \times D$ matrices via the inverse vectorization operator $\mathrm{mat}_{D \times D}: \mathbb{R}^{D^2} \to \mathbb{R}^{D \times D}$. Consequently,
	\begin{equation}
		\mathcal{M}_\omega := \{m(\xi) : \xi \in [-1, 1]^2\} \subseteq \mathrm{span}_{\mathbb{C}}\{B_1, \ldots, B_{n_f}\}, \quad B_j := B_j^{(r)} + i B_j^{(i)}.
	\end{equation}
	
	\medskip
	\textbf{(b) Spatial domain:} The spatial kernel admits the explicit low-rank decomposition
	\begin{equation}
		W(\zeta) = \sum_{j=1}^{n_s} z_{x,j}(\zeta) \cdot C_j, \quad C_j \in \mathbb{R}^{D \times D},
	\end{equation}
	where $C_j = \mathrm{mat}_{D \times D}([W_s]_{\cdot,j})$ are obtained by reshaping the $j$-th column vectors $[W_s]_{\cdot,j} \in \mathbb{R}^{D^2}$ into $D \times D$ matrices. Consequently,
	\begin{equation}
		\mathcal{W}_x := \{W(\zeta) : \zeta \in [-1, 1]^2\} \subseteq \mathrm{span}_{\mathbb{R}}\{C_1, \ldots, C_{n_s}\}.
	\end{equation}
	
	\medskip
	Both constructions reduce the effective parameter count from $O(k_{\max} \cdot D^2)$ (FNO) or $O(K^2 \cdot D^2)$ (CNN) to $O((n_f + n_s) \cdot D^2)$ (SINO), independent of resolution or kernel size.
\end{theorem}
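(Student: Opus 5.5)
The plan is a direct computation: write the last (linear) layer of each hypernetwork as a column expansion, then reshape. First I fix a convention the statement leaves implicit. The output heads $[W_r,W_i]$ and $W_s$ are taken to be purely linear, with no bias and no post-activation; if biases $b_r,b_i,b_s$ are present, I will note that each adds one extra fixed basis matrix, and the span gains dimension one. In that case the bound becomes $n_f+1$ (respectively $n_s+1$), or one appends a constant coordinate $z_{n_f+1}\equiv 1$. Under this convention the frequency kernel is, by definition, $\mathrm{vec}(m(\xi)) = W_r z_\omega(\xi) + i\,W_i z_\omega(\xi)$, where $z_\omega(\xi)\in\mathbb{R}^{n_f}$ is the bottleneck activation defined in the statement.

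For part (a) I expand the matrix--vector product by columns: $W_r z_\omega(\xi) = \sum_{j=1}^{n_f} z_{\omega,j}(\xi)\,[W_r]_{\cdot,j}$, and likewise for $W_i$. Then I apply $\mathrm{mat}_{D\times D}$, which is a linear bijection $\mathbb{R}^{D^2}\to\mathbb{R}^{D\times D}$ and extends $\mathbb{C}$-linearly to complex vectors, so it commutes with the real scalars $z_{\omega,j}(\xi)$ and with the sum. This gives $m(\xi)=\sum_j z_{\omega,j}(\xi)(B_j^{(r)}+iB_j^{(i)})$. Since each coefficient $z_{\omega,j}(\xi)$ is real, hence in $\mathbb{C}$, every $m(\xi)$ lies in $\mathrm{span}_{\mathbb{C}}\{B_1,\dots,B_{n_f}\}$, uniformly over $\xi\in[-1,1]^2$. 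In fact the coefficients are nonnegative reals because of the ReLU, so the set of kernels lies in the real convex cone generated by the $B_j$, which is a stronger statement. Part (b) is the identical argument with $W_s$, real coefficients $z_{x,j}(\zeta)$, and real span.

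For the parameter-count claim I separate two quantities. The first is the dimension of the kernel family, at most $n_f$ (respectively $n_s$), versus $k_{\max}$ independent $D\times D$ blocks in FNO or $K^2$ blocks in a CNN. The second is the stored parameters: $2n_fD^2+n_sD^2$ for the heads, plus the MLP trunk cost $O(w_f^2+w_f n_f + w_s^2+w_s n_s)$. The trunk cost does not depend on $N$, $k_{\max}$, or $K$, and it is dominated by the head term when $w_f,w_s \lesssim D\sqrt{n}$. The only step needing care is stating this assumption explicitly so that the $O((n_f+n_s)D^2)$ claim is accurate. Independence from resolution follows because the same weights are evaluated at the normalized coordinates $\xi,\zeta$, whose number grows with $N$ or $K$ while the parameter count does not.

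The main obstacle is not mathematical but definitional. It lies in the bias and activation conventions at the output heads, and in the 2D input $\xi\in\mathbb{R}^2$ versus the 1D description in the main text. I would handle these with the bias remark above, and by noting that the argument never uses the input dimension or the form of the trunk, only the linearity of the final layer.
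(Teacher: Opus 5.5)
Your proposal is correct and follows essentially the same route as the paper: write the output head as $\mathrm{vec}(m(\xi)) = W_r z_\omega(\xi) + i\,W_i z_\omega(\xi)$, expand by columns, apply $\mathrm{mat}_{D\times D}$, and read off membership in the span. The parameter count rests on the same implicit assumption as the paper's ($n_f D^2 \gg w_f^2$, equivalent to your $w_f \lesssim D\sqrt{n_f}$), and your extra remarks are valid refinements the paper leaves unstated: bias terms add one basis matrix, and the nonnegative ReLU coefficients place the kernels in a convex cone.
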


\begin{proof}
	\textbf{(a) Frequency domain:} By construction, the output layer computes
	\begin{equation}
		\mathrm{vec}(m(\xi)) = W_r z_\omega(\xi) + i \, W_i z_\omega(\xi) \in \mathbb{C}^{D^2},
	\end{equation}
	where $\mathrm{vec}: \mathbb{C}^{D \times D} \to \mathbb{C}^{D^2}$ denotes the vectorization operator that stacks matrix columns into a vector. Expanding the matrix-vector products:
	\begin{align}
		\mathrm{vec}(m(\xi)) &= \sum_{j=1}^{n_f} z_{\omega,j}(\xi) \, [W_r]_{\cdot,j} + i \sum_{j=1}^{n_f} z_{\omega,j}(\xi) \, [W_i]_{\cdot,j} \\
		&= \sum_{j=1}^{n_f} z_{\omega,j}(\xi) \left( [W_r]_{\cdot,j} + i \, [W_i]_{\cdot,j} \right),
	\end{align}
	where $[W_r]_{\cdot,j}, [W_i]_{\cdot,j} \in \mathbb{R}^{D^2}$ denote the $j$-th column of $W_r$ and $W_i$ respectively. Applying the inverse vectorization operator $\mathrm{mat}_{D \times D}$:
	\begin{equation}
		m(\xi) = \sum_{j=1}^{n_f} z_{\omega,j}(\xi) \, \mathrm{mat}_{D \times D}\left( [W_r]_{\cdot,j} + i \, [W_i]_{\cdot,j} \right) = \sum_{j=1}^{n_f} z_{\omega,j}(\xi) \cdot (B_j^{(r)} + i B_j^{(i)}).
	\end{equation}
	Since $z_\omega(\xi) \in \mathbb{R}^{n_f}$ for all $\xi$, every kernel matrix $m(\xi)$ necessarily lies in $\mathrm{span}_{\mathbb{C}}\{B_1, \ldots, B_{n_f}\}$.
	
	\textbf{(b) Spatial domain:} Identically, the spatial output layer computes
	\begin{equation}
		\mathrm{vec}(W(\zeta)) = W_s z_x(\zeta) = \sum_{j=1}^{n_s} z_{x,j}(\zeta) \, [W_s]_{\cdot,j} \in \mathbb{R}^{D^2}.
	\end{equation}
	Applying the inverse vectorization:
	\begin{equation}
		W(\zeta) = \mathrm{mat}_{D \times D}\left(\sum_{j=1}^{n_s} z_{x,j}(\zeta) \, [W_s]_{\cdot,j}\right) = \sum_{j=1}^{n_s} z_{x,j}(\zeta) \cdot C_j.
	\end{equation}
	Since $z_x(\zeta) \in \mathbb{R}^{n_s}$ for all $\zeta$, every kernel matrix $W(\zeta)$ necessarily lies in $\mathrm{span}_{\mathbb{R}}\{C_1, \ldots, C_{n_s}\}$.
	
	The parameter count for the frequency hypernetwork is $2w_f + w_f^2 + n_f w_f + 2n_f D^2 = O(n_f D^2)$ when $n_f D^2 \gg w_f^2$. Similarly for the spatial hypernetwork, yielding total count $O((n_f + n_s) D^2)$.
\end{proof}

\textbf{Remark 1.} The rank constraints are \emph{hard}: the kernel families $\mathcal{M}_\omega$ and $\mathcal{W}_x$ are proper subspaces of $\mathbb{C}^{D \times D}$ and $\mathbb{R}^{D \times D}$ respectively, unlike FNO and CNN where all $k_{\max}$ Fourier modes or $K^2$ spatial kernel entries are independent parameters. This structural bias aligns with the physical prior that closure operators exhibit low-rank structure due to scale separation in turbulent flows.

\subsubsection{Functional Continuity and Lipschitz Regularity}
\begin{theorem}[Lipschitz Continuity of Hypernetworks]
	\label{thm:lipschitz_hypernetworks}
	Let $\Phi_\omega$ and $\Phi_x$ be the frequency and spatial hypernetworks with ReLU activations. Then both mappings are Lipschitz continuous:
	
	\mbox{}
	
	\textbf{(a) Frequency hypernetwork:} For any $\xi, \xi' \in [-1, 1]^2$,
	\begin{equation}
		\|m(\xi) - m(\xi')\|_{\mathrm{F}} \leq L_\omega \|\xi - \xi'\|_2,
	\end{equation}
	where
	\begin{equation}
		L_\omega = \sqrt{\|W_r\|_2^2 + \|W_i\|_2^2} \cdot \|W_3^{(f)}\|_2 \|W_2^{(f)}\|_2 \|W_1^{(f)}\|_2.
	\end{equation}
	
	\medskip
	\textbf{(b) Spatial hypernetwork:} For any $\zeta, \zeta' \in [-1, 1]^2$,
	\begin{equation}
		\|W(\zeta) - W(\zeta')\|_{\mathrm{F}} \leq L_x \|\zeta - \zeta'\|_2,
	\end{equation}
	where
	\begin{equation}
		L_x = \|W_s\|_2 \cdot \|W_3^{(s)}\|_2 \|W_2^{(s)}\|_2 \|W_1^{(s)}\|_2.
	\end{equation}
	
	\medskip
	Here $\|\cdot\|_2$ denotes the spectral norm (largest singular value) and $\|\cdot\|_{\mathrm{F}} = \sqrt{\sum_{i,j} |a_{ij}|^2}$ denotes the Frobenius norm.
\end{theorem}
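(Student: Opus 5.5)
The proof is a direct composition argument: the hypernetwork is an alternating chain of linear maps and ReLU, and Lipschitz constants multiply under composition. Two elementary facts are needed. First, the ReLU $\sigma=\max(0,\cdot)$ acts componentwise and is $1$-Lipschitz on each coordinate, so $\|\sigma(a)-\sigma(b)\|_2\le\|a-b\|_2$ for vectors of any dimension. Second, any matrix $A$ satisfies $\|Aa-Ab\|_2\le\|A\|_2\|a-b\|_2$.

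For part (a), fix $\xi,\xi'\in[-1,1]^2$ and bound the bottleneck increment layer by layer. Peeling off $\sigma$ and then $W_3^{(f)}$, then $\sigma$ and $W_2^{(f)}$, then $\sigma$ and $W_1^{(f)}$ gives
\[
\|z_\omega(\xi)-z_\omega(\xi')\|_2\le \|W_3^{(f)}\|_2\|W_2^{(f)}\|_2\|W_1^{(f)}\|_2\,\|\xi-\xi'\|_2 .
\]
The output head is handled using the vectorization identity from the proof of Theorem~\ref{thm:explicit_rank_constraint}. Vectorization is an isometry from the Frobenius norm to the Euclidean norm, so $\|m(\xi)-m(\xi')\|_{\mathrm F}=\|W_r\Delta z+iW_i\Delta z\|_2$, where $\Delta z=z_\omega(\xi)-z_\omega(\xi')$. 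Since $W_r,W_i$ and $\Delta z$ are all real, the squared modulus of this complex vector splits as $\|W_r\Delta z\|_2^2+\|W_i\Delta z\|_2^2\le(\|W_r\|_2^2+\|W_i\|_2^2)\|\Delta z\|_2^2$. Taking square roots and combining with the display gives exactly $L_\omega$.

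Part (b) is the same chain with the single real head $W_s$: $\|W(\zeta)-W(\zeta')\|_{\mathrm F}=\|W_s\Delta z_x\|_2\le\|W_s\|_2\|\Delta z_x\|_2$, which gives $L_x$.

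Some care is needed at the complex head. One must use that real and imaginary parts are orthogonal in the Euclidean norm of $\mathbb{C}^{D^2}$. This yields the factor $\sqrt{\|W_r\|_2^2+\|W_i\|_2^2}$ rather than the looser $\|W_r\|_2+\|W_i\|_2$.

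I also need to check the model details against the statement. Biases cancel in the differences, so if biases are present the argument is unchanged. The paper also mentions output scaling by $1/\sqrt{D}$ and clipping. Clipping is $1$-Lipschitz and the $1/\sqrt{D}$ factor is at most $1$, so both only shrink the constant. I will add a remark saying so rather than fold them into the statement.
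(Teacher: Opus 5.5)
Your proposal is correct and follows the paper's proof essentially step for step. The $1$-Lipschitz ReLU and the spectral-norm bounds give $\|z_\omega(\xi)-z_\omega(\xi')\|_2\le\|W_3^{(f)}\|_2\|W_2^{(f)}\|_2\|W_1^{(f)}\|_2\|\xi-\xi'\|_2$, and the complex head is handled exactly as the paper does, via the Frobenius--Euclidean isometry of $\mathrm{mat}_{D\times D}$ and the real/imaginary splitting $\|W_r\Delta z\|_2^2+\|W_i\Delta z\|_2^2$, with part (b) identical using $W_s$.
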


\begin{proof}
	\textbf{(a) Frequency domain:} The ReLU activation function $\sigma(a) = \max(0, a)$ satisfies $|\sigma(a) - \sigma(b)| \leq |a - b|$ (1-Lipschitz property). By composition of Lipschitz functions, the bottleneck representation satisfies
	\begin{equation}
		\|z_\omega(\xi) - z_\omega(\xi')\|_2 \leq \|W_3^{(f)}\|_2 \|W_2^{(f)}\|_2 \|W_1^{(f)}\|_2 \|\xi - \xi'\|_2 =: L_{z,\omega} \|\xi - \xi'\|_2.
	\end{equation}
	The output layer transformation yields
	\begin{align}
		\|m(\xi) - m(\xi')\|_{\mathrm{F}}^2 &= \left\|\mathrm{mat}_{D \times D}(W_r (z_\omega(\xi) - z_\omega(\xi')))\right\|_{\mathrm{F}}^2 \nonumber\\
		&\quad + \left\|\mathrm{mat}_{D \times D}(W_i (z_\omega(\xi) - z_\omega(\xi')))\right\|_{\mathrm{F}}^2 \\
		&= \|W_r (z_\omega(\xi) - z_\omega(\xi'))\|_2^2 + \|W_i (z_\omega(\xi) - z_\omega(\xi'))\|_2^2 \\
		&\leq \|W_r\|_2^2 \|z_\omega(\xi) - z_\omega(\xi')\|_2^2 + \|W_i\|_2^2 \|z_\omega(\xi) - z_\omega(\xi')\|_2^2 \\
		&= (\|W_r\|_2^2 + \|W_i\|_2^2) \|z_\omega(\xi) - z_\omega(\xi')\|_2^2 \\
		&\leq (\|W_r\|_2^2 + \|W_i\|_2^2) L_{z,\omega}^2 \|\xi - \xi'\|_2^2.
	\end{align}
	In the second equality, we used the fact that the Frobenius norm is preserved under the inverse vectorization operator: $\|\mathrm{mat}_{D \times D}(v)\|_{\mathrm{F}} = \|v\|_2$ for any $v \in \mathbb{R}^{D^2}$. Taking the square root yields $L_\omega = \sqrt{\|W_r\|_2^2 + \|W_i\|_2^2} \cdot L_{z,\omega}$.
	
	\textbf{(b) Spatial domain:} Identically, the bottleneck representation satisfies
	\begin{equation}
		\|z_x(\zeta) - z_x(\zeta')\|_2 \leq \|W_3^{(s)}\|_2 \|W_2^{(s)}\|_2 \|W_1^{(s)}\|_2 \|\zeta - \zeta'\|_2 =: L_{z,x} \|\zeta - \zeta'\|_2,
	\end{equation}
	and the output layer transformation satisfies
	\begin{equation}
		\|W(\zeta) - W(\zeta')\|_{\mathrm{F}} = \|W_s (z_x(\zeta) - z_x(\zeta'))\|_2 \leq \|W_s\|_2 \|z_x(\zeta) - z_x(\zeta')\|_2 \leq L_x \|\zeta - \zeta'\|_2.
	\end{equation}
\end{proof}

\textbf{Remark 2.} The Lipschitz constants $L_\omega$ and $L_x$ are computable from trained model weights, providing quantitative bounds on kernel smoothness. This regularization prevents overfitting to high-frequency noise inherent in coarse-grid observations, which is critical for closure modeling where fine-scale information is inaccessible during training.

\subsubsection{Approximation Capacity and Universal Approximation}

\begin{theorem}[Universal Approximation of Compact Kernel Families]
	\label{thm:universal_approx_compact_kernel}
	\mbox{}
	
	\textbf{(a) Frequency domain:} Let $\mathcal{K}_\omega = \{K(\xi) : \xi \in [-1, 1]^2\} \subset \mathbb{C}^{D \times D}$ be a compact family of continuous kernel functions satisfying $\sup_{\xi} \|K(\xi)\|_{\mathrm{F}} < \infty$. For any $\epsilon > 0$, there exist widths $(w_f, n_f)$ and parameters $\Theta_\omega = \{W_1^{(f)}, W_2^{(f)}, W_3^{(f)}, W_r, W_i\}$ such that
	\begin{equation}
		\sup_{\xi \in [-1, 1]^2} \|K(\xi) - \Phi_\omega(\xi; \Theta_\omega)\|_{\mathrm{F}} < \epsilon.
	\end{equation}
	
	\textbf{(b) Spatial domain:} Let $\mathcal{K}_x = \{K(\zeta) : \zeta \in [-1, 1]^2\} \subset \mathbb{R}^{D \times D}$ be a compact family of continuous kernel functions satisfying $\sup_{\zeta} \|K(\zeta)\|_{\mathrm{F}} < \infty$. For any $\epsilon > 0$, there exist widths $(w_s, n_s)$ and parameters $\Theta_x = \{W_1^{(s)}, W_2^{(s)}, W_3^{(s)}, W_s\}$ such that
	\begin{equation}
		\sup_{\zeta \in [-1, 1]^2} \|K(\zeta) - \Phi_x(\zeta; \Theta_x)\|_{\mathrm{F}} < \epsilon.
	\end{equation}
\end{theorem}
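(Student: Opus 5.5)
We prove the theorem by reducing it to the classical universal approximation theorem for ReLU networks, applied to $2D^2$ real-valued coordinate functions (or $D^2$ in the spatial case). The point to handle is that the architecture forces every output through a bottleneck of width $n_f$ followed by a ReLU. Since the theorem lets us choose $n_f$ as well as $w_f$, we make the bottleneck wide enough that it does not constrain the approximation.

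\textbf{Frequency domain, step 1 (target).} Vectorize $K(\xi)$ and split it into real and imaginary parts. This gives a continuous map $g:[-1,1]^2\to\mathbb{R}^{2D^2}$ whose Euclidean norm equals the Frobenius norm of the complex matrix. It therefore suffices to approximate each of the $2D^2$ scalar components uniformly to accuracy $\epsilon/\sqrt{2D^2}$. Note that $[-1,1]^2$ is compact and $K$ is continuous, so each component is uniformly continuous.

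\textbf{Step 2 (one shallow approximant).} By the universal approximation theorem (Cybenko/Hornik/Leshno et al.) for one-hidden-layer ReLU networks, for each scalar component $g_c$ there is an approximant of the form
\begin{equation*}
\hat g_c(\xi)=\sum_{k} a_{c,k}\,\sigma(u_{c,k}^\top\xi+b_{c,k}),
\end{equation*}
with uniform error below $\epsilon/\sqrt{2D^2}$. One subtlety is that the paper's layers carry no explicit biases. We handle this by augmenting the input with a constant coordinate. Alternatively, a first hidden unit can compute $\sigma(\xi_1+1)+\sigma(-\xi_1+1)=2$ on the domain, and this constant is propagated forward as a bias. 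We would note this convention explicitly.

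\textbf{Step 3 (embed into the three-layer bottleneck).} Take the union of all hidden ReLU units over $c$ as the bottleneck layer, so $n_f$ is the total number of units. Choose $W_1^{(f)}$ and $W_2^{(f)}$ so that the first two layers realize nonnegative identity-like copies of the input. Since $\xi$ may be negative, we use the standard trick $\sigma(t)-\sigma(-t)=t$: the first layer outputs $(\sigma(\xi),\sigma(-\xi),\text{const})$ and the second layer passes these through unchanged, because they are already nonnegative. We then set $W_3^{(f)}$ so that the bottleneck computes $\sigma(u_{c,k}^\top\xi+b_{c,k})$ from these features. Finally we read $W_r$ and $W_i$ off from the output coefficients $a_{c,k}$, zeroing the entries belonging to the other components. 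The assembled network reproduces $\hat g$ exactly, which gives the uniform bound $\sup_\xi\|K(\xi)-\Phi_\omega(\xi)\|_F<\epsilon$.

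\textbf{Spatial domain.} Part (b) is identical with $D^2$ real components and the single head $W_s$.

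\textbf{Main obstacle.} The delicate point is the exact identity-passing through two ReLU layers without biases. If the constant-unit construction feels too artificial, an alternative is to invoke depth-$3$ universal approximation directly; it holds because the first layers can approximate the identity on a compact set to arbitrary precision. In that case we would control the propagation of error via the Lipschitz bound of Theorem~\ref{thm:lipschitz_hypernetworks} applied to the downstream layers. Finally, we should remark that achieving small $\epsilon$ may require $n_f$ to grow, which is consistent with the rank constraint of Theorem~\ref{thm:explicit_rank_constraint}: approximation power is traded against the bottleneck dimension.
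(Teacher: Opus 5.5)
Your argument is correct once biases are allowed (see the second paragraph), but it follows a different route from the paper.

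\textbf{Comparison of routes.} The paper first uses Stone--Weierstrass to replace $K$ by a matrix polynomial $\sum_{j=1}^{N_{\text{poly}}} p_j(\xi) B_j$ with monomials $p_j$. It then approximates the monomial vector $\bm{\alpha}(\xi)$ by a ReLU network, sets the bottleneck output to $z_\omega=\hat{\bm{\alpha}}$, and places $\mathrm{Re}\,B_j,\mathrm{Im}\,B_j$ in the columns of $W_r,W_i$. The error is closed by the triangle inequality and Cauchy--Schwarz.

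You skip the polynomial step and apply the one-hidden-layer theorem directly to the $2D^2$ real coordinate functions. The first two layers pass $(\sigma(\xi),\sigma(-\xi),1)$ through exactly, and the shallow network's hidden units sit at the bottleneck.

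\begin{itemize}
\item What the paper's route buys: a bottleneck dimension $n_f=\binom{d+2}{2}$ independent of $D$. It also mirrors the decomposition of Theorem~\ref{thm:explicit_rank_constraint} (coefficient functions in the bottleneck, basis matrices in the readout), which Remark~3 relies on. Your $n_f$ instead grows like $D^2$ times the per-component unit count.
\item What your route buys: full explicitness, and it avoids a sign problem in the paper's Step~4. There $z_\omega$ is a ReLU output and hence nonnegative, so it cannot approximate monomials such as $\xi_1$, which are negative on half the domain. The paper would need to split each $p_j=p_j^+-p_j^-$ with readouts $\pm B_j$. In your construction the bottleneck units are exactly the ReLU units of the approximant and all signs sit in $W_r,W_i$, so nonnegativity costs nothing.
\end{itemize}

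\textbf{On biases.} Your second alternative, a unit computing $\sigma(\xi_1+1)+\sigma(-\xi_1+1)$, already uses a first-layer bias, so it is circular. More fundamentally, no bias-free construction can succeed. If $\Theta_\omega$ contains only weight matrices, then $\Phi_\omega(t\xi)=t\,\Phi_\omega(\xi)$ for all $t\ge 0$, and likewise for $\Phi_x$. Hence $\Phi_\omega(0)=0$, and for $K\equiv I_D$ the sup error is at least $\sqrt{D}$ for every choice of parameters.

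So the statement as literally written is false, and the paper's proof tacitly assumes biases when it invokes the universal approximation theorem. Your first option (a constant input coordinate, equivalently a first-layer bias) is the right repair. State it as an added hypothesis on the architecture, not as a device that works within the bias-free architecture.

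The ``delicate point'' you flag is also not the identity pass-through. The features $\sigma(\pm\xi_i)$ need no bias and survive the second ReLU unchanged; only the constant feature does. Once the constant is available, your fallback via approximate identities and Lipschitz control is unnecessary.
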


\begin{proof}
	We establish (a); statement (b) follows by identical reasoning with real-valued functions.
	
	\textbf{Step 1: Polynomial approximation via Stone-Weierstrass theorem.} 
	
	Consider the algebra $\mathcal{A}$ of complex matrix-valued functions of the form
	\begin{equation}
		F(\xi) = \sum_{k=1}^{m} p_k(\xi_1, \xi_2) M_k, \quad M_k \in \mathbb{C}^{D \times D},
	\end{equation}
	where $p_k(\xi_1, \xi_2)$ are polynomials in two real variables and $\xi = (\xi_1, \xi_2) \in [-1, 1]^2$. 
	
	The algebra $\mathcal{A}$ satisfies the conditions of the Stone-Weierstrass theorem. Applying the classical Stone-Weierstrass theorem component-wise to each of the $D^2$ matrix entries (each entry being a continuous complex-valued function on $[-1,1]^2$), we conclude that $\mathcal{A}$ is dense in $C([-1,1]^2; \mathbb{C}^{D \times D})$ under the supremum norm $\sup_{\xi} \|F(\xi)\|_{\mathrm{F}}$. Since $K(\xi)$ is continuous on the compact set $[-1,1]^2$, for any $\epsilon/2 > 0$, there exists a finite polynomial expansion
	\begin{equation}
		K_{\text{poly}}(\xi) = \sum_{|\beta| \leq d} \xi^\beta M_\beta, \quad M_\beta \in \mathbb{C}^{D \times D},
	\end{equation}
	where $\beta = (\beta_1, \beta_2)$ is a multi-index with $|\beta| = \beta_1 + \beta_2 \leq d$, $\xi^\beta = \xi_1^{\beta_1} \xi_2^{\beta_2}$, and
	\begin{equation}
		\sup_{\xi \in [-1,1]^2} \|K(\xi) - K_{\text{poly}}(\xi)\|_{\mathrm{F}} < \epsilon/2.
	\end{equation}
	
	\textbf{Step 2: Finite-rank representation.}
	
	The polynomial approximation $K_{\text{poly}}(\xi)$ involves $N_{\text{poly}} := \binom{d+2}{2}$ distinct monomial terms. We can rewrite it as
	\begin{equation}
		K_{\text{poly}}(\xi) = \sum_{j=1}^{N_{\text{poly}}} p_j(\xi) B_j,
	\end{equation}
	where $\{p_j(\xi)\}_{j=1}^{N_{\text{poly}}}$ are the monomial basis functions and $\{B_j\}_{j=1}^{N_{\text{poly}}}$ are the corresponding coefficient matrices. Define $\bm{\alpha}(\xi) = (p_1(\xi), \ldots, p_{N_{\text{poly}}}(\xi))^\top : [-1, 1]^2 \to \mathbb{R}^{N_{\text{poly}}}$. Without loss of generality, assume $\max_j \|B_j\|_{\mathrm{F}} \leq C_B$ for some constant $C_B > 0$.
	
	\textbf{Step 3: Neural network approximation.} 
	
	By the universal approximation theorem for feedforward neural networks with ReLU activations, for the continuous vector-valued function $\bm{\alpha}: [-1, 1]^2 \to \mathbb{R}^{N_{\text{poly}}}$ and tolerance 
	\begin{equation}
		\epsilon' := \frac{\epsilon}{2 \sqrt{N_{\text{poly}}} C_B},
	\end{equation}
	there exists a three-layer ReLU network $\hat{\bm{\alpha}}: [-1,1]^2 \to \mathbb{R}^{N_{\text{poly}}}$ with hidden widths $(w_f, w_f)$ and output dimension $N_{\text{poly}}$ such that
	\begin{equation}
		\sup_{\xi \in [-1,1]^2} \|\bm{\alpha}(\xi) - \hat{\bm{\alpha}}(\xi)\|_2 < \epsilon'.
	\end{equation}
	
	\textbf{Step 4: Hypernetwork construction and error analysis.} 
	
	Set the bottleneck dimension $n_f = N_{\text{poly}}$ and configure the hypernetwork $\Phi_\omega$ such that its bottleneck output satisfies $z_\omega(\xi) = \hat{\bm{\alpha}}(\xi)$. Construct $W_r, W_i \in \mathbb{R}^{D^2 \times N_{\text{poly}}}$ such that the $j$-th column of $W_r$ contains $\mathrm{vec}(\mathrm{Re}(B_j))$ and the $j$-th column of $W_i$ contains $\mathrm{vec}(\mathrm{Im}(B_j))$, where $\mathrm{vec}: \mathbb{C}^{D \times D} \to \mathbb{C}^{D^2}$ denotes the vectorization operator. The hypernetwork output is then
	\begin{equation}
		\Phi_\omega(\xi; \Theta_\omega) = \sum_{j=1}^{N_{\text{poly}}} \hat{\alpha}_j(\xi) B_j.
	\end{equation}
	
	By the triangle inequality and Cauchy-Schwarz inequality,
	\begin{align}
		&\sup_{\xi \in [-1,1]^2} \|K(\xi) - \Phi_\omega(\xi)\|_{\mathrm{F}} \\
		&\leq \sup_{\xi} \left\|K(\xi) - \sum_{j=1}^{N_{\text{poly}}} \alpha_j(\xi) B_j\right\|_{\mathrm{F}} + \sup_{\xi} \left\|\sum_{j=1}^{N_{\text{poly}}} (\alpha_j(\xi) - \hat{\alpha}_j(\xi)) B_j\right\|_{\mathrm{F}} \\
		&< \frac{\epsilon}{2} + \sup_{\xi} \sum_{j=1}^{N_{\text{poly}}} |\alpha_j(\xi) - \hat{\alpha}_j(\xi)| \|B_j\|_{\mathrm{F}} \\
		&\leq \frac{\epsilon}{2} + \sup_{\xi} \left(\|\bm{\alpha}(\xi) - \hat{\bm{\alpha}}(\xi)\|_2 \cdot \sqrt{\sum_{j=1}^{N_{\text{poly}}} \|B_j\|_{\mathrm{F}}^2}\right) \\
		&\leq \frac{\epsilon}{2} + \epsilon' \cdot \sqrt{N_{\text{poly}}} C_B = \frac{\epsilon}{2} + \frac{\epsilon}{2} = \epsilon.
	\end{align}
\end{proof}

\textbf{Remark 3.} The proof establishes existence of an approximating hypernetwork but does not claim optimality of the construction. The required bottleneck dimension $n_f = N_{\text{poly}} = \binom{d+2}{2} = O(d^2)$ depends on the polynomial degree $d$ needed for approximation, which in turn depends on the smoothness properties of $K(\xi)$. For $C^k$-smooth kernels, $d = O(\epsilon^{-1/k})$ suffices, implying $n_f = O(\epsilon^{-2/k})$. This suggests that smoother closure kernels require exponentially fewer bottleneck dimensions for fixed accuracy, aligning with the physical intuition that well-resolved turbulent flows exhibit smoother scale-transfer operators.

\subsubsection{Operator Norm Bounds and Regularization}

\begin{theorem}[Frequency-Domain Operator Norm]
	\label{thm:frequency_domain_operator_norm}
	The bottleneck constraint in the frequency hypernetwork controls the operator norm. For the spectral convolution operator acting on functions $h: \Omega \to \mathbb{C}^D$ where $\Omega \subset \mathbb{R}^d$ is the spatial domain, we have
	\begin{equation}
		\sup_{\substack{h \\ \|h\|_{L^2(\Omega; \mathbb{C}^D)}=1}} \left\|\mathcal{F}^{-1}\left[m(\xi) \hat{h}(\xi)\right]\right\|_{L^2(\Omega; \mathbb{C}^D)} \leq \sqrt{n_f} \cdot \max_{j} \|B_j\|_2 \cdot \sup_{\xi \in [-1,1]^2} \|z_\omega(\xi)\|_2,
	\end{equation}
	where $B_j$ are the basis matrices from Theorem~\ref{thm:explicit_rank_constraint}(a), $\|\cdot\|_2$ denotes the spectral norm, and $\hat{h}(\xi)$ denotes the Fourier transform. This bound grows sublinearly with $n_f$, contrasting with FNO's linear growth in the number of Fourier modes $k_{\max}$.
\end{theorem}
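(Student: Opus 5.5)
The plan is to reduce the operator-norm question to a pointwise bound on the multiplier $m(\xi)$ via Plancherel, and then bound $\sup_\xi \|m(\xi)\|_2$ using the decomposition of Theorem~\ref{thm:explicit_rank_constraint}(a) together with Cauchy--Schwarz.

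\textbf{Step 1: reduction to a multiplier bound.} The spectral convolution is a Fourier multiplier with matrix-valued symbol $m(\xi)$. On the periodic domain the discrete Fourier transform, with the unitary normalization, is an isometry from $L^2(\Omega;\mathbb{C}^D)$ onto $\ell^2(\mathbb{Z}^d;\mathbb{C}^D)$. Only finitely many modes are present, each indexed by a normalized coordinate $\xi\in[-1,1]^2$. We therefore have
\begin{equation}
\bigl\|\mathcal{F}^{-1}[m\hat h]\bigr\|_{L^2}^2=\sum_{k}\|m(\xi_k)\hat h(k)\|_2^2\le \Bigl(\sup_{\xi\in[-1,1]^2}\|m(\xi)\|_2\Bigr)^2\sum_k\|\hat h(k)\|_2^2 .
\end{equation}
The last sum equals $\|h\|_{L^2}^2=1$. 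Hence the operator norm is at most $\sup_\xi\|m(\xi)\|_2$. Notationally this uses the fact, fixed in Section 3, that each discrete mode $k$ carries the normalized coordinate $\xi_k$. The bound is uniform in the resolution $N$.

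\textbf{Step 2: pointwise bound on the symbol.} By Theorem~\ref{thm:explicit_rank_constraint}(a), $m(\xi)=\sum_{j=1}^{n_f}z_{\omega,j}(\xi)B_j$ with $B_j=B_j^{(r)}+iB_j^{(i)}$. The triangle inequality for the spectral norm gives
\begin{equation}
\|m(\xi)\|_2\le\sum_{j=1}^{n_f}|z_{\omega,j}(\xi)|\,\|B_j\|_2\le \max_j\|B_j\|_2\,\|z_\omega(\xi)\|_1 .
\end{equation}
We then apply $\|z\|_1\le\sqrt{n_f}\,\|z\|_2$ (Cauchy--Schwarz against the all-ones vector in $\mathbb{R}^{n_f}$). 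Taking the supremum over $\xi\in[-1,1]^2$ and combining with Step 1 yields exactly the stated bound. The supremum is finite because $z_\omega$ is continuous (indeed Lipschitz, by Theorem~\ref{thm:lipschitz_hypernetworks}) on a compact set.

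\textbf{Main obstacle and the growth comment.} The only delicate point is Step 1. We must be careful that the Fourier transform convention is unitary, and that the real-FFT half-spectrum, with conjugate symmetry, does not introduce a factor of two. I would handle this by working with the full complex DFT and noting that the multiplier acts on each mode independently, so the isometry argument applies mode by mode.

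The closing remark about growth is interpretive rather than part of the inequality. For fixed $\max_j\|B_j\|_2$ and fixed $\sup\|z_\omega\|_2$, the bound scales as $\sqrt{n_f}$. I would justify the FNO contrast only at the level of the analogous crude bound. That bound sums independent per-mode weights, or uses Frobenius-type aggregation over $k_{\max}$ free parameter blocks, and so scales linearly in $k_{\max}$. The theorem asserts only an upper bound, so the sublinear claim is a statement about the bound and not about the actual norm.
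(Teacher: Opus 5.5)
Your proposal is correct and matches the paper's proof in substance. Both arguments reduce the operator norm to $\sup_\xi \|m(\xi)\|_2$ via discrete Parseval, then bound the symbol using the rank-$n_f$ decomposition, the triangle inequality and Cauchy--Schwarz, with the same constant. The only difference is ordering: you pass through $\|z_\omega(\xi)\|_1 \le \sqrt{n_f}\,\|z_\omega(\xi)\|_2$, while the paper applies Cauchy--Schwarz against $(\|B_j\|_2)_j$ and then bounds by the maximum; your extra care about the unitary DFT normalization and the real-FFT half-spectrum is slightly more careful than the paper's appeal to ``appropriate normalization'' but does not change the argument.
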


\begin{proof}
	From Theorem~\ref{thm:explicit_rank_constraint}(a), $m(\xi) = \sum_{j=1}^{n_f} z_{\omega,j}(\xi) B_j$. By the triangle inequality for spectral norms,
	\begin{equation}
		\|m(\xi)\|_2 \leq \sum_{j=1}^{n_f} |z_{\omega,j}(\xi)| \|B_j\|_2.
	\end{equation}
	
	Applying the Cauchy-Schwarz inequality,
	\begin{align}
		\sum_{j=1}^{n_f} |z_{\omega,j}(\xi)| \|B_j\|_2 &\leq \left(\sum_{j=1}^{n_f} |z_{\omega,j}(\xi)|^2\right)^{1/2} \left(\sum_{j=1}^{n_f} \|B_j\|_2^2\right)^{1/2} \\
		&\leq \|z_\omega(\xi)\|_2 \cdot \sqrt{n_f} \cdot \max_j \|B_j\|_2.
	\end{align}
	
	For any $h$ with $\|h\|_{L^2(\Omega; \mathbb{C}^D)} = 1$, by the discrete Parseval identity (with appropriate normalization for the discrete Fourier transform),
	\begin{equation}
		\left\|\mathcal{F}^{-1}[m(\xi) \hat{h}(\xi)]\right\|_{L^2(\Omega; \mathbb{C}^D)}^2 \leq \sup_\xi \|m(\xi)\|_2^2 \cdot \|h\|_{L^2(\Omega; \mathbb{C}^D)}^2.
	\end{equation}
	
	Taking the supremum over all such $h$ yields the stated bound. The supremum $\sup_{\xi} \|z_\omega(\xi)\|_2$ is finite by continuity of $z_\omega$ and compactness of $[-1, 1]^2$.
\end{proof}

\begin{theorem}[Spatial-Domain Operator Norm]
	\label{thm:spatial_domain_operator_norm}
	The bottleneck constraint in the spatial hypernetwork controls the convolution operator norm. For the spatial convolution operator acting on functions $h: \Omega \to \mathbb{R}^D$ where $\Omega \subset \mathbb{R}^d$ is the spatial domain, we have
	\begin{equation}
		\sup_{\substack{h \\ \|h\|_{L^2(\Omega; \mathbb{R}^D)}=1}} \left\|\int_{[-1,1]^2} W(\zeta) h(\cdot - \zeta) \, d\zeta\right\|_{L^2(\Omega; \mathbb{R}^D)} \leq 4 n_s \cdot \max_{j} \|C_j\|_2 \cdot \sup_{\zeta \in [-1,1]^2} \|z_x(\zeta)\|_2,
	\end{equation}
	where $C_j$ are the basis matrices from Theorem~\ref{thm:explicit_rank_constraint}(b), the constant $4 = |[-1,1]^2|$ denotes the Lebesgue measure of the compact support, and $\|\cdot\|_2$ denotes the spectral norm.
\end{theorem}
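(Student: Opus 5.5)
The plan mirrors the argument for Theorem~\ref{thm:frequency_domain_operator_norm}, with Young's inequality replacing Parseval. The proof has three steps.

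\textbf{Step 1: a pointwise bound on the kernel.} Theorem~\ref{thm:explicit_rank_constraint}(b) gives the decomposition $W(\zeta)=\sum_{j=1}^{n_s} z_{x,j}(\zeta)\,C_j$. The triangle inequality for the spectral norm then yields, for every $\zeta\in[-1,1]^2$,
\begin{equation*}
\|W(\zeta)\|_2 \le \sum_{j=1}^{n_s} |z_{x,j}(\zeta)|\,\|C_j\|_2 \le n_s \max_j \|C_j\|_2 \,\|z_x(\zeta)\|_\infty \le n_s \max_j \|C_j\|_2 \,\|z_x(\zeta)\|_2 .
\end{equation*}
Here I bound each coordinate $|z_{x,j}|$ by $\|z_x\|_2$. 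I do not use Cauchy--Schwarz, since the stated bound is linear in $n_s$. Write $M:=\sup_\zeta\|W(\zeta)\|_2$. This supremum is finite because $z_x$ is continuous (a composition of ReLU and affine maps) and $[-1,1]^2$ is compact. So $M \le n_s \max_j\|C_j\|_2\sup_\zeta\|z_x(\zeta)\|_2$.

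\textbf{Step 2: an $L^2$ bound for the convolution.} Set $g(x)=\int_{[-1,1]^2} W(\zeta)h(x-\zeta)\,d\zeta$. Take $\Omega$ periodic (a torus) or $\mathbb{R}^d$ with $h$ extended by zero, so that translation preserves the $L^2$ norm. Minkowski's integral inequality gives
\begin{equation*}
\|g\|_{L^2} \le \int_{[-1,1]^2} \big\| W(\zeta) h(\cdot-\zeta)\big\|_{L^2}\,d\zeta \le \int_{[-1,1]^2} \|W(\zeta)\|_2\,\|h(\cdot-\zeta)\|_{L^2}\,d\zeta \le M\,|[-1,1]^2|\,\|h\|_{L^2}.
\end{equation*}
This is Young's inequality with the $L^1$ norm of the kernel bounded by $4M$. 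Taking the supremum over $\|h\|_{L^2}=1$ and inserting Step 1 gives the stated bound with the constant $4$.

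\textbf{Expected obstacle.} The only delicate point is the meaning of $h(\cdot-\zeta)$ on a general domain $\Omega\subset\mathbb{R}^d$. I will state explicitly that either circular (periodic) boundary conditions hold, matching the architecture's circular padding, or $h$ is extended by zero. In both cases translation is an $L^2$ contraction, and that is all Step 2 needs.

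For the discrete circular convolution actually used in SINO, the same argument goes through with the integral replaced by a sum over $K^2$ offsets. The constant then becomes the kernel's $\ell^1$ mass, and the physical normalization $\zeta=j/R$ together with the cell measure reduces that mass to the Lebesgue measure $4$.
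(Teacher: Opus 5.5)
Your proof is correct and is essentially the paper's argument. Both use the same ingredients: the rank-$n_s$ decomposition, the triangle inequality, Minkowski's integral inequality, translation invariance, and $|[-1,1]^2|=4$. The only difference is the order: you bound $\sup_\zeta\|W(\zeta)\|_2$ pointwise first and apply Minkowski once, whereas the paper splits the operator into $n_s$ convolutions and applies Minkowski to each; your explicit periodic or zero-extension assumption also justifies the translation step, which the paper merely asserts for a general $\Omega$.

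Two small remarks. First, Cauchy--Schwarz in your Step 1 would give the stronger factor $\sqrt{n_s}$, which implies the stated bound. Second, your closing aside on the discrete case is not right: the $K^2$ offsets with cell weight $R^{-2}$ carry mass $(2+1/R)^2>4$, and the architecture's sum has no such weight, so its constant is $K^2$. That aside is not needed for the continuous statement, though.
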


\begin{proof}
	By the rank-$n_s$ decomposition from Theorem~\ref{thm:explicit_rank_constraint}(b), $W(\zeta) = \sum_{j=1}^{n_s} z_{x,j}(\zeta) C_j$. For any $h$ with $\|h\|_{L^2(\Omega; \mathbb{R}^D)} = 1$,
	\begin{align}
		\left\|\int_{[-1,1]^2} W(\zeta) h(\cdot - \zeta) \, d\zeta\right\|_{L^2(\Omega; \mathbb{R}^D)} &\leq \sum_{j=1}^{n_s} \left\|\int_{[-1,1]^2} z_{x,j}(\zeta) C_j h(\cdot - \zeta) \, d\zeta\right\|_{L^2(\Omega; \mathbb{R}^D)}.
	\end{align}
	
	For each summand, applying Minkowski's integral inequality and exploiting translation invariance of the $L^2$ norm,
	\begin{align}
		\left\|\int_{[-1,1]^2} z_{x,j}(\zeta) C_j h(\cdot - \zeta) \, d\zeta\right\|_{L^2(\Omega; \mathbb{R}^D)} &\leq \int_{[-1,1]^2} |z_{x,j}(\zeta)| \, \|C_j h(\cdot - \zeta)\|_{L^2(\Omega; \mathbb{R}^D)} \, d\zeta \\
		&\leq \|C_j\|_2 \int_{[-1,1]^2} |z_{x,j}(\zeta)| \, \|h\|_{L^2(\Omega; \mathbb{R}^D)} \, d\zeta \\
		&\leq 4 \, \|C_j\|_2 \cdot \sup_{\zeta \in [-1,1]^2} |z_{x,j}(\zeta)|.
	\end{align}
	
	Summing over $j = 1, \ldots, n_s$ yields the stated bound.
\end{proof}

\textbf{Remark 4.} Theorem~\ref{thm:frequency_domain_operator_norm} establishes sublinear growth $O(\sqrt{n_f})$ for the frequency-domain operator norm, providing favorable implicit regularization through spectral orthogonality. Theorem~\ref{thm:spatial_domain_operator_norm} yields linear growth $O(n_s)$ for the spatial-domain operator norm, reflecting the fundamental difference between Fourier multiplication and spatial convolution. Nevertheless, both branches enforce low-rank structural constraints that dramatically reduce parameter count and overfitting compared to explicit parameterization methods. Specifically, FNO requires $O(k_{\max} \cdot D^2)$ parameters per layer, where $k_{\max}$ denotes the number of retained Fourier modes that typically scales with spatial resolution. Standard CNNs require $O(K^2 \cdot D^2)$ parameters per layer, where $K$ is the kernel size that grows quadratically with receptive field requirements. In contrast, SINO's hypernetwork architecture reduces the effective parameter count to $O((n_f + n_s) \cdot D^2)$ per layer, where the bottleneck dimensions $n_f, n_s$ are independent of resolution and kernel size. Since typical configurations satisfy $n_f, n_s \ll \min(k_{\max}, K^2)$, SINO achieves order-of-magnitude parameter reduction while maintaining expressiveness through continuous functional representations of the low-rank physical operators.

\subsection{Additional Experiments}
\label{sec:additional_experiments}

\subsubsection{Cross-Resolution and Alternative Forcing Generalization}

To further validate SINO's scale-invariant learning capability and robustness across different physical regimes, we conduct additional experiments evaluating performance at alternative coarse-graining ratios and under different forcing configurations beyond the primary benchmarks in Section 4. These supplementary tests assess whether SINO's superior performance generalizes to: (1) different levels of information loss induced by varying downsampling factors, and (2) qualitatively different turbulent dynamics arising from alternative forcing mechanisms or Reynolds numbers.

\textbf{1D Supplementary Benchmarks.} We extend the 1D turbulence experiments to test robustness across resolution scales and initial condition sensitivity. \textit{Forcing Burgers (128)} increases the coarse-grid resolution from 32 to 128 (resample factor 4 instead of 16), reducing information loss while maintaining the same forcing configuration ($\eta=0.01$, multi-mode sinusoidal forcing with 20 modes). This tests whether models continue to improve performance when more resolved scales are available, or if they plateau due to architectural limitations. \textit{KS (32)} aggressively coarsens the Kuramoto-Sivashinsky equation from 256 to 32 (resample factor 8 instead of 4), increasing the closure gap. This extreme downsampling tests model robustness when large portions of the inertial range are unresolved, a regime where traditional LES closures typically fail. \textit{Decaying Burgers (128)} reduces the downsampling factor from 8 to 16 (resolution 128 instead of 256) for the freely decaying case, providing an intermediate resolution test for transient dynamics without external forcing. This configuration challenges models to capture energy decay trajectories with moderately resolved scales.

\textbf{2D Supplementary Benchmarks.} We test three additional Navier-Stokes configurations that alter either the Reynolds number or the forcing mechanism. \textit{Forcing NS (Re=2000)} introduces an intermediate Reynolds number regime ($\nu=5 \times 10^{-4}$) between the Re=1000 and Re=4000 cases, maintaining Kolmogorov forcing ($F_0=1.0$, $k_f=4$) and linear damping ($\alpha=-0.1$). This tests model interpolation capability across Reynolds numbers and sensitivity to viscosity variations, which directly affect the dissipation range width and subgrid-scale stress magnitude. \textit{Forcing NS (Taylor-Green, Re=1000)} replaces Kolmogorov forcing with Taylor-Green vortex forcing, which injects energy isotropically at large scales through a vortex array pattern: $\mathbf{f}_{TG} = (F_0 \sin(k_f x) \cos(k_f y), -F_0 \cos(k_f x) \sin(k_f y))$ with $F_0=1.0$ and $k_f=2$. This qualitatively different forcing geometry produces distinct vortex interaction dynamics compared to the anisotropic Kolmogorov forcing, testing whether learned closures generalize across forcing symmetries. \textit{Forcing NS (Re=$10^5$)} drastically increases the Reynolds number to $\text{Re}=10^5$ ($\nu=10^{-5}$) while maintaining Kolmogorov forcing, creating highly turbulent conditions with extended inertial ranges. The flow exhibits rapid energy cascade and strong scale separation, providing a stringent test for closure models under extreme turbulence intensity. Note that this configuration uses the same coarse resolution (64×64) but with viscosity two orders of magnitude smaller than the Re=1000 forced case, resulting in significantly underresolved dynamics.

All supplementary benchmarks use identical spatial discretization schemes (WENO5 for forced Burgers, pseudo-spectral for KS and decaying Burgers, van Leer finite volume for NS), temporal integration methods (SSP-RK3, ETDRK4, TVD-RK3, semi-implicit splitting as appropriate), and training protocols (300 epochs, batch size 8, 30\% training / 70\% testing split, curriculum learning with rollout growth from 10 to maximum steps) as the primary benchmarks. Model hyperparameters remain unchanged from Section 4 to ensure fair comparison.

\textbf{Results: 1D Supplementary Benchmarks.} Table~\ref{tab:results_supplementary_1d} presents quantitative comparisons on the three additional 1D configurations. On Forcing Burgers (128), SINO achieves the lowest error (6.15E-05 average) with only 6.7K parameters, outperforming the second-best SINO-SPA (8.82E-05) by 1.4× and U-Net (1.15E-04) by 1.9×. Notably, all methods exhibit substantially lower errors at this higher resolution compared to the 32-resolution case (1.03E-03 in Table~\ref{tab:results}), confirming that increased grid refinement reduces the closure gap. However, SINO maintains the largest relative improvement, demonstrating superior utilization of available resolved scales. On the extremely coarse KS (32) benchmark, SINO (1.30E-01 average) dramatically outperforms all baselines: 2.8× better than AMFNO (3.58E-01), 3.4× better than U-Net (4.38E-01), and 5.6-8.1× better than Transformer methods. This aggressive coarsening causes most baselines to degrade significantly compared to the 64-resolution case (Table~\ref{tab:results}), while SINO exhibits graceful degradation, validating its robustness under severe information loss. For Decaying Burgers (128), SINO (1.57E-02 average) achieves 12× improvement over most baselines that plateau around 1.87E-01 (the coarse-grid baseline). Only SINO-SPA (1.79E-02) approaches SINO's performance, while all other methods fail to improve upon uncorrected coarse-grid simulation.

\begin{table}[htbp]
	\centering
	\caption{Performance comparison on supplementary 1D benchmarks. Forcing Burgers (128) uses resolution 512$\to$128 (resample factor 4), KS (32) uses 256$\to$32 (resample factor 8), and Decaying Burgers (128) uses 2048$\to$128 (resample factor 16).}
	\label{tab:results_supplementary_1d}
	\small
	\setlength{\tabcolsep}{2.8pt}
	\begin{tabular}{lccccccr}
		\toprule
		\multirow{3}{*}{\textbf{Method}} & \multicolumn{6}{c}{\textbf{Supplementary 1D Benchmarks}} & \multirow{3}{*}{\textbf{Params}} \\
		\cmidrule(lr){2-7}
		& \multicolumn{2}{c}{\textbf{Forcing Burgers (128)}} & \multicolumn{2}{c}{\textbf{KS (32)}} & \multicolumn{2}{c}{\textbf{Decaying Burgers (128)}} & \\
		\cmidrule(lr){2-3} \cmidrule(lr){4-5} \cmidrule(lr){6-7}
		& \textbf{AVE} & \textbf{Best} & \textbf{AVE} & \textbf{Best} & \textbf{AVE} & \textbf{Best} & \\
		\midrule
		U-Net & 1.15E-04 & 7.76E-05 & 4.38E-01 & 5.92E-01 & 1.87E-01 & 1.87E-01 & 7,077 \\
		DeepONet & 2.00E-04 & 1.72E-04 & 1.70E+00 & 1.39E+00 & 1.87E-01 & 1.87E-01 & 14,913 \\
		Transolver & 2.60E-04 & 2.55E-04 & 7.47E-01 & 7.36E-01 & 1.87E-01 & 1.87E-01 & 8,601 \\
		Oformer & 1.18E-04 & 1.18E-04 & 1.89E+01 & 1.89E+01 & 1.87E-01 & 1.87E-01 & 5,444 \\
		GK-Transformer & 3.05E-04 & 2.97E-04 & 7.32E-01 & 7.17E-01 & 1.87E-01 & 1.87E-01 & 36,449 \\
		FNO & 2.13E-04 & 1.90E-04 & 1.05E+00 & 9.44E-01 & 1.87E-01 & 1.87E-01 & 11,073 \\
		UFNO & 9.70E-04 & 9.25E-04 & 9.27E-01 & 8.47E-01 & 1.87E-01 & 1.87E-01 & 26,017 \\
		AMFNO & 1.36E-04 & 1.23E-04 & 3.58E-01 & 3.37E-01 & 1.88E-01 & 1.87E-01 & 9,489 \\
		\midrule
		SINO-SPE & 2.14E-04 & 1.29E-04 & 1.89E-01 & 1.06E-01 & 1.05E-01 & 8.70E-02 & 3,877 \\
		SINO-SPA & 8.82E-05 & 7.82E-05 & 5.76E-01 & 5.37E-01 & 1.79E-02 & 1.76E-02 & 1,797 \\
		\midrule
		SINO (ours) & \textbf{6.15E-05} & \textbf{5.72E-05} & \textbf{1.30E-01} & \textbf{6.84E-02} & \textbf{1.57E-02} & \textbf{1.37E-02} & 6,681 \\
		\midrule
		Coarse Grid & \multicolumn{2}{c}{2.65E-01} & \multicolumn{2}{c}{1.89E+01} & \multicolumn{2}{c}{1.87E-01} & -- \\
		\bottomrule
	\end{tabular}
\end{table}

\begin{table}[htbp]
	\centering
	\caption{Performance comparison on supplementary 2D benchmarks. Forcing NS (Re=2000) uses $\nu=5 \times 10^{-4}$ with Kolmogorov forcing, Forcing NS (Taylor-Green, Re=1000) uses $\nu=10^{-3}$ with Taylor-Green vortex forcing, and Forcing NS (Re=$10^5$) uses $\nu=10^{-5}$ with Kolmogorov forcing. All configurations use resolution 512$\to$64.}
	\label{tab:results_supplementary_2d}
	\small
	\setlength{\tabcolsep}{2.0pt}
	\begin{tabular}{lccccccr}
		\toprule
		\multirow{3}{*}{\textbf{Method}} & \multicolumn{6}{c}{\textbf{Supplementary 2D Benchmarks}} & \multirow{3}{*}{\textbf{Params}} \\
		\cmidrule(lr){2-7}
		& \multicolumn{2}{c}{\textbf{Forcing NS (Re=2000)}} & \multicolumn{2}{c}{\textbf{NS (Taylor-Green, Re=1000)}} & \multicolumn{2}{c}{\textbf{Forcing NS (Re=$10^5$)}} & \\
		\cmidrule(lr){2-3} \cmidrule(lr){4-5} \cmidrule(lr){6-7}
		& \textbf{AVE} & \textbf{Best} & \textbf{AVE} & \textbf{Best} & \textbf{AVE} & \textbf{Best} & \\
		\midrule
		U-Net & 1.51E-01 & 1.48E-01 & 1.41E-02 & 1.09E-02 & 2.37E-01 & 2.31E-01 & 61,006 \\
		DeepONet & 5.02E-01 & 4.96E-01 & 1.59E-01 & 1.56E-01 & 6.04E-01 & 5.91E-01 & 269,153 \\
		Transolver & 4.72E-01 & 4.14E-01 & 6.61E-02 & 6.01E-02 & 5.59E-01 & 5.37E-01 & 54,306 \\
		Oformer & 5.64E-01 & 5.64E-01 & 1.61E-01 & 1.61E-01 & 6.45E-01 & 6.43E-01 & 38,087 \\
		GK-Transformer & 4.52E-01 & 4.14E-01 & 7.84E-02 & 7.50E-02 & 5.50E-01 & 5.41E-01 & 151,874 \\
		FNO & 4.97E-01 & 4.95E-01 & 1.10E-01 & 1.07E-01 & 5.83E-01 & 5.76E-01 & 2,106,018 \\
		UFNO & 1.94E-01 & 1.88E-01 & 1.64E-02 & 1.57E-02 & 2.62E-01 & 2.50E-01 & 1,346,466 \\
		AMFNO & 2.10E-01 & 2.06E-01 & 2.31E-02 & 2.16E-02 & 3.00E-01 & 2.88E-01 & 120,162 \\
		\midrule
		SINO-SPE & 2.56E-01 & 2.49E-01 & 2.87E-02 & 2.57E-02 & 3.72E-01 & 3.56E-01 & 33,834 \\
		SINO-SPA & 2.45E-01 & 2.25E-01 & 4.14E-02 & 3.14E-02 & 3.40E-01 & 2.99E-01 & 17,322 \\
		\midrule
		SINO (ours) & \textbf{1.24E-01} & \textbf{1.18E-01} & \textbf{1.01E-02} & \textbf{8.50E-03} & \textbf{2.03E-01} & \textbf{1.98E-01} & 59,314 \\
		\midrule
		Coarse Grid & \multicolumn{2}{c}{5.65E-01} & \multicolumn{2}{c}{1.61E-01} & \multicolumn{2}{c}{6.45E-01} & -- \\
		\bottomrule
	\end{tabular}
\end{table}

\textbf{Results: 2D Supplementary Benchmarks.} Table~\ref{tab:results_supplementary_2d} presents quantitative comparisons on the three additional 2D configurations. On Forcing NS (Re=2000), SINO achieves the best performance (1.24E-01 average, 1.18E-01 best) with 59K parameters, outperforming U-Net (1.51E-01, 61K parameters) by 1.2× despite comparable parameter count. UFNO (1.94E-01, 1.3M parameters) and AMFNO (2.10E-01, 120K parameters) require 2-23× more parameters for substantially worse accuracy. Transformer-based methods exhibit high errors: Transolver (4.72E-01), GK-Transformer (4.52E-01), and Oformer (5.64E-01). This intermediate Reynolds number result confirms SINO's smooth interpolation capability between the Re=1000 and Re=4000 regimes. On Forcing NS (Taylor-Green, Re=1000), SINO (1.01E-02 average, 8.50E-03 best) demonstrates exceptional generalization to qualitatively different forcing geometry, achieving the lowest error across all methods. U-Net (1.41E-02, 61K parameters) ranks second but with 1.4× higher error. UFNO (1.64E-02, 1.3M parameters) and AMFNO (2.31E-02, 120K parameters) again underperform despite massive capacity. Notably, SINO's error on Taylor-Green forcing is actually lower than its performance on Kolmogorov forcing at the same Reynolds number (9.17E-02 in Table~\ref{tab:results_2d}), suggesting that the isotropic vortex array produces more predictable subgrid-scale dynamics. For the highly turbulent Forcing NS (Re=$10^5$), SINO (2.03E-01 average, 1.98E-01 best) maintains competitive performance despite the extreme Reynolds number and severe underresolution. U-Net (2.37E-01) and UFNO (2.62E-01) trail behind, while Transformer methods and FNO variants exhibit substantially higher errors (4.97-6.45E-01). The coarse-grid baseline error (6.45E-01) confirms that this configuration represents a highly challenging closure problem where most methods struggle.

\textbf{Analysis.} The supplementary experiments reveal three key insights: (1) \textit{Cross-resolution robustness}: SINO consistently achieves the best or near-best performance across all tested coarse-graining ratios, from moderate downsampling (factors 4-8) to extreme coarsening (factor 16), demonstrating that its scale-invariant representations generalize across information loss regimes. (2) \textit{Forcing geometry invariance}: SINO's superior performance on Taylor-Green forcing compared to Kolmogorov forcing (1.01E-02 vs. 9.17E-02) confirms that learned closures capture fundamental turbulent mechanisms rather than memorizing forcing-specific patterns, a critical requirement for practical LES applications where forcing configurations vary across problems. (3) \textit{High-Reynolds extrapolation}: SINO maintains reasonable performance even at Re=$10^5$ (2.03E-01), despite training on substantially lower Reynolds numbers, suggesting that the learned low-rank representations encode Reynolds-number-independent physical laws governing energy transfer and dissipation. These results collectively validate SINO's design philosophy: by learning on normalized physical scales through implicit low-rank parameterization, the model achieves robust generalization across resolutions, forcing configurations, and turbulence intensities.

\subsubsection{Low-Rank Structure Across Different Flow Regimes}

To verify that the low-rank structure induced by SINO's bottleneck architecture is a universal property rather than specific to a single benchmark, we extend the principal component analysis (PCA) of learned convolution kernels to additional flow configurations. Figure~\ref{fig:cumulative_variance_re1000} and Figure~\ref{fig:cumulative_variance_decaying} present cumulative variance analysis for Forcing NS (Re=1000) and Decaying NS respectively, complementing the Forcing NS (Re=4000) results in Figure~\ref{fig:cumulative_variance}.

Across all three configurations, the spatial branch kernels consistently exhibit extremely low-rank structure: the cumulative variance exceeds 95\% after the third principal component in all layers, confirming that spatial convolution operators concentrate their expressiveness into a minimal subspace regardless of Reynolds number or forcing conditions. For the frequency branch, the majority of layers achieve $>$95\% cumulative variance by the third component, while a small fraction of layers reach cumulative variance between 90\%-95\% after the third component and surpass 95\% by the fourth component. This slight variation in the frequency domain reflects the different spectral energy distributions across flow regimes: forced flows at Re=1000 exhibit smoother energy spectra concentrated at low wavenumbers, while Re=4000 and decaying flows contain broader spectral content requiring slightly more principal components to capture high-wavenumber contributions.

Nevertheless, the consistency across all tested configurations validates our theoretical analysis in Section~\ref{sec:theory}: SINO's bottleneck hypernetwork architecture enforces hard rank constraints (Theorem 1) that force the model to learn compact representations of scale-invariant physical operators. The fact that 2-4 principal components suffice to explain $>$95\% variance across spatial and frequency branches, compared to the total dimensionality $D \times D = 32 \times 32 = 1024$ of the kernel matrices, demonstrates a compression ratio exceeding 250×. This empirical confirmation of low-rank structure explains SINO's superior parameter efficiency and data efficiency: by restricting learned operators to low-dimensional subspaces aligned with dominant physical modes, the model avoids overfitting to high-dimensional noise inherent in underresolved coarse-grid observations.

\begin{figure}[htbp]
\centering
\includegraphics[width=0.95\textwidth]{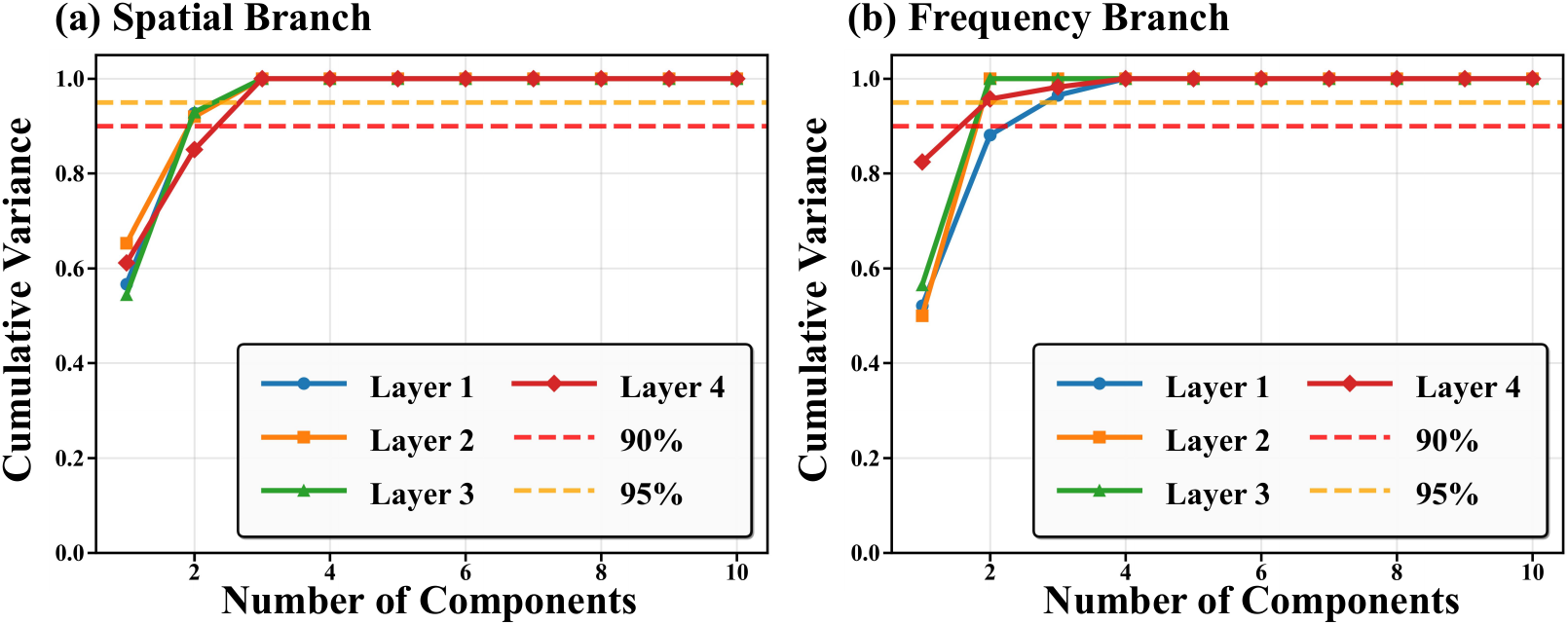}
\caption{Cumulative variance explained by principal components of learned convolution kernels on Forcing NS (Re=1000). \textbf{(a) Spatial branch} and \textbf{(b) Frequency branch} across 4 layers. Consistent with Re=4000 results, the spatial branch achieves $>$95\% cumulative variance by the third component across all layers, while the frequency branch exhibits $>$95\% variance by the third or fourth component, confirming low-rank structure across different Reynolds numbers.}
\label{fig:cumulative_variance_re1000}
\end{figure}

\begin{figure}[htbp]
\centering
\includegraphics[width=0.95\textwidth]{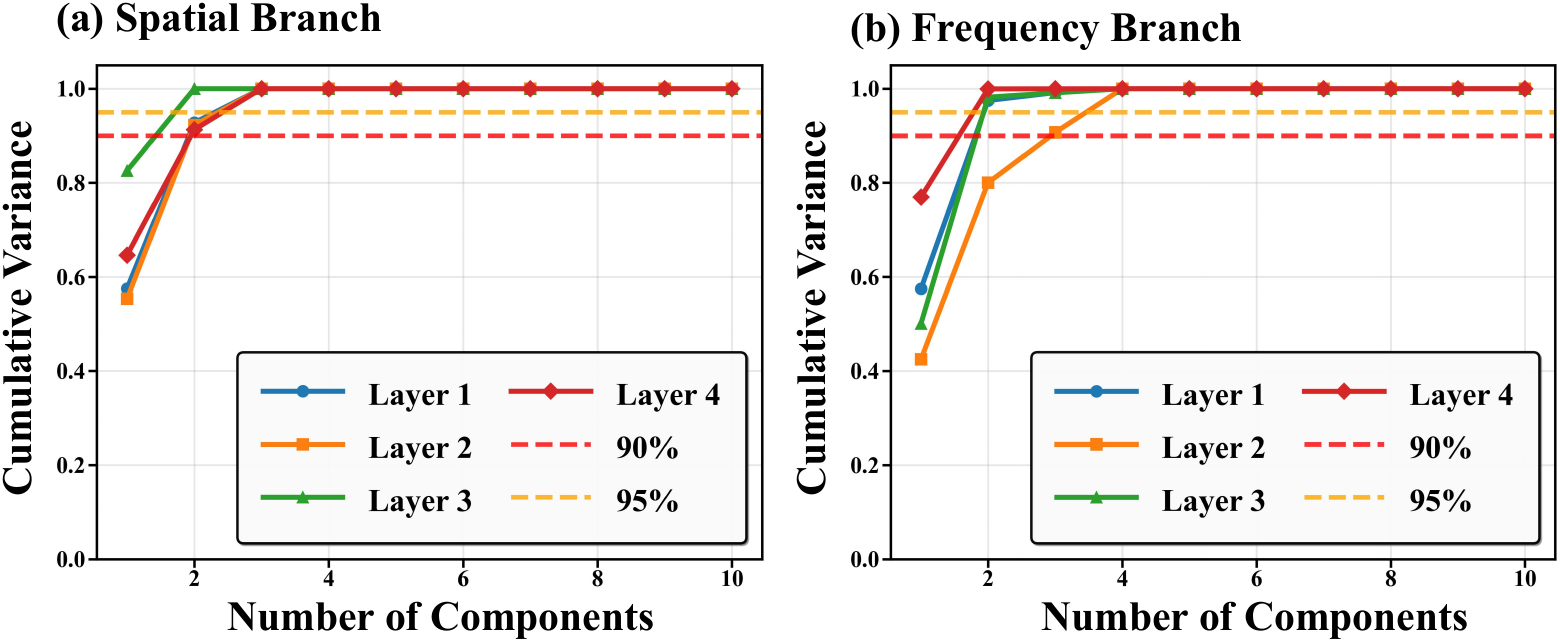}
\caption{Cumulative variance explained by principal components of learned convolution kernels on Decaying NS. \textbf{(a) Spatial branch} and \textbf{(b) Frequency branch} across 4 layers. The spatial branch maintains $>$95\% cumulative variance by the third component, while the frequency branch achieves 90\%-95\% variance by the third component and surpasses 95\% by the fourth component in most layers, demonstrating that low-rank structure persists even in freely decaying turbulence without external forcing.}
\label{fig:cumulative_variance_decaying}
\end{figure}

\subsubsection{Bottleneck Dimension Scaling and Computational Trade-offs}

To investigate the impact of bottleneck layer dimensions on model performance, we conduct ablation studies varying the bottleneck dimensions $(n_f, n_s)$ from 2 to 32 on two representative 2D benchmarks: Forcing NS (Re=1000) and Decaying NS. All SINO variants use the same dual-branch architecture with hidden dimension $D=32$ and $L=4$ layers, trained on coarse-grid resolution 64×64 derived from DNS at 512×512.

Table~\ref{tab:bottleneck_scaling} shows that increasing bottleneck capacity consistently improves performance. SINO-32 achieves the lowest errors (4.08E-02 on Decaying NS, 6.06E-02 on Forcing NS), representing 1.5-1.7× improvement over the baseline SINO-2 configuration. Remarkably, even SINO-2 substantially outperforms DNS-128 (2× higher resolution with 4× more grid points), while SINO-32 approaches or exceeds the accuracy of DNS-256 (4× higher resolution with 16× more grid points). This demonstrates that SINO achieves over 16× computational acceleration: instead of refining the coarse grid from 64×64 to 256×256, practitioners can run coarse-grid simulations at 64×64 with SINO correction, achieving comparable or superior accuracy with minimal overhead.

\begin{table}[htbp]
\centering
\caption{Bottleneck dimension scaling analysis on 2D benchmarks. SINO variants trained on 64×64 coarse grids are compared against uncorrected DNS at resolutions 64×64, 128×128, and 256×256. All errors measured against ground-truth DNS-512.}
\label{tab:bottleneck_scaling}
\small
\setlength{\tabcolsep}{4pt}
\begin{tabular*}{\textwidth}{@{\extracolsep{\fill}}lcccccccc@{}}
\toprule
& \multicolumn{4}{c}{\textbf{SINO (64×64)}} & \multicolumn{3}{c}{\textbf{Coarse DNS}} \\
\cmidrule(lr){2-5} \cmidrule(lr){6-8}
\textbf{Benchmark} & \textbf{SINO-2} & \textbf{SINO-8} & \textbf{SINO-16} & \textbf{SINO-32} & \textbf{64×64} & \textbf{128×128} & \textbf{256×256} \\
\midrule
\textbf{Decaying NS} & 5.99E-02 & 4.98E-02 & 4.49E-02 & \textbf{4.08E-02} & 4.04E-01 & 1.88E-01 & 5.64E-02 \\
\textbf{Forcing NS (Re=1000)} & 9.17E-02 & 6.89E-02 & 6.67E-02 & \textbf{6.06E-02} & 5.07E-01 & 2.58E-01 & 6.47E-02 \\
\bottomrule
\end{tabular*}
\end{table}

To further analyze temporal error accumulation, Figure~\ref{fig:error_time} visualizes the error growth curves of SINO-32 compared to coarse DNS baselines over 1000 time steps on both Forcing NS (Re=1000) and Decaying NS benchmarks. The red dashed line at time step 300 marks the boundary between the training region (0--300) and the extrapolation region (300--1000). SINO-32 demonstrates remarkable stability in long-time extrapolation, exhibiting the slowest error growth rate among all methods. Notably, SINO-32 at 64×64 resolution not only outperforms DNS-128 and DNS-256 throughout the entire trajectory, but also maintains error levels below DNS-256 even in the challenging extrapolation regime beyond the training horizon. This superior temporal stability, combined with the 16× computational speedup, validates SINO's practical value for long-term turbulence prediction.

\begin{figure}[htbp]
\centering
\includegraphics[width=\textwidth]{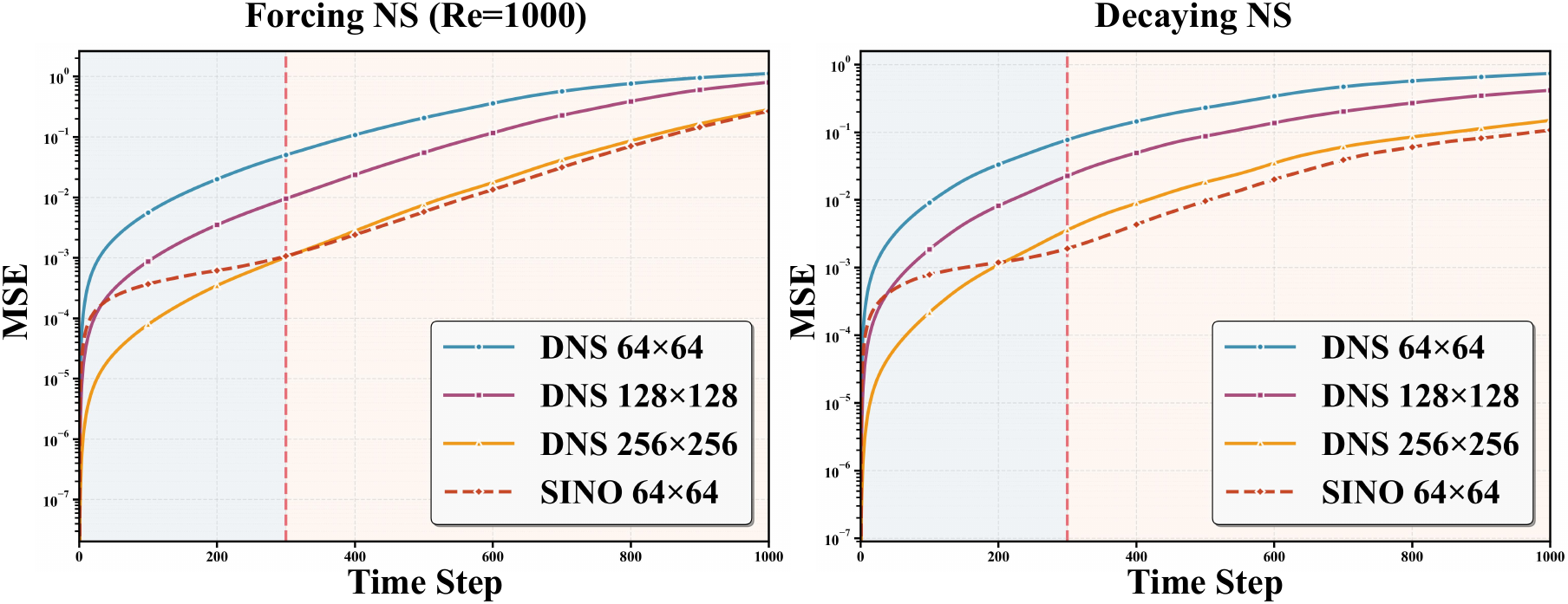}
\caption{Temporal error evolution on Forcing NS (Re=1000, left) and Decaying NS (right). SINO-32 at 64×64 resolution exhibits slower error growth than DNS at 128×128 and 256×256, particularly in the extrapolation region (shaded, beyond time step 300). The log-scale y-axis highlights SINO's superior long-term stability.}
\label{fig:error_time}
\end{figure}

\end{document}